\def\eqref#1{equation~\ref{#1}}
\def\1{\bm{1}}
\DeclareMathAlphabet{\mathsfit}{\encodingdefault}{\sfdefault}{m}{sl}
\SetMathAlphabet{\mathsfit}{bold}{\encodingdefault}{\sfdefault}{bx}{n}
\newtheorem*{theorem*}{Theorem}
\theoremstyle{plain}
\newtheorem{theorem}{Theorem}[section]
\newtheorem{proposition}[theorem]{Proposition}
\newtheorem{lemma}[theorem]{Lemma}
\theoremstyle{definition}
\newtheorem{definition}[theorem]{Definition}
\theoremstyle{remark}
\newtheorem{remark}[theorem]{Remark}
\title{FlowNIB: An Information Bottleneck Analysis of Bidirectional vs. Unidirectional Language Models}
\author{\textbf{Md Kowsher}\textsuperscript{1}\thanks{Equal contribution},
\textbf{ Nusrat Jahan Prottasha}\textsuperscript{1}\footnotemark[1],
\textbf{ Shiyun Xu}\textsuperscript{2}, 
\textbf{Shetu Mohanto}\textsuperscript{3}, 
\textbf{Ozlem Garibay}\textsuperscript{1}, \\
\textbf{Niloofar Yousefi}\textsuperscript{1}.
\textbf{Chen Chen}\textsuperscript{1}\\ 
\textsuperscript{1}University of Central Florida
\textsuperscript{2}University of Pennsylvania
\textsuperscript{3}Delineate Inc.\\ 
\faGithub~\href{https://github.com/Kowsher/BidiVsUniLM}{\textcolor{red}{\texttt{github.com/Kowsher/BidiVsUniLM}}}
}
\begin{document}

\maketitle

\begin{abstract}
Bidirectional language models (LMs) consistently show stronger context understanding than unidirectional models, yet the theoretical reason remains unclear. We present a simple information bottleneck (IB) perspective: bidirectional representations preserve more mutual information (MI) about both the input and the target, yielding richer features for downstream tasks. We adopt a layer–wise view and hypothesize that, at comparable capacity, bidirectional layers retain more useful signal than unidirectional ones. To test this claim empirically, we present  \textbf{Flow} \textbf{N}eural \textbf{I}nformation \textbf{B}ottleneck (FlowNIB), a lightweight, post-hoc framework capable of estimating comparable mutual information values for individual layers in LMs, quantifying how much mutual information each layer carries for a dataset. FlowNIB takes three inputs—(i) the original LM's inputs/dataset, (ii) ground–truth labels, and (iii) layer activations—simultaneously estimates the mutual information for both the input–layer and layer–label pairs. Empirically, bidirectional LM layers exhibit higher mutual information than similar—and even larger—unidirectional LMs.  As a result, bidirectional LMs outperform unidirectional LMs across extensive experiments on NLU benchmarks (e.g., GLUE), commonsense reasoning, and regression tasks, demonstrating superior context understanding.
\end{abstract}

\section{Introduction}

Large language models have brought significant advancements in natural language understanding (NLU) tasks. Among them, bidirectional models such as BERT have demonstrated superior performance in natural language understanding, while unidirectional models like GPT dominate generation tasks. As shown in Table 1 of \cite{devlin2019bert}, the BERT-base model outperforms GPT \citep{radford2018improving} across all GLUE benchmarks \citep{wang2018glue} despite having a comparable model size -- for example, achieving 66.4\% accuracy on the RTE task versus GPT’s 56.0\%. Moreover, the empirical evidence \citep{li2022explanation, liu2019roberta,raffel2020exploring,clark2020electra} consistently demonstrate that bidirectional LMs outperform unidirectional LMs on a wide range of NLU tasks. 
\begin{figure}[htbp]
\begin{center}
    \includegraphics[width=1.00\linewidth]{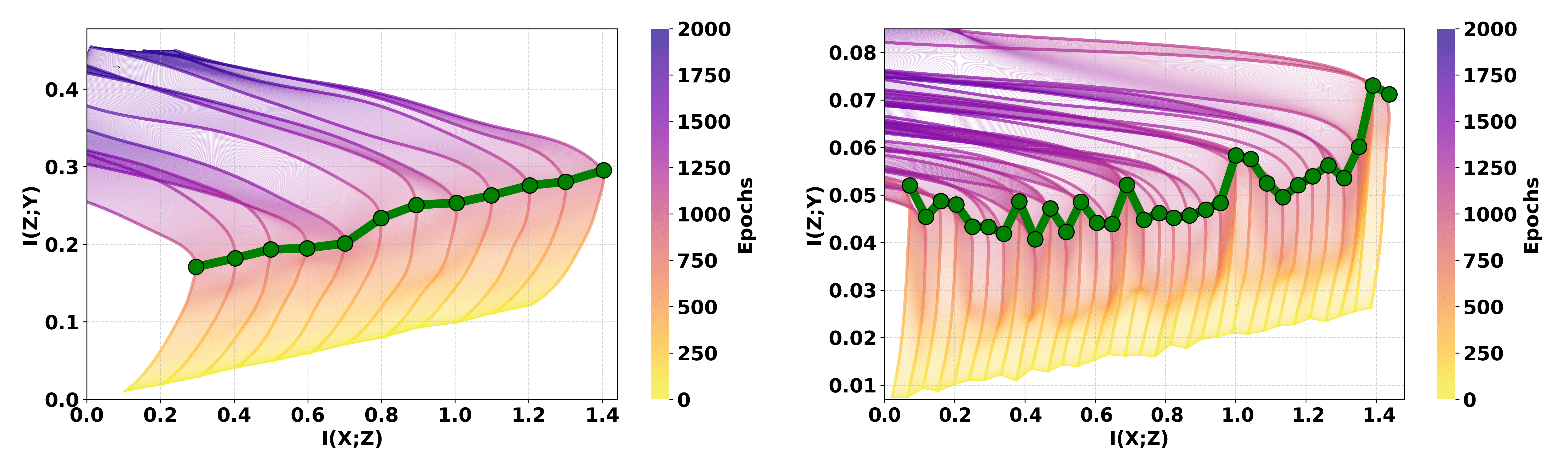}
\end{center}
\caption{
Information\mbox{-}plane trajectories under FlowNIB training for (left) DeBERTaV3\mbox{-}Base and (right) MobileLLM\mbox{-}350M on MRPC. Each curve shows mutual information \(I(Z;Y)\) versus \(I(X;Z)\) over training epochs, colored by epoch progression. A constant offset of \(+0.05\) is added to \(I(X;Z)\) for each successive layer to visually separate the layerwise trajectories. The green line represents the \emph{Optimal Information Coordinate (OIC)} across layers.
}

\label{fig:info_tra}
\end{figure}

While the empirical advantage of bidirectional models is well documented, a clear theoretical account is limited. We adopt an information\mbox{-}theoretic view based on the Information Bottleneck (IB) principle \citep{tishby2000information}. Let \(Z\) be a layer representation and write \(I(X;Z)\) for the mutual information between the input \(X\) and \(Z\), and \(I(Z;Y)\) for the mutual information between \(Z\) and the label \(Y\). In IB, desirable representations \emph{compress} the input (small \(I(X;Z)\)) while \emph{preserving} task\mbox{-}relevant content (large \(I(Z;Y)\)). 

Our claim is that, at comparable capacity, a bidirectional layer retains more information about the input and transmits more information relevant to predicting the target than a unidirectional layer; formally, for corresponding layers \(\ell\) : $ I(X;Z^{\leftrightarrow}_\ell) \;\ge\; I(X;Z^{\rightarrow}_\ell), I(Z^{\leftrightarrow}_\ell;Y) \;\ge\; I(Z^{\rightarrow}_\ell;Y)$
with strict inequalities under mild conditions (e.g., when future context reduces input uncertainty or contributes predictive signal). Intuitively, the bidirectional representation \(Z^{\leftrightarrow}_\ell\) conditions on both past and future tokens, whereas the unidirectional representation \(Z^{\rightarrow}_\ell\) conditions only on the past. Since conditioning reduces entropy \citep{madiman2010information}, 
\(H(X \mid Z^{\leftrightarrow}_\ell) \le H(X \mid Z^{\rightarrow}_\ell)\), and therefore 
\(I(X;Z^{\leftrightarrow}_\ell) \ge I(X;Z^{\rightarrow}_\ell)\).
 To make the IB analysis applicable to LMs, we formalize the following:

\begin{definition}[A valid information plane (post hoc)]
\label{def:information_plane}
Let a language model (LM) have $L$ hidden layers with layer-$\ell$ output $Z_\ell$ for $\ell=1,\dots,L$, input $X$, and target $Y$ under data distribution $p(x,y)$.
Let $\{ I^{(t)}\}_{t\ge0}$ denote a mutual information estimator family (e.g., MINE, InfoNCE) obtained by training the estimator for $t$ internal steps on $(X,Z_\ell)$ and $(Z_\ell,Y)$ while the LM is frozen. Define the epoch-$t$ information plane as $
\mathcal{I}^{[t]}
\;:=\;
\big\{\,\big( I^{(t)}(X;Z_\ell),\  I^{(t)}(Z_\ell;Y)\big)
\;:\; \ell=1,\dots,L \big\}\subset\mathbb{R}^2 .
 $
We say $\mathcal{I}^{[t]}$ is \emph{well-defined} if, for all $\ell$: (i) \textbf{Finite-valuedness:} $ I^{(t)}(X;Z_\ell)$ and $ I^{(t)}(Z_\ell;Y)$ are finite.\footnote{For deterministic real-valued networks, avoid infinite MI by injecting small noise into $Z_\ell$ or applying a fixed quantizer.} (ii) \textbf{Layerwise indexability:} Each point is associated with its layer index $\ell$ (ties in coordinates are allowed). (iii) \textbf{Temporal consistency:} Across $t$, the same estimator architecture/hyperparameters and the same $p(x,y)$ are used, so $\{\mathcal{I}^{[t]}\}_{t\ge0}$ is a well-defined sequence. (iv)  \textbf{Differentiability:} The maps driving $ I^{(t)}$ are a.e.\ differentiable in their inputs so that gradients exist when backpropagating through $Z_\ell$.

\end{definition}

\begin{remark}[Dynamics]
Empirical “fitting” (both $ I(X;Z_\ell)$ and $ I(Z_\ell;Y)$ rise) and “compression” ( $ I(X;Z_\ell)$ decreases while $ I(Z_\ell;Y)$ continues to rise) patterns are diagnostic and not required for well-definedness.
\end{remark}

Recent work has used the IB to improve training
\citep{alemi2016deep,nguyen2017layer,achille2018information} and to visualize
training dynamics \citep{shwartz2017opening,cheng2019utilizing}.
Applying IB to language models remains challenging: layer
representations are high-dimensional, MI estimation is expensive. Very recent work applies IB to LMs but is largely descriptive such as explaining the model behavior  \citep{wang2025rethinking,wu2025interpreting}, attribution-focused studies \citep{jiang2020inserting}, in\mbox{-}context learning \citep{yang2025exploring}, and pruning-oriented work \citep{fan2021layer} which limits to estimate empirical MI of a layer between input-layer and layer-output pairs. However, to test our claim empirically, we require a \emph{joint} empirical assessment that captures a layer’s information\mbox{-}carrying capacity—how much information it preserves from the input and how much it conveys to the target at a time which helps to show bidirectional layers exhibit higher joint information capacity than unidirectional layers.

We estimate mutual information with MINE \citep{belghazi2018mutual}, which
provides a \emph{lower bound} on the true MI.\footnote{MINE learns a critic; with finite data and limited capacity it underestimates MI.} For a layer $Z_\ell$, MINE can compute either $I(X;Z_\ell)$ or $I(Z_\ell;Y)$. But we are interested in finding both information \emph{simultaneously} so that we can determine the capacity of information carried by $Z_\ell$ of both $X$ and $Y$. This estimation helps us estimate MI to input and target by a bidirectional or unidirectional layer and enables easy comparison. To make this happen,  we introduce \textbf{FlowNIB}, a simple modification of MINE that jointly approximates \(I(X;Z_\ell)\) and \(I(Z_\ell;Y)\) within a single objective. FlowNIB trains two critics with a schedule \(\alpha(t)\) that initially emphasizes \(I(X;Z_\ell)\) and gradually shifts toward \(I(Z_\ell;Y)\) over \(T\) epochs, tracing the layer’s information–flow trajectory: $
\big\{\,\big( I^{(t)}(X;Z_\ell),\  I^{(t)}(Z_\ell;Y)\big)
\;:\; t=1,\dots,T \big\}\subset\mathbb{R}^2 $
(See Sec.~\ref{se:flownib} for details). Our interest is to choose one point where both $I(X;Z_\ell)$ and $I(Z_\ell;Y)$ maximize at a $t \in T$; we call it the Optimal Information Coordinate (OIC).

\begin{definition}[Optimal Information Coordinate (OIC)]
Let each epoch \(t\in\{0,\dots,T\}\) yield
\(x_t= I^{(t)}(X;Z_\ell)\) and \(y_t= I^{(t)}(Z_\ell;Y)\).
For a trade-off weight \(\gamma \in[0,1]\), we define OIC for layer $\ell \in L$
\[
t^\ast(\gamma)\in\arg\max_{t} \ \gamma\,x_t + (1-\gamma)\,y_t,
\qquad
\mathrm{OIC}_\gamma := \big(x_{t^\ast(\gamma)},\,y_{t^\ast(\gamma)}\big).
\]
A scale-balanced choice is \(\gamma^\star=\tfrac{R_y}{R_x+R_y}\), where
\(R_x=\max_t x_t-\min_t x_t\) and \(R_y=\max_t y_t-\min_t y_t\).
\end{definition}

We then compare OICs after fine-tuning on the same dataset between bidirectional and unidirectional LMs to see which carries more information for both input and output. In Figure~\ref{fig:info_tra}, we see the bidirectional LM has a higher OIC than the unidirectional LM. Beyond the theoretical explanation, we empirically compare OICs using \emph{FlowNIB} across diverse datasets and show clear benefits for downstream tasks. In particular, on standard benchmarks such as GLUE, commonsense reasoning, and regression tasks, a small bidirectional model outperforms a larger unidirectional model.

\textbf{Contributions.} (i) We provide a theoretical explanation for why bidirectional language models achieve better context understanding, showing that they can carry higher mutual information than unidirectional models. (ii) To estimate mutual information in high\mbox{-}dimensional LLM representations, we propose \emph{FlowNIB}, a simple and testable framework that jointly estimates $I(X;Z_\ell)$ and $I(Z_\ell;Y)$, quantifying the information capacity of $Z_\ell$. (iii) Empirically, on NLU benchmarks, bidirectional models outperform unidirectional models, and \emph{FlowNIB} confirms that they attain higher $I(X;Z_\ell)$ and $I(Z_\ell;Y)$ across layers.

\section{Methodology}

Unidirectional language models, such as GPT, construct each hidden representation using only left-to-right context \citep{allal2024SmolLM}. In contrast, bidirectional models like BERT encode each token using both past and future context \citep{he2020deberta,liu2019roberta}. This architectural asymmetry raises a natural question: can bidirectional representations carry more information?

Let $X=(x_1,\dots,x_n)$ denote the input sequence. For layer $\ell$, let
$Z_\ell^{\rightarrow}=(z_1^{\rightarrow},\dots,z_n^{\rightarrow})$ be the
forward (causal) representations, where $z_t^{\rightarrow}$ depends only on
$x_{\le t}$. Let $Z_\ell^{\leftarrow}=(z_1^{\leftarrow},\dots,z_n^{\leftarrow})$
be the backward (anti\mbox{-}causal) representations, where $z_t^{\leftarrow}$
depends only on $x_{\ge t}$. A unidirectional model uses $Z_\ell^{\rightarrow}$, whereas a bidirectional model augments this with $Z_\ell^{\leftarrow}$ and forms the full bidirectional representation $
Z_\ell^{\leftrightarrow} \;=\; \big(Z_\ell^{\rightarrow},\,Z_\ell^{\leftarrow}\big)
\quad\text{(e.g., by concatenation or another fusion).}$
We measure representational quality via mutual information: $
I(X; Z) \;=\; H(X) - H(X \mid Z),$
where $H(X \mid Z)$ is the conditional entropy of the input given $Z$. Because $Z_\ell^{\leftrightarrow}$ includes strictly more context than $Z_\ell^{\rightarrow}$, it can reduce uncertainty about $X$ more effectively. This follows from the monotonicity of conditional entropy: conditioning on more information reduces entropy (Theorem~\ref{thm:monotonicity_conditional_entropy}). Therefore, bidirectional models produce latent representations that retain at least as much (often strictly more) information about the input sequence.

\begin{theorem}[Full version in Appendix~\ref{thm:bidirectional_mi}]
\label{thm:bi-repr-mi}
Bidirectional representations preserve more mutual information about the input and the output: $
I(X; Z_\ell^{\leftrightarrow}) \;\ge\; I(X; Z_\ell^{\rightarrow})
\text{ and }
I(Z_\ell^{\leftrightarrow}; Y) \;\ge\; I(Z_\ell^{\rightarrow}; Y).$
\end{theorem}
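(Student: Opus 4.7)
The plan is to exploit the fact that, by definition, $Z^{\leftrightarrow} = (Z^{\rightarrow}, Z^{\leftarrow})$ contains the unidirectional representation $Z^{\rightarrow}$ as a coordinate projection. This structural embedding means $Z^{\rightarrow}$ is a deterministic function of $Z^{\leftrightarrow}$, which is the essential ingredient that makes both inequalities follow from standard information-theoretic identities. No distributional assumptions on $X$ or on the encoder parameters are needed for the non-strict version; the whole argument is structural.

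Concretely, I would first establish the input-side inequality via the chain rule of mutual information. Writing
\[
I(X; Z^{\leftrightarrow}) = I(X; Z^{\rightarrow}, Z^{\leftarrow}) = I(X; Z^{\rightarrow}) + I(X; Z^{\leftarrow} \mid Z^{\rightarrow}),
\]
and invoking the non-negativity of conditional mutual information $I(X; Z^{\leftarrow} \mid Z^{\rightarrow}) \geq 0$, the first inequality $I(X; Z^{\leftrightarrow}) \geq I(X; Z^{\rightarrow})$ drops out immediately. Equivalently, one may define the projection $\pi(z^{\rightarrow}, z^{\leftarrow}) = z^{\rightarrow}$ and apply the data processing inequality to the Markov chain $X \to Z^{\leftrightarrow} \to Z^{\rightarrow}$; this is cleaner if the appendix version wants to emphasize the DPI perspective. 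The second inequality $I(Z^{\leftrightarrow}; Y) \geq I(Z^{\rightarrow}; Y)$ is proved verbatim by swapping the roles, writing $I(Z^{\leftrightarrow}; Y) = I(Z^{\rightarrow}; Y) + I(Z^{\leftarrow}; Y \mid Z^{\rightarrow})$ and again using non-negativity of the residual term.

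The main obstacle, and likely the substance of the appendix-length proof, is sharpening this to a \emph{strict} inequality under the ``mild conditions'' alluded to earlier. Strictness for the input side requires $I(X; Z^{\leftarrow} \mid Z^{\rightarrow}) > 0$, i.e.\ the backward stream must carry information about $X$ that is not already functionally determined by the forward stream. I would isolate a simple sufficient condition: whenever the unidirectional encoder fails to be invertible on $X$ (which holds generically for fixed-width representations of sequences of length $n > 1$), the backward representation supplies at least one additional bit about tokens whose identity is under-determined by left context. For the output side, strictness requires $I(Z^{\leftarrow}; Y \mid Z^{\rightarrow}) > 0$, for which a natural mild condition is that $Y$ depends on at least one token whose role is not resolved by left-context alone—a condition routinely met by NLU tasks such as NLI, coreference, and sentiment with negation.

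Beyond strictness, the only subtlety to be careful about is notational: in a deep network each $Z^{\leftarrow}_t$ is itself a complex function of $X$, but the chain-rule / DPI argument is indifferent to how $Z^{\leftarrow}$ was produced, since it only uses the joint distribution of $(X, Z^{\rightarrow}, Z^{\leftarrow}, Y)$. I would therefore state the result at the level of the joint distribution and note that the proof is architecture-agnostic, which also makes it compatible with the layerwise information-plane view used in Definition~\ref{lem:information_plane}.
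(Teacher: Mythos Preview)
Your proposal is correct and essentially matches the paper's proof: the paper writes $I(X;Z)=H(X)-H(X\mid Z)$ and invokes monotonicity of conditional entropy $H(X\mid Z^{\rightarrow})\geq H(X\mid Z^{\rightarrow},Z^{\leftarrow})$, which is exactly your chain-rule identity $I(X;Z^{\leftarrow}\mid Z^{\rightarrow})\geq 0$ rewritten in entropy form, with the same equality condition $X\perp Z^{\leftarrow}\mid Z^{\rightarrow}$. The DPI framing you offer as an alternative is not used in the paper but is of course equivalent.
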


While mutual information quantifies how much information a representation $Z_\ell$ preserves about the input or the target, it does not describe the internal structure or complexity of that representation. To complement MI, we analyze the spectral properties of $Z_\ell$ via \emph{effective dimensionality}, which captures how many orthogonal directions in representation space carry significant variance. This helps characterize how richly each layer encodes information.

\begin{definition}[Generalized Effective Dimensionality]
\label{def:generalized_deff}
Let $\Sigma_{Z_\ell}=\mathrm{Cov}(Z_\ell)$ and let $\lambda_1,\dots,\lambda_n$
be its nonzero eigenvalues, where $n=\mathrm{rank}(\Sigma_{Z_\ell})$.
Define the normalized spectrum $p_i := \lambda_i / \sum_{j=1}^n \lambda_j$.
The generalized effective dimensionality of $Z_\ell$ under a measure $\mathcal{M}(p)$ is $
d_{\mathrm{eff}}(Z_\ell;\mathcal{M}) := \exp\big(\mathcal{M}(p)\big), $
where $\mathcal{M}(p)$ satisfies:
(i) \textbf{nonnegativity:} $\mathcal{M}(p)\ge 0$;
(ii) \textbf{maximality:} $\mathcal{M}(p)\le \log n$, with equality iff $p_i=1/n$;
(iii) \textbf{Schur\mbox{-}concavity:} if $p' \succ p$ then $\mathcal{M}(p') \le \mathcal{M}(p)$.
\end{definition}

\textit{Examples.}
(1) \textbf{Shannon entropy:} $\mathcal{M}(p)= -\sum_{i=1}^n p_i \log p_i$ yields
$d_{\mathrm{eff}}(Z_\ell)=\exp(H(p))$ \citep{roy2007effective}.
(2) \textbf{$\ell_2$ participation ratio:} $\mathcal{M}(p)=\log\!\big(1/\sum_{i=1}^n p_i^2\big)$
gives $d_{\mathrm{eff}}(Z_\ell)=(\sum_{i=1}^n \lambda_i)^2/\sum_{i=1}^n \lambda_i^2$.
Unless otherwise stated, we adopt the $\ell_2$ version as the default.
The effect of alternative measures is explored in Appendix~\ref{app:ablation_effective}.

\begin{lemma}[Bidirectional Representations Exhibit Higher Spectral Complexity]
\label{lem:effective_dim}
Let $Z_\ell^{\rightarrow}\!\in\mathbb{R}^D$ denote the unidirectional representation and $Z_\ell^{\leftrightarrow}:=(Z_\ell^{\rightarrow},Z_\ell^{\leftarrow})\in\mathbb{R}^{2D}$ the concatenated bidirectional representation of an input $X$.
If $\mathrm{Cov}(Z_\ell^{\leftarrow},Z_\ell^{\rightarrow})$ is nonsingular, then $
d_{\mathrm{eff}}(Z_\ell^{\leftrightarrow};\mathcal{M}) \;\ge\; d_{\mathrm{eff}}(Z_\ell^{\rightarrow};\mathcal{M}), $
with equality iff $Z_\ell^{\leftarrow}$ is conditionally redundant given $Z_\ell^{\rightarrow}$, i.e., $\mathrm{Cov}(Z_\ell^{\leftarrow}\mid Z_\ell^{\rightarrow})=0$.
\end{lemma}

\noindent
See Appendix~\ref{app:lemma_deff} for the proof and Appendix~\ref{ab:deff} for an ablation.

\begin{tcolorbox}[
    enhanced,
    breakable,
    colback=orange!10,
    colframe=orange!30,
    boxrule=0.5pt,
    arc=1mm,
    left=4pt,right=4pt,top=4pt,bottom=4pt,
    title={\faLightbulb\hspace{4pt}Key Finding},
    fonttitle=\bfseries\sffamily,
    coltitle=black
]
Bidirectional representations retain at least as much (and typically strictly more) mutual information about the input than unidirectional representations. They also exhibit higher effective dimensionality throughout depth, reflecting richer and more expressive latent spaces.
\end{tcolorbox}

\paragraph{FlowNIB.}
\label{se:flownib}
For empirical validation of Theorem~\ref{thm:bidirectional_mi}, we use \textbf{FlowNIB}. After fine\mbox{-}tuning the LM on a dataset, we approximate the mutual information of every layer, quantifying how much information a layer carries about the input and the target. FlowNIB is simple: it trains two MINE critics under a single objective with a time\mbox{-}varying weight:
\begin{equation}
\mathcal{L}_\ell(t)
\;=\;
-\Big(\alpha(t)\,I(X;Z_\ell)\;+\;\big(1-\alpha(t)\big)\,I(Z_\ell;Y)\Big).
\label{eq:flownib}
\end{equation}

Here \(\alpha(t):\{0,\dots,T\}\!\to\![0,1]\) is a discrete, monotonically non\mbox{-}increasing schedule. We use \(\alpha(0)=1\) and $
\alpha(t{+}1)\;=\;\max\!\big\{0,\ \alpha(t)-\delta\big\}, $
where \(\delta>0\) is a small step (e.g., \(\delta=0.001\)); if \(T\) is small, a larger \(\delta\) ensures the schedule traverses \([1,0]\) within \(T\) steps (see Appendix~\ref{ab:delta} for an ablation on the effect of \(\delta\)). Early in training (\(\alpha\!\approx\!1\)) the loss emphasizes \(I(X;Z_\ell)\); as \(\alpha(t)\) decreases, the emphasis shifts toward \(I(Z_\ell;Y)\). At each step \(t\), we record the information\mbox{-}plane coordinate \(\big(I^{(t)}(X;Z_\ell),\,I^{(t)}(Z_\ell;Y)\big)\). During training, we optionally normalize \(I(X;Z_\ell)\) by the per\mbox{-}layer effective dimension \(d_{\mathrm{eff}}(Z_\ell)\) and \(I(Z_\ell;Y)\) by \(d_{\mathrm{eff}}(Y)\) to reduce scale effects. This normalization is used only for optimization, not for reporting.
Figure~\ref{fig:layer_dff}(a) shows a simple pattern: the \emph{effective dimension} depends on how large the output space is. If the input is fixed and the label \(Y\) has only a few possible values (low dimensional), then \(d_{\mathrm{eff}}(Z_\ell)\) starts at a moderate level and usually \emph{drops} as we go deeper—because the task does not need much information. When \(Y\) has many possible values (high dimensional), the network needs to keep more information, so \(d_{\mathrm{eff}}(Z_\ell)\) increases accordingly.

The same trend appears in Figure~\ref{fig:compare_params} for mutual information. With low\mbox{-}dimensional \(Y\), \(I(X;Z)\) typically \emph{decreases} across layers (the model throws away input details that are not needed), while \(I(Z;Y)\) increases only \emph{slightly}. As the dimensionality of \(Y\) grows, \(I(X;Z)\) still tends to decrease with depth (often from a higher starting point), but \(I(Z;Y)\) \emph{rises more strongly} and may saturate later, reflecting the harder alignment with a larger label space.

These observations clarify the scale imbalance in Figure~\ref{fig:MI_result}. On GLUE (labels 1–3), \(I(X;Z)\) often looks much larger than \(I(Z;Y)\) simply because the label space is small. Without any rescaling, the larger\mbox{-}magnitude term can dominate the FlowNIB objective. Since effective dimension correlates with how much mutual information is attainable, dividing by \(d_{\mathrm{eff}}(\cdot)\) provides a simple, task\mbox{-}aware normalization that balances the two terms during optimization (Details in Proposition~\ref{prop:effdim}, Ablation~\ref{ab:deff}, \ref{ab:dimVsout}, and \ref{ab:impact_dim}).

Over all epochs \(t=0,\dots,T\), we then select the OIC for each layer, which summarizes the layer’s capacity to jointly capture information about the input and the target.

In Practice. (i) Fine\mbox{-}tune the LM on a dataset with inputs \(X\) and targets \(Y\). (ii) Run the model once to cache \((X,Y,Z_\ell)\) for all layers \(\ell\). (iii) For each \(\ell\), fit two critics on this fixed cache—one for \(I(X;Z_\ell)\) and one for \(I(Z_\ell;Y)\)—using the same neural MI setup (iv) Train the critics by minimizing~\eqref{eq:flownib} with the schedule \(\alpha(t)\). (v) Compute the OICs. We report these as \emph{relative} measurements (e.g., for OIC selection) rather than absolute MI values.\footnote{All MI numbers are neural lower\mbox{-}bound estimates with fixed hyperparameters across layers and models; no additional noise or quantization is added.} Full details are in Appendix~\ref{app:FlowNIB}.

\section{Experiments} \label{sec:expr}

This section presents empirical evidence for our theoretical finding. We conduct two complementary evaluations. First, after fine-tuning each model on a dataset, we apply FlowNIB to every layer \(\ell\) to obtain the per-epoch coordinates \(\big(I^{(t)}(X;Z_\ell),\,I^{(t)}(Z_\ell;Y)\big)\). For each layer we then select the OIC to summarize its joint ability to retain input information and align with the target; comparing OICs across layers, we want to show that bidirectional LMs consistently achieve higher information than unidirectional LMs. Second, because large bidirectional LMs are limited, we perform downstream fine-tuning under a matched parameter budget (\(\le\)600M parameters) on both classification and regression benchmarks, and compare task performance to test whether the information advantage translates into end-task gains. To ensure a fair comparison, all models use identical data splits, training budgets, and a common PEFT recipe, RoCoFT \citep{kowsher2024rocoft}, which updates a small subset of existing weight rows without introducing new adapter parameters (we update three rows per linear layer). This setup is closer to full fine\mbox{-}tuning in parameterization while preserving pretrained information and keeping the fine\mbox{-}tuning footprint comparable across architectures. In contrast, adapter\mbox{-}based PEFT methods add new parameters that can confound comparisons. Additional results with LoRA appear in Appendix~Table~\ref{tab:performance_lora}. For FlowNIB, we report relative MI quantities (for OIC selection and comparison) using the same estimator architecture, batch size, negative sampling scheme, optimizer, and training steps across layers and models; absolute MI numbers are not the focus.

\begin{figure}[htbp]
\begin{center}
    \includegraphics[width=1.00\linewidth]{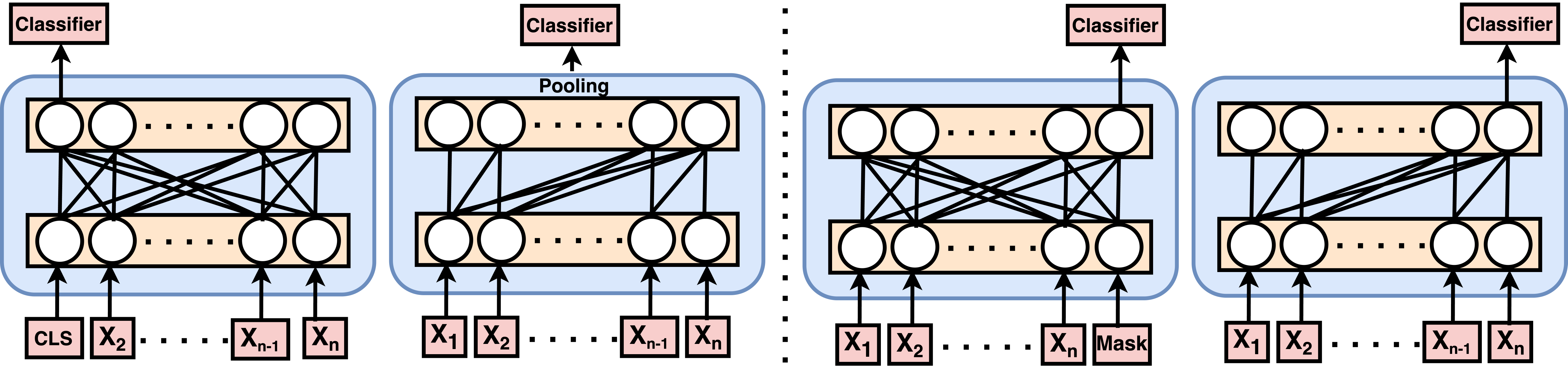}
   \end{center}
\caption{
Illustration of representation extraction methods: (a) prediction from CLS-token (bidirectional), (b) prediction from pooled embedding (unidirectional), (c) prediction from masked token (bidirectional), and (d) prediction from next-token generation (unidirectional).
}\label{fig:method_cmp}
\end{figure}

\paragraph{Model framework.} \label{sec:bidirectional-mi}
While standard approaches apply a pooling operation over the final hidden states followed by a classifier, we adopt an alternative strategy inspired by the PredGen framework \citep{kowsher2025predicting}. Instead of pooling, PredGen follows the native behavior of LMs—e.g., masked prediction or next\mbox{-}token generation—for prediction tasks. PredGen demonstrates that leveraging the model’s generative or masking capability, rather than relying solely on pooled representations, retains higher mutual information with the input and improves prediction quality. However, a key limitation of PredGen is the increased computational cost of multi\mbox{-}token generation, especially for regression\mbox{-}type tasks.

To address this, we modify this framework into a \emph{single\mbox{-}token generation or masked prediction}, as illustrated in Figure~\ref{fig:method_cmp} (right). Specifically, the model predicts a single masked token at a designated position, from which we extract the corresponding final hidden state. This representation is then passed through a lightweight MLP classifier. In Table~\ref{tab:predgen_regression}, we compare single\mbox{-}token prediction with PredGen across diverse datasets; see Appendix~\ref{sec:predgenvsone} for details.

In short, we focus on answering the following three research questions: (i) Do bidirectional models preserve more useful information than unidirectional models? (ii) Does higher mutual information lead to better context modeling? (iii) Does predicting a single token (e.g., masked token or next token) lead to better performance than traditional methods?

\begin{tcolorbox}[
    enhanced,
    breakable,
    colback=orange!10,       
    colframe=orange!30,      
    boxrule=0.5pt,           
    arc=1mm,                 
    left=4pt,right=4pt,top=4pt,bottom=4pt, 
    title={\faLightbulb\hspace{4pt}Key Finding},
    fonttitle=\bfseries\sffamily,
    coltitle=black
]
We illustrate a simplified variant of the PredGen framework that replaces multi-token generation with single-token generation or masked prediction. This approach achieves comparable performance to PredGen while substantially reducing inference cost and training complexity. See Appendix~Table\ref{tab:predgen_regression} for the comparison between single token-based prediction and PredGen.
\end{tcolorbox}

\begin{table}[htbp]
\centering
\resizebox{\textwidth}{!}{%
\begin{tabular}{l|l|l|l|l|l|l|l|l}
\hline
\rowcolor{gray!20} 
\textbf{Model} & \textbf{Layer} & \textbf{\#Heads} & \textbf{Embedding Dim} & \textbf{Max Length} & \textbf{Vocab Size} & \textbf{Total Params} & \textbf{FLOPs} & \textbf{MACs} \\
\hline
\rowcolor[HTML]{EAF3FA} \textbf{ModernBERT-base} & 22 & 12 & 768 & 8192 & 50368 & 149M & 28.258 & 14.118 \\
\rowcolor[HTML]{EAF3FA} \textbf{ModernBERT-large} & 28 & 16 & 1024 & 8192 & 50368 & 395M & 87.883 & 43.923 \\
\rowcolor[HTML]{EAF3FA} \textbf{RoBERTa-base} & 12 & 12 & 768 & 514 & 50265 & 125M & 21.760 & 10.870 \\
\rowcolor[HTML]{EAF3FA} \textbf{RoBERTa-large} & 24 & 16 & 1024 & 514 & 50265 & 355M & 77.344 & 38.656 \\
\rowcolor[HTML]{EAF3FA} \textbf{DeBERTa-v3-base} & 12 & 12 & 768 & 512 & 128100 & 184M & 39.275 & 19.629 \\
\rowcolor[HTML]{EAF3FA} \textbf{DeBERTa-v3-large} & 24 & 16 & 1024 & 512 & 128100 & 435M & 136.943 & 68.451 \\
\rowcolor[HTML]{F8DADA} \textbf{GPT2-small} & 12 & 12 & 768 & 1024 & 50257 & 117M & 21.756 & 10.872 \\
\rowcolor[HTML]{F8DADA} \textbf{GPT2-medium} & 24 & 16 & 1024 & 1024 & 50257 & 345M & 77.342 & 38.655 \\
\rowcolor[HTML]{F8DADA} \textbf{GPT2-large} & 36 & 20 & 1280 & 1024 & 50257 & 762M & 181.254 & 90.597 \\
\rowcolor[HTML]{F8DADA} \textbf{SmolLM-135M} & 30 & 9 & 576 & 2048 & 49152 & 135M & 27.185 & 13.590 \\
\rowcolor[HTML]{F8DADA} \textbf{SmolLM-360M} & 32 & 15 & 960 & 2048 & 49152 & 360M & 80.541 & 40.265 \\
\rowcolor[HTML]{F8DADA} \textbf{MobileLLM-125M} & 30 & 9 & 576 & 2048 & 32000 & 125M & 31.900 & 15.950 \\
\rowcolor[HTML]{F8DADA} \textbf{MobileLLM-600M} & 40 & 18 & 1152 & 2048 & 32000 & 600M & 154.408 & 77.196 \\
\hline
\end{tabular}%
}
\caption{Overview of bidirectional (top) and unidirectional (bottom) model architectures evaluated in our experiments, including FLOPs and MACs.}
\label{tab:models_cmp}
\end{table}
\begin{figure}[htbp]

\begin{center}
\includegraphics[width=1.00\linewidth]{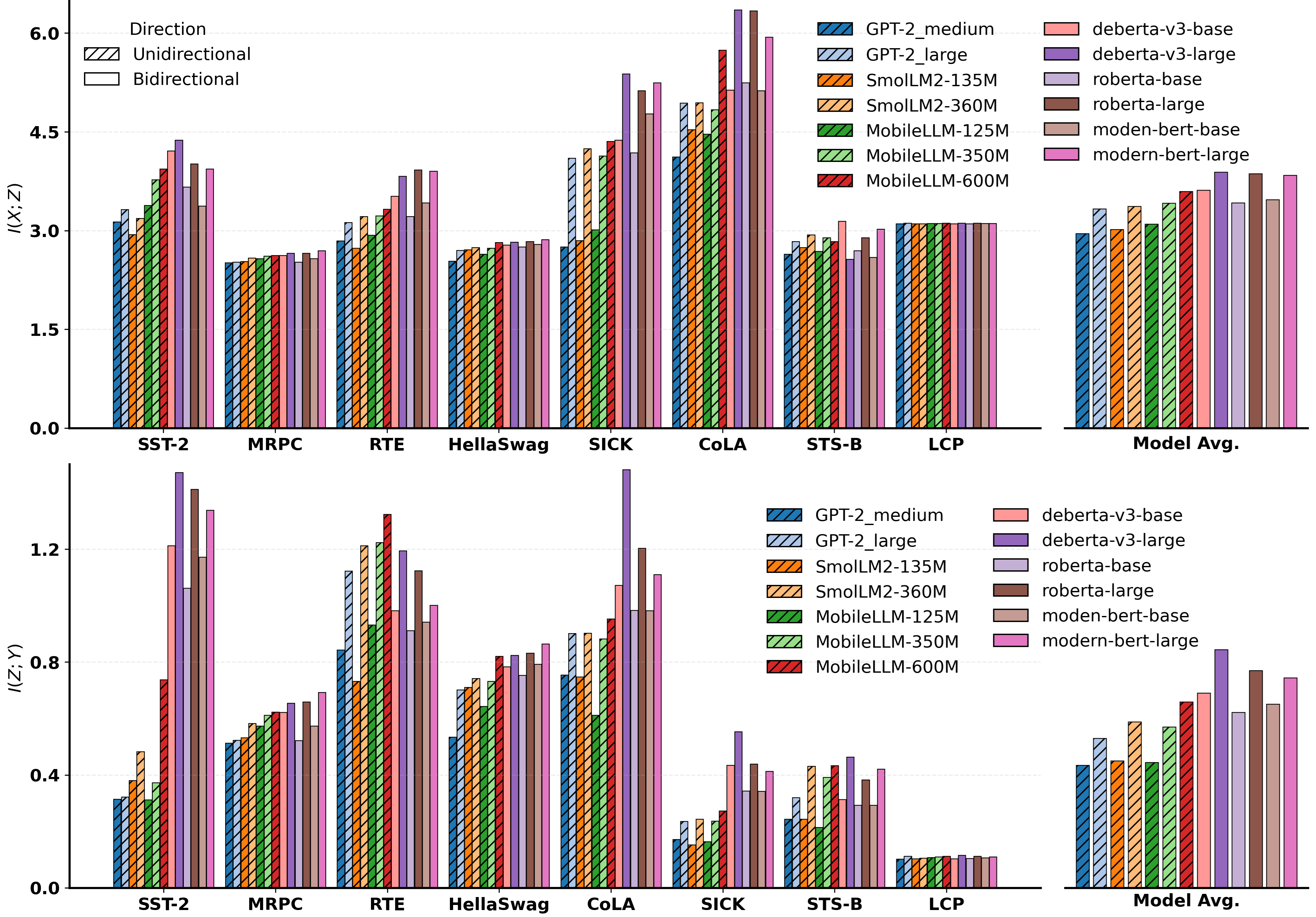}
   \end{center}
\caption{
Average OIC \(I(X;Z)\) (top) and \(I(Z;Y)\) (bottom) across all layers for unidirectional  and bidirectional LMs over multiple datasets. Bars show dataset-wise and average values, comparing information flow differences between architectures.
}
\label{fig:MI_result}
\end{figure}

\begin{figure}[htbp]
\begin{center}
    \includegraphics[width=1.0\linewidth]{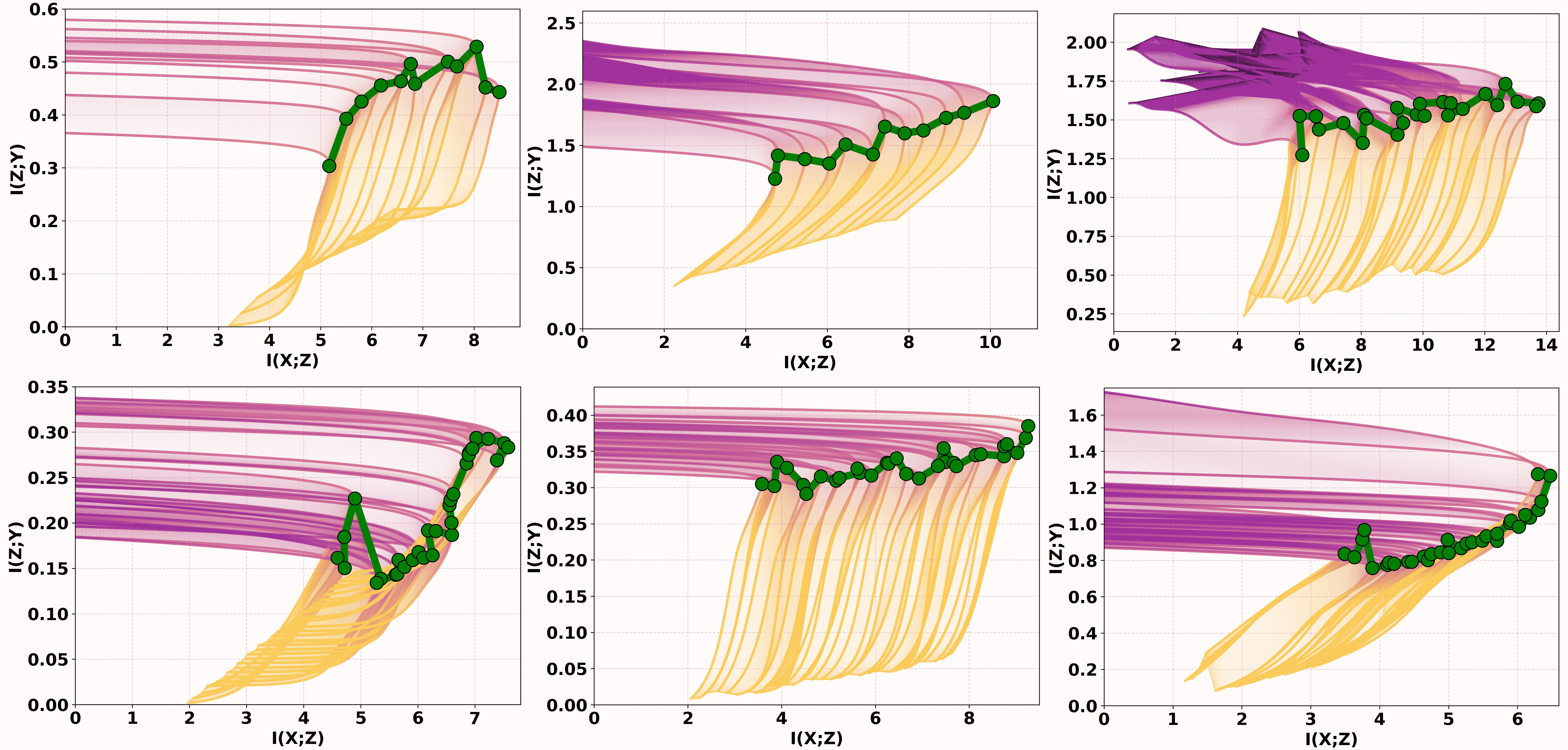}
\end{center}
\caption{
Mutual information flow comparison between bidirectional (top) and unidirectional (bottom) models across three datasets. The first column shows results on the SICK dataset using DeBERTa-base and MobileLLM-350M. The second column shows SST-2 results using RoBERTa-base and MobileLLM-350M. The third column presents results on the CoLA dataset using DeBERTa-v3-Large and MobileLLM-600M.
}
\label{fig:flownib_example}
\end{figure}
\paragraph{Datasets:} We evaluate our models across 16 diverse NLP datasets spanning classification and regression tasks to ensure a comprehensive analysis of representational learning under the information bottleneck framework. For classification, we include \textbf{SST-2}, \textbf{MRPC}, \textbf{QNLI}, \textbf{RTE}, \textbf{MNLI}, and \textbf{CoLA} from the GLUE benchmark~\citep{wang2018glue}, as well as \textbf{BoolQ}~\citep{clark2019boolq}, \textbf{HellaSwag}~\citep{zellers2019hellaswag}, and \textbf{SocialIQA}~\citep{sap2019socialiqa}, covering a range of linguistic challenges such as sentiment analysis, natural language inference, grammatical acceptability, question answering, and commonsense reasoning. The regression tasks comprise \textbf{STS-B}~\citep{cer2017semeval}, \textbf{SICK}~\citep{marelli-etal-2014-sick}, \textbf{WASSA}~\citep{vinayakumar2017deepcybernet}, \textbf{LCP}~\citep{shardlow2020complex}, \textbf{CRP}~\citep{shardlow2020complex}, and \textbf{Humicroedit}~\citep{hossain2019president}, addressing semantic textual similarity, lexical complexity prediction, and humor detection. Dataset sizes range from approximately 2,500 to 400,000 examples, with either binary or multi-class classification labels, or continuous-valued targets for regression. We exclude generation-based tasks because bidirectional language models are not designed for auto-regressive generation; instead, we focus on tasks requiring strong contextual representations to assess representational sufficiency under the information bottleneck. Additional dataset statistics are provided in Table~\ref{tab:datasets} in the Appendix.
In addition, the details of used models architecture, hyperparameters, evaluation metrics, and environment setup are provided in Appendix~\ref{app:model_description}, Appendix~\ref{app:hyper}, Appendix~\ref{app:evl_metric}, and Appendix~\ref{app:env_set}, respectively.

\begin{table*}[htbp]
\centering
\resizebox{\textwidth}{!}{%
\begin{tabular}{l|l|c|c|c|c|c|c|c|c|c|c}
\hline
\rowcolor{gray!20}
\textbf{Model} & \textbf{Method} & \textbf{SST-2} & \textbf{MRPC} & \textbf{QNLI} & \textbf{RTE} & \textbf{CoLA} & \textbf{MNLI} & \textbf{BoolQ} & \textbf{HellaSwag} & \textbf{SIQA} & \textbf{Avg.} \\
\hline
\rowcolor[HTML]{EAF3FA} DeBERTa-v3-Base & Pooling & 95.52 & 89.21 & 92.43 & 83.48 & 86.23 & 86.43 & 64.23 & 56.00 & 47.54 & 77.90 \\
\rowcolor[HTML]{EAF3FA}  & Masking & 95.75 & 91.17 & 92.48 & 84.98 & 87.44 & 87.22 & 64.23 & 69.49 & 60.90 & 81.52\\
\rowcolor[HTML]{EAF3FA} DeBERTa-v3-Large & Pooling & 95.67 & 93.45 & 93.58 & 88.38 & 93.34 & 90.76 & 64.73 & 57.34 & 51.43 & 80.96\\
\rowcolor[HTML]{EAF3FA}  & Masking & 96.11 & 94.04 & 94.14 & 89.93 & 92.95 & 91.43 & 64.98 & 73.43 & 65.53 & 84.73 \\
\rowcolor[HTML]{EAF3FA} RoBERTa-Base & Pooling & 94.24 & 84.53 & 91.96 & 83.45 & 86.34 & 86.34 & 63.82 & 52.43 & 45.64 &76.53\\
\rowcolor[HTML]{EAF3FA}  & Masking & 95.14 & 85.13 & 92.27 & 84.58 & 87.44 & 86.38 & 63.96 & 64.53 & 60.16 & 79.95\\
\rowcolor[HTML]{EAF3FA} RoBERTa-Large & Pooling & 95.68 & 89.54 & 94.17 & 86.32 & 93.85 & 90.87 & 64.82 & 57.35 & 48.69 & 80.14\\
\rowcolor[HTML]{EAF3FA}  & Masking & 96.23 & 91.25 & 94.38 & 87.84 & 95.83 & 91.13 & 63.82 & 71.43 & 63.67 & 83.95\\
\rowcolor[HTML]{EAF3FA} ModernBERT-Base & Pooling & 94.35 & 83.33 & 91.98 & 82.81 & 84.92 & 87.44 & 63.70 & 55.32 & 46.81 & 76.74\\
\rowcolor[HTML]{EAF3FA}  & Masking & 95.38 & 85.43 & 92.43 & 84.12 & 84.43 & 88.21 & 62.17 & 63.54 & 61.86 & 79.73\\
\rowcolor[HTML]{EAF3FA} ModernBERT-Large & Pooling & 95.37 & 89.43 & 94.22 & 86.74 & 89.95 & 93.23 & 64.22 & 60.32 & 49.67 & 80.35\\
\rowcolor[HTML]{EAF3FA}  & Masking & 95.89 & 89.93 & 94.57 & 87.78 & 90.79 & 92.98 & 64.72 & 73.18 & 64.68 & 83.84\\
\rowcolor[HTML]{F8DADA} GPT-2 Medium & Pooling & 93.80 & 85.78 & 91.17 & 69.67 & 80.24 & 78.81 & 63.43 & 37.83 & 38.45 & 71.02\\
\rowcolor[HTML]{F8DADA}  & Generation & 94.14 & 85.93 & 91.93 & 69.83 & 81.43 & 80.18 & 63.54 & 37.93 & 43.45 & 72.04 \\
\rowcolor[HTML]{F8DADA} GPT-2 Large & Pooling & 93.97 & 86.27 & 84.01 & 66.78 & 83.89 & 80.06 & 64.13 & 40.32 & 41.91 & 71.26\\
\rowcolor[HTML]{F8DADA}  & Generation & 94.24 & 87.23 & 84.56 & 67.34 & 83.87 &82.34 & 64.16 & 39.53 & 45.34 & 72.07\\
\rowcolor[HTML]{F8DADA} SmolLM2-135M & Pooling & 92.58 & 84.59 & 90.56 & 68.12 & 81.48 & 82.83 & 62.43 & 38.34 & 41.41 & 71.37\\
\rowcolor[HTML]{F8DADA}  & Generation & 93.00 & 84.83 & 90.68 & 68.93 & 82.48 & 83.58 & 62.27 & 41.78 & 47.86 & 72.82\\
\rowcolor[HTML]{F8DADA} SmolLM2-360M & Pooling & 94.26 & 84.80 & 91.61 & 70.70 & 82.07 & 85.12 & 63.13 & 42.45 & 42.43 & 72.95\\
\rowcolor[HTML]{F8DADA}  & Generation & 94.65 & 85.32 & 92.32 & 71.11 & 84.53 & 84.89 & 62.92 & 43.69 & 50.20 & 74.40\\
\rowcolor[HTML]{F8DADA} MobileLLM-125M & Pooling & 93.05 & 82.43 & 90.58 & 69.32 & 80.29 & 82.98 & 60.73 & 33.45 & 41.45 & 70.48\\
\rowcolor[HTML]{F8DADA}  & Generation & 93.15 & 83.35 & 90.54 & 69.53 & 80.53 & 83.24 & 61.26 & 37.42 & 48.23 & 71.92\\
\rowcolor[HTML]{F8DADA} MobileLLM-350M & Pooling & 93.85 & 83.68 & 90.85 & 70.33 & 82.38 & 83.45 & 63.42 & 36.28 & 42.74 & 71.89\\
\rowcolor[HTML]{F8DADA}  & Generation & 94.68 & 83.57 & 91.09 & 71.43 & 82.87 & 84.58 & 63.71 & 40.13 & 51.54 & 73.73\\
\rowcolor[HTML]{F8DADA} MobileLLM-600M & Pooling & 94.86 & 87.34 & 91.34 & 72.45 & 84.56 & 84.93 & 64.18 & 45.32 & 45.54 & 74.50\\
\rowcolor[HTML]{F8DADA}  & Generation & 95.14 & 87.87 & 91.37 & 72.29 & 86.30 & 84.79 & 64.12 & 48.53 & 58.54 & 76.55\\
\hline
\end{tabular}%
}
\caption{Accuracy(\%) results across nine NLP classification tasks comparing bidirectional and unidirectional models under pooling, masking, and generation inference strategies.}
\label{tab:performance}
\end{table*}

\paragraph{MI results.}
To measure layerwise information, we first fine\mbox{-}tune each model on the target dataset, then run a single pass to cache triplets \((X,Y,Z_\ell)\) for every layer \(\ell\in L\), where \(Z_\ell\) denotes the layer’s activations on \(X\). Given this fixed cache, we instantiate two identical two\mbox{-} fully connected layer (nn.Linear() in pytorch) estimator networks (same widths, nonlinearity, and initialization): one estimates \(I(X;Z_\ell)\) and the other estimates \(I(Z_\ell;Y)\). Both estimators are trained jointly under the common FlowNIB objective in Eq.~\eqref{eq:flownib} with a discrete schedule \(\alpha(t)\) that linearly decays from \(1\) to \(0\): $
\alpha(0)=1,\qquad \alpha(t{+}1)=\max\{0,\alpha(t)-\delta\},\quad \delta=0.001.$
Unless noted otherwise, we use batch size \(128\), \(T=2000\) training steps, and the same optimizer and negative\mbox{-}sampling scheme across all layers and models. At each step \(t\) we record the information\mbox{-}plane coordinate \(\big(I^{(t)}(X;Z_\ell),\,I^{(t)}(Z_\ell;Y)\big)\). After training, for each layer \(\ell\) we select its OIC from these coordinates; the OIC summarizes the layer’s capacity to jointly capture input and target information. We apply the \emph{same} estimator architecture, schedule, and hyperparameters to all bidirectional and unidirectional models, enabling a like\mbox{-}for\mbox{-}like comparison. The full procedure is given in Algorithm~\ref{alg:flownib}.

Figure~\ref{fig:MI_result} compares the \emph{average OIC} across all layers between bidirectional and unidirectional LMs. We observe that bidirectional models consistently retain higher mutual information for both \(I(X;Z)\) and \(I(Z;Y)\). Notably, even smaller bidirectional models (e.g., RoBERTa\mbox{-}base, 125M) surpass larger unidirectional models (e.g., MobileLLM\mbox{-}600M, SmolLM2\mbox{-}360M) in OIC on many datasets. To further elucidate this behavior, Figure~\ref{fig:flownib_example} visualizes the \emph{information\mbox{-}plane trajectories} layer by layer over the estimator training horizon \(T\), contrasting bidirectional and unidirectional models on multiple datasets. Across layers and epochs, bidirectional models trace trajectories with systematically higher \(I(X;Z)\) and \(I(Z;Y)\), aligning with their larger OICs. Complementarily, Figure~\ref{fig:MI_example} shows a token\mbox{-}level MI analysis from the final layer (after fine\mbox{-}tuning on SST\mbox{-}2), which further highlights the representational advantage of bidirectional models.

A common assumption is that bidirectional models are inherently more expensive—roughly twice the cost of unidirectional models. In practice, we find that \emph{smaller} bidirectional models can achieve \emph{higher OIC} while matching or even reducing compute. For example, Table~\ref{tab:models_cmp} reports that RoBERTa\mbox{-}base\mbox{-}125M requires only 21.76\,GFLOPs and 10.87\,GMACs, whereas MobileLLM\mbox{-}125M requires 31.90\,GFLOPs and 15.95\,GMACs, despite being unidirectional. Additional CPU profiling in Appendix~\ref{sec:model_profile_info} shows comparable end\mbox{-}to\mbox{-}end runtime characteristics between the two families, reinforcing that the observed information advantage of bidirectional models need not come with prohibitive compute overhead.

\begin{table*}[htbp]
\centering
\resizebox{\textwidth}{!}{%
\begin{tabular}{l|l|c|c|c|c|c|c|c}
\hline
\rowcolor{gray!20}
\textbf{Model} & \textbf{Method} & \textbf{WASSA} & \textbf{SICK} & \textbf{STSB} & \textbf{LCP} & \textbf{CRP} & \textbf{Humicroedit} & \textbf{Avg.}\\
\hline
\rowcolor[HTML]{DFFFD6} DeBERTa-v3-Base & Pooling & 0.017/0.107 & 0.163/0.297 & 0.363/0.455 & 0.007/0.076 & 0.429/0.518 & 0.278/0.432 & 0.209/0.314 \\
\rowcolor[HTML]{DFFFD6}  & Masking & 0.013/0.091 & 0.135/0.277 & 0.373/0.462 & 0.006/0.060 & 0.385/0.478 & 0.274/0.423 & 0.197/0.298\\
\rowcolor[HTML]{DFFFD6} DeBERTa-v3-Large & Pooling & 0.016/0.102 & 0.140/0.281 & 0.353/0.442& 0.007/0.073 & 0.345/0.457 & 0.263/0.419 & 0.187/0.295\\
\rowcolor[HTML]{DFFFD6}  & Masking & 0.012/0.075 & 0.132/0.274 & 0.348/0.414 & 0.005/0.051 & 0.340/0.459 & 0.268/0.421 & 0.184/0.282\\
\rowcolor[HTML]{DFFFD6} RoBERTa-Base & Pooling & 0.016/0.097 & 0.168/0.300 & 0.364/0.452 & 0.007/0.066 & 0.465/0.535 & 0.293/0.438 & 0.218/0.314\\
\rowcolor[HTML]{DFFFD6}  & Masking & 0.015/0.094 & 0.145/0.294 & 0.353/0.448 & 0.007/0.065 & 0.431/0.517 & 0.289/0.431 &  0.206/0.308\\
\rowcolor[HTML]{DFFFD6} RoBERTa-Large & Pooling & 0.015/0.097 & 0.153/0.291 & 0.351/0.439 & 0.006/0.060 & 0.376/0.469 & 0.283/0.432 & 0.197/0.298\\
\rowcolor[HTML]{DFFFD6}  & Masking & 0.016/0.099 & 0.152/0.291 & 0.350/0.429 & 0.603/0.059 & 0.366/0.475 & 0.281/0.431 & 0.294/0.297\\
\rowcolor[HTML]{DFFFD6} ModernBERT-Base & Pooling & 0.016/0.092 & 0.207/0.350 & 0.469/0.517 & 0.006/0.069 & 0.376/0.469 & 0.302/0.447 & 0.229/0.324\\
\rowcolor[HTML]{DFFFD6}  & Masking & 0.015/0.093 & 0.173/0.328 & 0.482/0.536 & 0.006/0.067 & 0.364/0.471 & 0.281/0.430 & 0.220/0.320 \\
\rowcolor[HTML]{DFFFD6} ModernBERT-Large & Pooling & 0.016/0.093 & 0.160/0.307 & 0.378/0.468 & 0.006/0.060 & 0.341/0.453 & 0.302/0.449 & 0.200/0.305\\
\rowcolor[HTML]{DFFFD6}  & Masking & 0.150/0.294 & 0.150/0.292 & 0.371/0.462 & 0.006/0.005 & 0.344/0.457 & 0.293/0.441 & 0.219/0.325\\
\rowcolor[HTML]{FFF5CC} GPT-2 Medium & Pooling & 0.019/0.112 & 0.662/0.619 & 0.427/0.499 & 0.008/0.084 & 0.369/0.476 & 0.394/0.535 & 0.313/0.387\\
\rowcolor[HTML]{FFF5CC}  & Generation & 0.018/0.111 & 0.673/0.620 & 0.412/0.490 & 0.008/0.083 & 0.345/0.457& 0.347/0.493 & 0.300/0.375 \\
\rowcolor[HTML]{FFF5CC} GPT-2 Large & Pooling & 0.018/0.105 & 0.623/0.583 & 0.442/0.522 & 0.007/0.080 & 0.324/0.443 & 0.318/0.463 & 0.288/0.366 \\
\rowcolor[HTML]{FFF5CC}  & Generation & 0.017/0.107 & 0.583/0.523 & 0.423/0.499 & 0.007/0.078 & 0.326/0.446 & 0.323/0.473 & 0.279/0.354 \\
\rowcolor[HTML]{FFF5CC} SmolLM2-135M & Pooling & 0.017/0.105 & 0.192/0.336 & 0.424/0.489 & 0.007/0.076 & 0.369/0.476 & 0.304/0.450 & 0.218/0.322\\
\rowcolor[HTML]{FFF5CC}  & Generation & 0.017/0.106 & 0.175/0.319 & 0.403/0.484 & 0.007/0.076 & 0.366/0.475 & 0.295/0.442 & 0.210/0.317 \\
\rowcolor[HTML]{FFF5CC} SmolLM2-350M & Pooling & 0.017/0.104 & 0.173/0.310 & 0.407/0.488 & 0.006/0.061 & 0.340/0.459 & 0.338/0.463 & 0.213/0.314\\
\rowcolor[HTML]{FFF5CC}  & Generation & 0.017/0.105 & 0.170/0.298 & 0.394/0.481 & 0.006/0.060 & 0.332/0.454 & 0.323/0.462 & 0.207/0.310\\
\rowcolor[HTML]{FFF5CC} MobileLLM-125M & Pooling & 0.020/0.111 & 0.197/0.354 & 0.419/0.492 & 0.006/0.070 & 0.323/0.446 & 0.302/0.451 & 0.211/0.320\\
\rowcolor[HTML]{FFF5CC}  & Generation & 0.019/0.113 & 0.192/0.324 & 0.410/0.491 & 0.006/0.068 & 0.312/0.448 & 0.293/0.442 & 0.205/0.314\\
\rowcolor[HTML]{FFF5CC} MobileLLM-350M & Pooling & 0.018/0.104 & 0.191/0.336 & 0.394/0.482 & 0.006/0.063& 0.310/0.436 & 0.282/0.431 & 0.200/0.308\\
\rowcolor[HTML]{FFF5CC}  & Generation & 0.017/0.105 & 0.187/0.320 & 0.391/0.478 & 0.006/0.063 & 0.309/0.437 & 0.278/0.421 & 0.198/0.304\\
\rowcolor[HTML]{FFF5CC} MobileLLM-600M & Pooling & 0.017/0.105& 0.181/0.320 & 0.384/0.474 & 0.006/0.063 & 0.301/0.432 & 0.274/0.421 & 0.193/0.302\\
\rowcolor[HTML]{FFF5CC}  & Generation & 0.017/0.105 & 0.172/0.318 & 0.381/0.472 & 0.006/0.063 & 0.308/0.419 & 0.278/0.438 & 0.193/0.302\\
\hline
\end{tabular}%
}
\caption{Regression results (MSE/MAE) across six NLP regression tasks comparing bidirectional and unidirectional models under pooling, masking, and generation inference strategies.}
\label{tab:regression_results}
\end{table*}

\paragraph{Main Results:} 
Our results show that bidirectional models consistently outperform unidirectional models across both classification and regression tasks (Table~\ref{tab:performance}, Table~\ref{tab:regression_results}). For example, in classification, DeBERTa-v3-Large achieves the highest average accuracy of 84.73\% using masked token prediction, improving by +3.77\% over its pooling-based variant. Furthermore, we observe that even RoBERTa-base outperforms MobileLLM-600M in several tasks, highlighting a consistent trend with mutual information (MI): better MI is correlated with improved context modeling and task performance.

Overall, these findings highlight that masking inference yields stronger gains in bidirectional models, while generation provides modest improvements for unidirectional models but fails to close the accuracy and error gap, reinforcing the advantage of bidirectional context and masking for both classification and regression.

\begin{tcolorbox}[
    enhanced,
    breakable,
    colback=orange!10,
    colframe=orange!30,
    boxrule=0.5pt,
    arc=1mm,
    left=4pt,right=4pt,top=4pt,bottom=4pt,
    title={\faLightbulb\hspace{4pt}Key Finding},
    fonttitle=\bfseries\sffamily,
    coltitle=black
]
\textbf{OIC} is strongly correlated with model performance: representations with higher OIC values—i.e., high mutual information with both the input and the output—consistently yield better downstream task accuracy.
\end{tcolorbox}

\section{Related work}
\paragraph{Information bottleneck in deep learning} 
The IB principle has been studied from both practical and theoretical perspectives in deep learning.
On the practical side, \citep{alemi2016deep, higgins2017beta, achille2018information} formulated the IB problem as a deep learning objective and introduced variational approximations to enable optimization via gradient descent. 
On the theoretical side, \citep{tishby2015deep, shwartz2017opening} provided an information-theoretic framework for understanding deep learning, establishing the IB as a foundational tool for analyzing representation learning and generalization in deep learning. These fundamental ideas have inspired a wide range of follow-up works \citep{goldfeld2020information, saxe2019information, shwartz2022information} that further investigate deep learning dynamics through the lens of information theory. 

\paragraph{Mutual information estimation}
Mutual information quantifies the statistical dependence between two random variables and plays an important role in the IB principle. However, the mutual information is notoriously difficult to estimate between continuous high-dimensional random variables. Traditional nonparametric approaches \citep{fraser1986independent,moon1995estimation,darbellay1999estimation,suzuki2008approximating,kwak2002input,kraskov2004estimating} typically are not scalable with dimension and sample size. To achieve an efficient estimator, recent work \citep{nguyen2010estimating, nowozin2016f} characterized the mutual information of two random variables with the Kullback-Leibler (KL-) divergence \citep{kullback1997information} between their joint distribution and the product of the marginals and used a dual representations to cast the KL divergence. The Mutual Information Neural Estimator (MINE) \citep{belghazi2018mutual} utilized the dual representation of the KL divergence and estimated mutual information via gradient descent over neural networks and thus scaled well. 


\section{Conclusion}
This work investigates why bidirectional models outperform unidirectional ones in natural language understanding and context modeling, combining theory with empirical evidence. We introduce \textbf{FlowNIB}, a dynamic, IB\mbox{-}based framework that tracks layer\mbox{-}wise mutual information over training. Our results show that bidirectional models retain more input information and more predictive information, yielding stronger representations and better downstream performance. FlowNIB offers a principled explanation for this advantage and suggests new directions for analyzing and improving deep language models.


\bibliography{iclr2026_conference}

\begin{thebibliography}{53}
\providecommand{\natexlab}[1]{#1}
\providecommand{\url}[1]{\texttt{#1}}
\expandafter\ifx\csname urlstyle\endcsname\relax
  \providecommand{\doi}[1]{doi: #1}\else
  \providecommand{\doi}{doi: \begingroup \urlstyle{rm}\Url}\fi

\bibitem[Achille \& Soatto(2018)Achille and Soatto]{achille2018information}
Alessandro Achille and Stefano Soatto.
\newblock Information dropout: Learning optimal representations through noisy computation.
\newblock \emph{IEEE transactions on pattern analysis and machine intelligence}, 40\penalty0 (12):\penalty0 2897--2905, 2018.

\bibitem[Alemi et~al.(2016)Alemi, Fischer, Dillon, and Murphy]{alemi2016deep}
Alexander~A Alemi, Ian Fischer, Joshua~V Dillon, and Kevin Murphy.
\newblock Deep variational information bottleneck.
\newblock \emph{arXiv preprint arXiv:1612.00410}, 2016.

\bibitem[Allal et~al.(2024)Allal, Lozhkov, Bakouch, von Werra, and Wolf]{allal2024SmolLM}
Loubna~Ben Allal, Anton Lozhkov, Elie Bakouch, Leandro von Werra, and Thomas Wolf.
\newblock Smollm - blazingly fast and remarkably powerful, 2024.

\bibitem[Belghazi et~al.(2018)Belghazi, Baratin, Rajeshwar, Ozair, Bengio, Courville, and Hjelm]{belghazi2018mutual}
Mohamed~Ishmael Belghazi, Aristide Baratin, Sai Rajeshwar, Sherjil Ozair, Yoshua Bengio, Aaron Courville, and Devon Hjelm.
\newblock Mutual information neural estimation.
\newblock In \emph{International conference on machine learning}, pp.\  531--540. PMLR, 2018.

\bibitem[Cer et~al.(2017)Cer, Diab, Agirre, Lopez-Gazpio, and Specia]{cer2017semeval}
Daniel Cer, Mona Diab, Eneko Agirre, Inigo Lopez-Gazpio, and Lucia Specia.
\newblock Semeval-2017 task 1: Semantic textual similarity-multilingual and cross-lingual focused evaluation.
\newblock \emph{arXiv preprint arXiv:1708.00055}, 2017.

\bibitem[Cheng et~al.(2019)Cheng, Lian, Gao, and Geng]{cheng2019utilizing}
Hao Cheng, Dongze Lian, Shenghua Gao, and Yanlin Geng.
\newblock Utilizing information bottleneck to evaluate the capability of deep neural networks for image classification.
\newblock \emph{Entropy}, 21\penalty0 (5):\penalty0 456, 2019.

\bibitem[Clark et~al.(2019)Clark, Lee, Chang, Kwiatkowski, Collins, and Toutanova]{clark2019boolq}
Christopher Clark, Kenton Lee, Ming-Wei Chang, Tom Kwiatkowski, Michael Collins, and Kristina Toutanova.
\newblock Boolq: Exploring the surprising difficulty of natural yes/no questions.
\newblock \emph{arXiv preprint arXiv:1905.10044}, 2019.

\bibitem[Clark et~al.(2020)Clark, Luong, Le, and Manning]{clark2020electra}
Kevin Clark, Minh-Thang Luong, Quoc~V Le, and Christopher~D Manning.
\newblock Electra: Pre-training text encoders as discriminators rather than generators.
\newblock \emph{arXiv preprint arXiv:2003.10555}, 2020.

\bibitem[Cover \& Thomas(2006)Cover and Thomas]{cover2006elements}
Thomas~M. Cover and Joy~A. Thomas.
\newblock \emph{Elements of Information Theory}.
\newblock Wiley-Interscience, 2006.

\bibitem[Darbellay \& Vajda(1999)Darbellay and Vajda]{darbellay1999estimation}
Georges~A Darbellay and Igor Vajda.
\newblock Estimation of the information by an adaptive partitioning of the observation space.
\newblock \emph{IEEE Transactions on Information Theory}, 45\penalty0 (4):\penalty0 1315--1321, 1999.

\bibitem[Devlin et~al.(2019)Devlin, Chang, Lee, and Toutanova]{devlin2019bert}
Jacob Devlin, Ming-Wei Chang, Kenton Lee, and Kristina Toutanova.
\newblock Bert: Pre-training of deep bidirectional transformers for language understanding.
\newblock In \emph{Proceedings of the 2019 conference of the North American chapter of the association for computational linguistics: human language technologies, volume 1 (long and short papers)}, pp.\  4171--4186, 2019.

\bibitem[Fan et~al.(2021)Fan, Li, Ao, Wu, Meng, and Sun]{fan2021layer}
Chun Fan, Jiwei Li, Xiang Ao, Fei Wu, Yuxian Meng, and Xiaofei Sun.
\newblock Layer-wise model pruning based on mutual information.
\newblock \emph{arXiv preprint arXiv:2108.12594}, 2021.

\bibitem[Fraser \& Swinney(1986)Fraser and Swinney]{fraser1986independent}
Andrew~M Fraser and Harry~L Swinney.
\newblock Independent coordinates for strange attractors from mutual information.
\newblock \emph{Physical review A}, 33\penalty0 (2):\penalty0 1134, 1986.

\bibitem[Goldfeld \& Polyanskiy(2020)Goldfeld and Polyanskiy]{goldfeld2020information}
Ziv Goldfeld and Yury Polyanskiy.
\newblock The information bottleneck problem and its applications in machine learning.
\newblock \emph{IEEE Journal on Selected Areas in Information Theory}, 1\penalty0 (1):\penalty0 19--38, 2020.

\bibitem[He et~al.(2020)He, Liu, Gao, and Chen]{he2020deberta}
Pengcheng He, Xiaodong Liu, Jianfeng Gao, and Weizhu Chen.
\newblock Deberta: Decoding-enhanced bert with disentangled attention.
\newblock \emph{arXiv preprint arXiv:2006.03654}, 2020.

\bibitem[Higgins et~al.(2017)Higgins, Matthey, Pal, Burgess, Glorot, Botvinick, Mohamed, and Lerchner]{higgins2017beta}
Irina Higgins, Loic Matthey, Arka Pal, Christopher Burgess, Xavier Glorot, Matthew Botvinick, Shakir Mohamed, and Alexander Lerchner.
\newblock beta-vae: Learning basic visual concepts with a constrained variational framework.
\newblock In \emph{International conference on learning representations}, 2017.

\bibitem[Hossain et~al.(2019)Hossain, Krumm, and Gamon]{hossain2019president}
Nabil Hossain, John Krumm, and Michael Gamon.
\newblock " president vows to cut< taxes> hair": Dataset and analysis of creative text editing for humorous headlines.
\newblock \emph{arXiv preprint arXiv:1906.00274}, 2019.

\bibitem[Jiang et~al.(2020)Jiang, Tang, Xin, and Lin]{jiang2020inserting}
Zhiying Jiang, Raphael Tang, Ji~Xin, and Jimmy Lin.
\newblock Inserting information bottlenecks for attribution in transformers.
\newblock \emph{arXiv preprint arXiv:2012.13838}, 2020.

\bibitem[Kowsher et~al.(2024)Kowsher, Esmaeilbeig, Yu, Soltanalian, and Yousefi]{kowsher2024rocoft}
Md~Kowsher, Tara Esmaeilbeig, Chun-Nam Yu, Mojtaba Soltanalian, and Niloofar Yousefi.
\newblock Rocoft: Efficient finetuning of large language models with row-column updates.
\newblock \emph{arXiv preprint arXiv:2410.10075}, 2024.

\bibitem[Kowsher et~al.(2025)Kowsher, Prottasha, Bhat, Yu, Soltanalian, Garibay, Garibay, Chen, and Yousefi]{kowsher2025predicting}
Md~Kowsher, Nusrat~Jahan Prottasha, Prakash Bhat, Chun-Nam Yu, Mojtaba Soltanalian, Ivan Garibay, Ozlem Garibay, Chen Chen, and Niloofar Yousefi.
\newblock Predicting through generation: Why generation is better for prediction.
\newblock \emph{arXiv preprint arXiv:2502.17817}, 2025.

\bibitem[Kraskov et~al.(2004)Kraskov, St{\"o}gbauer, and Grassberger]{kraskov2004estimating}
Alexander Kraskov, Harald St{\"o}gbauer, and Peter Grassberger.
\newblock Estimating mutual information.
\newblock \emph{Physical Review E—Statistical, Nonlinear, and Soft Matter Physics}, 69\penalty0 (6):\penalty0 066138, 2004.

\bibitem[Kullback(1997)]{kullback1997information}
Solomon Kullback.
\newblock \emph{Information theory and statistics}.
\newblock Courier Corporation, 1997.

\bibitem[Kwak \& Choi(2002)Kwak and Choi]{kwak2002input}
Nojun Kwak and Chong-Ho Choi.
\newblock Input feature selection by mutual information based on parzen window.
\newblock \emph{IEEE transactions on pattern analysis and machine intelligence}, 24\penalty0 (12):\penalty0 1667--1671, 2002.

\bibitem[Li et~al.(2022)Li, Wu, Kong, and Bi]{li2022explanation}
Qintong Li, Zhiyong Wu, Lingpeng Kong, and Wei Bi.
\newblock Explanation regeneration via information bottleneck.
\newblock \emph{arXiv preprint arXiv:2212.09603}, 2022.

\bibitem[Liu et~al.(2019)Liu, Ott, Goyal, Du, Joshi, Chen, Levy, Lewis, Zettlemoyer, and Stoyanov]{liu2019roberta}
Yinhan Liu, Myle Ott, Naman Goyal, Jingfei Du, Mandar Joshi, Danqi Chen, Omer Levy, Mike Lewis, Luke Zettlemoyer, and Veselin Stoyanov.
\newblock Roberta: A robustly optimized bert pretraining approach.
\newblock \emph{arXiv preprint arXiv:1907.11692}, 2019.

\bibitem[Liu et~al.(2024)Liu, Zhao, Iandola, Lai, Tian, Fedorov, Xiong, Chang, Shi, Krishnamoorthi, et~al.]{liu2024mobilellm}
Zechun Liu, Changsheng Zhao, Forrest Iandola, Chen Lai, Yuandong Tian, Igor Fedorov, Yunyang Xiong, Ernie Chang, Yangyang Shi, Raghuraman Krishnamoorthi, et~al.
\newblock Mobilellm: Optimizing sub-billion parameter language models for on-device use cases.
\newblock In \emph{Forty-first International Conference on Machine Learning}, 2024.

\bibitem[Madiman \& Tetali(2010)Madiman and Tetali]{madiman2010information}
Mokshay Madiman and Prasad Tetali.
\newblock Information inequalities for joint distributions, with interpretations and applications.
\newblock \emph{IEEE Transactions on Information Theory}, 56\penalty0 (6):\penalty0 2699--2713, 2010.

\bibitem[Marelli et~al.(2014{\natexlab{a}})Marelli, Menini, Baroni, Bentivogli, Bernardi, and Zamparelli]{marelli-etal-2014-sick}
Marco Marelli, Stefano Menini, Marco Baroni, Luisa Bentivogli, Raffaella Bernardi, and Roberto Zamparelli.
\newblock A {SICK} cure for the evaluation of compositional distributional semantic models.
\newblock In \emph{Proceedings of the Ninth International Conference on Language Resources and Evaluation ({LREC}'14)}, pp.\  216--223, Reykjavik, Iceland, May 2014{\natexlab{a}}. European Language Resources Association (ELRA).
\newblock URL \url{http://www.lrec-conf.org/proceedings/lrec2014/pdf/363_Paper.pdf}.

\bibitem[Marelli et~al.(2014{\natexlab{b}})Marelli, Menini, Baroni, Bentivogli, Bernardi, and Zamparelli]{marelli2014sick}
Marco Marelli, Stefano Menini, Marco Baroni, Luisa Bentivogli, Raffaella Bernardi, and Roberto Zamparelli.
\newblock The sick (sentences involving compositional knowledge) dataset for relatedness and entailment.
\newblock \emph{URL: https://doi. org/10.5281/zenodo}, 2787612, 2014{\natexlab{b}}.

\bibitem[Moon et~al.(1995)Moon, Rajagopalan, and Lall]{moon1995estimation}
Young-Il Moon, Balaji Rajagopalan, and Upmanu Lall.
\newblock Estimation of mutual information using kernel density estimators.
\newblock \emph{Physical Review E}, 52\penalty0 (3):\penalty0 2318, 1995.

\bibitem[Nguyen \& Choi(2017)Nguyen and Choi]{nguyen2017layer}
Thanh~T Nguyen and Jaesik Choi.
\newblock Layer-wise learning of stochastic neural networks with information bottleneck.
\newblock \emph{arXiv preprint arXiv:1712.01272}, 2017.

\bibitem[Nguyen et~al.(2010)Nguyen, Wainwright, and Jordan]{nguyen2010estimating}
XuanLong Nguyen, Martin~J Wainwright, and Michael~I Jordan.
\newblock Estimating divergence functionals and the likelihood ratio by convex risk minimization.
\newblock \emph{IEEE Transactions on Information Theory}, 56\penalty0 (11):\penalty0 5847--5861, 2010.

\bibitem[Nowozin et~al.(2016)Nowozin, Cseke, and Tomioka]{nowozin2016f}
Sebastian Nowozin, Botond Cseke, and Ryota Tomioka.
\newblock f-gan: Training generative neural samplers using variational divergence minimization.
\newblock \emph{Advances in neural information processing systems}, 29, 2016.

\bibitem[Radford(2018)]{radford2018improving}
Alec Radford.
\newblock Improving language understanding with unsupervised learning.
\newblock \emph{OpenAI Res}, 2018.

\bibitem[Radford et~al.(2019)Radford, Wu, Child, Luan, Amodei, Sutskever, et~al.]{radford2019language}
Alec Radford, Jeffrey Wu, Rewon Child, David Luan, Dario Amodei, Ilya Sutskever, et~al.
\newblock Language models are unsupervised multitask learners.
\newblock \emph{OpenAI blog}, 1\penalty0 (8):\penalty0 9, 2019.

\bibitem[Raffel et~al.(2020)Raffel, Shazeer, Roberts, Lee, Narang, Matena, Zhou, Li, and Liu]{raffel2020exploring}
Colin Raffel, Noam Shazeer, Adam Roberts, Katherine Lee, Sharan Narang, Michael Matena, Yanqi Zhou, Wei Li, and Peter~J Liu.
\newblock Exploring the limits of transfer learning with a unified text-to-text transformer.
\newblock \emph{Journal of machine learning research}, 21\penalty0 (140):\penalty0 1--67, 2020.

\bibitem[Roy \& Vetterli(2007)Roy and Vetterli]{roy2007effective}
Olivier Roy and Martin Vetterli.
\newblock The effective rank: A measure of effective dimensionality.
\newblock In \emph{2007 15th European signal processing conference}, pp.\  606--610. IEEE, 2007.

\bibitem[Sap et~al.(2019)Sap, Rashkin, Chen, LeBras, and Choi]{sap2019socialiqa}
Maarten Sap, Hannah Rashkin, Derek Chen, Ronan LeBras, and Yejin Choi.
\newblock Socialiqa: Commonsense reasoning about social interactions.
\newblock \emph{arXiv preprint arXiv:1904.09728}, 2019.

\bibitem[Saxe et~al.(2019)Saxe, Bansal, Dapello, Advani, Kolchinsky, Tracey, and Cox]{saxe2019information}
Andrew~M Saxe, Yamini Bansal, Joel Dapello, Madhu Advani, Artemy Kolchinsky, Brendan~D Tracey, and David~D Cox.
\newblock On the information bottleneck theory of deep learning.
\newblock \emph{Journal of Statistical Mechanics: Theory and Experiment}, 2019\penalty0 (12):\penalty0 124020, 2019.

\bibitem[Shardlow et~al.(2020)Shardlow, Cooper, and Zampieri]{shardlow2020complex}
Matthew Shardlow, Michael Cooper, and Marcos Zampieri.
\newblock Complex: A new corpus for lexical complexity prediction from likert scale data.
\newblock \emph{arXiv preprint arXiv:2003.07008}, 2020.

\bibitem[Shwartz-Ziv(2022)]{shwartz2022information}
Ravid Shwartz-Ziv.
\newblock Information flow in deep neural networks.
\newblock \emph{arXiv preprint arXiv:2202.06749}, 2022.

\bibitem[Shwartz-Ziv \& Tishby(2017)Shwartz-Ziv and Tishby]{shwartz2017opening}
Ravid Shwartz-Ziv and Naftali Tishby.
\newblock Opening the black box of deep neural networks via information.
\newblock \emph{arXiv preprint arXiv:1703.00810}, 2017.

\bibitem[Suzuki et~al.(2008)Suzuki, Sugiyama, Sese, and Kanamori]{suzuki2008approximating}
Taiji Suzuki, Masashi Sugiyama, Jun Sese, and Takafumi Kanamori.
\newblock Approximating mutual information by maximum likelihood density ratio estimation.
\newblock In \emph{New challenges for feature selection in data mining and knowledge discovery}, pp.\  5--20. PMLR, 2008.

\bibitem[Tishby \& Zaslavsky(2015)Tishby and Zaslavsky]{tishby2015deep}
Naftali Tishby and Noga Zaslavsky.
\newblock Deep learning and the information bottleneck principle.
\newblock In \emph{2015 ieee information theory workshop (itw)}, pp.\  1--5. Ieee, 2015.

\bibitem[Tishby et~al.(2000)Tishby, Pereira, and Bialek]{tishby2000information}
Naftali Tishby, Fernando~C Pereira, and William Bialek.
\newblock The information bottleneck method.
\newblock \emph{arXiv preprint physics/0004057}, 2000.

\bibitem[Vinayakumar et~al.(2017)Vinayakumar, Premjith, Kumar, Kp, and Poornachandran]{vinayakumar2017deepcybernet}
R~Vinayakumar, B~Premjith, Sachin Kumar, Soman Kp, and Prabaharan Poornachandran.
\newblock deepcybernet at emoint-2017: Deep emotion intensities in tweets.
\newblock In \emph{Proceedings of the 8th Workshop on Computational Approaches to Subjectivity, Sentiment and Social Media Analysis}, pp.\  259--263, 2017.

\bibitem[Wang et~al.(2018)Wang, Singh, Michael, Hill, Levy, and Bowman]{wang2018glue}
Alex Wang, Amanpreet Singh, Julian Michael, Felix Hill, Omer Levy, and Samuel~R Bowman.
\newblock Glue: A multi-task benchmark and analysis platform for natural language understanding.
\newblock \emph{arXiv preprint arXiv:1804.07461}, 2018.

\bibitem[Wang et~al.(2025)Wang, Ding, and Zou]{wang2025rethinking}
Shaojie Wang, Sirui Ding, and Na~Zou.
\newblock Rethinking the understanding ability across llms through mutual information.
\newblock \emph{arXiv preprint arXiv:2505.23790}, 2025.

\bibitem[Warner et~al.(2024)Warner, Chaffin, Clavi{\'e}, Weller, Hallstr{\"o}m, Taghadouini, Gallagher, Biswas, Ladhak, Aarsen, et~al.]{warner2024smarter}
Benjamin Warner, Antoine Chaffin, Benjamin Clavi{\'e}, Orion Weller, Oskar Hallstr{\"o}m, Said Taghadouini, Alexis Gallagher, Raja Biswas, Faisal Ladhak, Tom Aarsen, et~al.
\newblock Smarter, better, faster, longer: A modern bidirectional encoder for fast, memory efficient, and long context finetuning and inference.
\newblock \emph{arXiv preprint arXiv:2412.13663}, 2024.

\bibitem[Wu et~al.(2025)Wu, Yuan, Yao, Zhai, and Liu]{wu2025interpreting}
Xuansheng Wu, Jiayi Yuan, Wenlin Yao, Xiaoming Zhai, and Ninghao Liu.
\newblock Interpreting and steering llms with mutual information-based explanations on sparse autoencoders.
\newblock \emph{arXiv preprint arXiv:2502.15576}, 2025.

\bibitem[Yang et~al.(2025)Yang, Qi, Ren, Jia, Sun, Zhu, and Liao]{yang2025exploring}
Zhou Yang, Zhengyu Qi, Zhaochun Ren, Zhikai Jia, Haizhou Sun, Xiaofei Zhu, and Xiangwen Liao.
\newblock Exploring information processing in large language models: Insights from information bottleneck theory.
\newblock \emph{arXiv preprint arXiv:2501.00999}, 2025.

\bibitem[Zellers et~al.(2019)Zellers, Holtzman, Bisk, Farhadi, and Choi]{zellers2019hellaswag}
Rowan Zellers, Ari Holtzman, Yonatan Bisk, Ali Farhadi, and Yejin Choi.
\newblock Hellaswag: Can a machine really finish your sentence?
\newblock \emph{arXiv preprint arXiv:1905.07830}, 2019.

\bibitem[Zhou et~al.(2021)Zhou, Zhang, Peng, Zhang, Li, Xiong, and Zhang]{zhou2021informer}
Haoyi Zhou, Shanghang Zhang, Jieqi Peng, Shuai Zhang, Jianxin Li, Hui Xiong, and Wancai Zhang.
\newblock Informer: Beyond efficient transformer for long sequence time-series forecasting.
\newblock In \emph{Proceedings of the AAAI conference on artificial intelligence}, volume~35, pp.\  11106--11115, 2021.

\end{thebibliography}
\bibliographystyle{iclr2026_conference}

\appendix

\onecolumn
\tableofcontents

\clearpage

\section*{Use of Large Language Models}
We used a large language model (GPT) solely for minor writing assistance, such as grammar checking, language polishing, and improving readability. No content generation, ideation, experimental design, data analysis, or result interpretation was performed by the LLM. All research contributions, technical content, and results in this paper are entirely the work of the authors.

\section{Bidirectional vs Unidirectional Representation} \label{app:bidirectional_mi}

\begin{theorem}[Conditioning Reduces Entropy]\label{app_sec:proof_conditioning_entropy}
Let \( X \) and \( Y \) be continuous random variables with joint density \( f_{X,Y}(x,y) \), marginal densities \( f_X(x) \), \( f_Y(y) \), and conditional density \( f_{X|Y}(x|y) \). The differential entropy satisfies:

\[
H(X) \geq H(X|Y),
\]

where \( H(X) \) and \( H(X|Y) \) denote the marginal and conditional differential entropy, respectively. \citep{cover2006elements}

\end{theorem}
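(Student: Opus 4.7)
The plan is to prove the classical identity $H(X) - H(X|Y) = I(X;Y)$ and then show that mutual information is non-negative via Jensen's inequality applied to the convex function $-\log$. This is the standard textbook route and is the cleanest way to establish the result at the level of generality stated.

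First I would expand the definitions of differential entropy and marginalize so that $H(X)$ and $H(X|Y)$ are written as integrals against the common measure $f_{X,Y}(x,y)\,dx\,dy$:
\[
H(X) = -\iint f_{X,Y}(x,y) \log f_X(x)\, dx\, dy, \qquad H(X|Y) = -\iint f_{X,Y}(x,y) \log f_{X|Y}(x|y)\, dx\, dy.
\]
Substituting $f_{X|Y}(x|y) = f_{X,Y}(x,y)/f_Y(y)$ and subtracting yields the compact form
\[
H(X) - H(X|Y) = \iint f_{X,Y}(x,y) \log \frac{f_{X,Y}(x,y)}{f_X(x) f_Y(y)}\, dx\, dy = D_{\mathrm{KL}}(f_{X,Y} \,\|\, f_X f_Y).
\]

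Next I would invoke Jensen's inequality on the concave function $\log$. Writing
\[
-\bigl(H(X) - H(X|Y)\bigr) = \iint f_{X,Y}(x,y) \log \frac{f_X(x) f_Y(y)}{f_{X,Y}(x,y)}\, dx\, dy \leq \log \iint f_X(x) f_Y(y)\, dx\, dy = \log 1 = 0,
\]
where the inequality follows because $f_{X,Y}$ is a probability density and $\log$ is concave. This immediately gives $H(X) - H(X|Y) \geq 0$, which is the claim.

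The only real care needed is in handling supports: the integrand should be interpreted as zero wherever $f_{X,Y}(x,y) = 0$, and Jensen must be applied on the support of $f_{X,Y}$ so that the log-ratio is well defined, using the convention $0 \log 0 = 0$. This is a routine measure-theoretic subtlety rather than a genuine obstacle. Since the theorem is cited directly from \cite{cover2006elements}, the proof can be kept brief, with the main content being the two displayed identifications above.
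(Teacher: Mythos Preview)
Your proposal is correct and takes essentially the same approach as the paper: both arguments reduce the inequality to the non-negativity of $D_{\mathrm{KL}}(f_{X,Y}\,\|\,f_X f_Y)$. The only cosmetic difference is that the paper first establishes the chain rule $H(X|Y)=H(X,Y)-H(Y)$ and then uses $D_{\mathrm{KL}}\ge 0$ to obtain $H(X)+H(Y)\ge H(X,Y)$ before substituting, whereas you identify $H(X)-H(X|Y)=D_{\mathrm{KL}}(f_{X,Y}\,\|\,f_Xf_Y)$ directly and then prove the KL non-negativity via Jensen; your route is slightly more streamlined but the content is the same.
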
 
\begin{proof}

For continuous random variables, differential entropy is defined as:

\begin{multline*}
H(X) = - \int f_X(x) \log f_X(x) dx, H(X|Y) = - \iint f_{X,Y}(x,y) \log f_{X|Y}(x|y) dx dy.
\end{multline*}

Substituting \( f_{X|Y}(x|y) = \frac{f_{X,Y}(x,y)}{f_Y(y)} \) into \( H(X|Y) \), we derive:

\[
H(X|Y) = - \iint f_{X,Y}(x,y) \log \frac{f_{X,Y}(x,y)}{f_Y(y)} dx dy
\]

Expanding the logarithm:
\small
\begin{multline*}
H(X|Y) = - \underbrace{\iint f_{X,Y}(x,y) \log f_{X,Y}(x,y) \,dxdy}_{H(X,Y)} + \iint f_{X,Y}(x,y) \log f_Y(y) \,dxdy.
\end{multline*}
\normalsize

The second term simplifies using the marginal \( \int f_{X,Y}(x,y) dx = f_Y(y) \):

\begin{multline*}
\iint f_{X,Y}(x,y) \log f_Y(y) dx dy = \int f_Y(y) \log f_Y(y) dy = - H(Y).
\end{multline*}

Thus,

\[
H(X|Y) = H(X,Y) - H(Y).
\]

To show \( H(X) \geq H(X|Y) \), we invoke the non-negativity of the Kullback-Leibler (KL) divergence:

\begin{multline*}
D_{\text{KL}}(f_{X,Y} \| f_X f_Y) =  f_{X,Y}(x,y) \log \frac{f_{X,Y}(x,y)}{f_X(x) f_Y(y)} dx dy \geq 0.
\end{multline*}

Expanding the integrand:

\begin{multline*}
D_{\text{KL}} = f_{X,Y}(x,y) \log f_{X,Y}(x,y) dx dy
-  f_{X,Y}(x,y) \log f_X(x) dx dy - f_{X,Y}(x,y) \log f_Y(y) dx dy.
\end{multline*}

Recognizing the entropy terms:

\begin{multline*}
D_{\text{KL}} = - H(X,Y) + H(X) + H(Y) \geq 0 
\implies H(X) + H(Y) \geq H(X,Y).
\end{multline*}

Substituting \( H(X,Y) = H(X|Y) + H(Y) \) into the inequality:

\[
H(X) \geq H(X|Y).
\]

\end{proof}

\begin{theorem}[Monotonicity of Conditional Entropy]
\label{thm:monotonicity_conditional_entropy}
Let \( X, Y, Z \) be continuous random variables. Then the differential entropy satisfies:
\[
H(X \mid Y) \geq H(X \mid Y, Z),
\]
with equality if and only if \( X \perp Z \mid Y \). More generally, for any sequence \( Y_1, \dots, Y_n \),
\[
H(X \mid Y_1) \geq H(X \mid Y_1, Y_2) \geq \cdots \geq H(X \mid Y_1, \dots, Y_n).
\]
\end{theorem}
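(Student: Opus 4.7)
The plan is to reduce the inequality to the previously established Theorem~\ref{app_sec:proof_conditioning_entropy} (Conditioning Reduces Entropy) by identifying the entropy difference with a conditional mutual information, which is non-negative as an average of KL divergences against $f_Y$.

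First, I would apply the chain rule for differential entropy in two ways to the triple $(X,Y,Z)$: from $H(X,Z\mid Y) = H(X\mid Y) + H(Z\mid X,Y) = H(Z\mid Y) + H(X\mid Y,Z)$, subtracting one decomposition from the other yields
\[
H(X\mid Y) - H(X\mid Y,Z) \;=\; H(Z\mid Y) - H(Z\mid X,Y) \;=:\; I(X;Z\mid Y).
\]
Hence the inequality to be proved is equivalent to $I(X;Z\mid Y) \ge 0$, and the equality case to $I(X;Z\mid Y)=0$. This reduction is purely algebraic, so no analytic work is needed here.

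Second, I would express the conditional mutual information as an expectation of KL divergences over the conditioning variable. Direct manipulation of the densities yields
\[
I(X;Z\mid Y) \;=\; \int f_Y(y)\, D_{\mathrm{KL}}\!\bigl(f_{X,Z\mid Y=y} \,\big\|\, f_{X\mid Y=y}\,f_{Z\mid Y=y}\bigr)\,dy.
\]
The integrand is a KL divergence, hence non-negative by the same step invoked in the proof of Theorem~\ref{app_sec:proof_conditioning_entropy}, and $f_Y$ is a density, so the full integral is non-negative. For the equality condition, $D_{\mathrm{KL}} = 0$ for $f_Y$-almost every $y$ forces $f_{X,Z\mid Y=y} = f_{X\mid Y=y}\,f_{Z\mid Y=y}$, which is precisely the statement $X \perp Z \mid Y$.

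Finally, the chain of inequalities $H(X\mid Y_1) \ge H(X\mid Y_1,Y_2) \ge \cdots$ follows by a short induction on $n$: the base case is the statement just proved, and the inductive step applies that base case with $(Y_1,\dots,Y_k)$ playing the role of $Y$ and $Y_{k+1}$ playing the role of $Z$, then chains. The main potential obstacle I would flag is the need for regular conditional densities when slicing the joint distribution pointwise in $Y$ (and more generally in $(Y_1,\dots,Y_k)$); since the excerpt already works throughout with joint and conditional densities in the continuous setting, this regularity is implicit and the pointwise-KL argument goes through without modification.
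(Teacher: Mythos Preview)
Your proof is correct but takes a genuinely different route from the paper's. The paper works directly with the integral definitions of $H(X\mid Y)$ and $H(X\mid Y,Z)$: it writes $f_{X\mid Y}(x\mid y)$ as the mixture $\int f_{X\mid Y,Z}(x\mid y,z)\,f_{Z\mid Y}(z\mid y)\,dz$ and applies Jensen's inequality to the convex function $-\log(\cdot)$ pointwise in $(x,y)$, then integrates against the joint density to obtain the entropy inequality; equality in Jensen forces $f_{X\mid Y,Z}(x\mid y,z)=f_{X\mid Y}(x\mid y)$ a.e., i.e.\ $X\perp Z\mid Y$, and the chain follows by induction exactly as you do. Your argument instead identifies $H(X\mid Y)-H(X\mid Y,Z)$ with the conditional mutual information $I(X;Z\mid Y)$ via the chain rule, and then writes this as $\int f_Y(y)\,D_{\mathrm{KL}}(f_{X,Z\mid Y=y}\,\|\,f_{X\mid Y=y}f_{Z\mid Y=y})\,dy\ge 0$. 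Your route is more modular---it names the obstruction as a standard information-theoretic quantity and reuses the KL non-negativity already established in Theorem~\ref{app_sec:proof_conditioning_entropy}---and it gives the equality condition transparently as ``KL vanishes iff the joint factorizes.'' The paper's Jensen argument is more hands-on with the densities but ultimately encodes the same convexity; both land on the same equality characterization and the same inductive extension to $Y_1,\dots,Y_n$.
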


\begin{proof}
We begin with the definition of conditional differential entropy:
\[
H(X \mid Y) = - \iint f_{X,Y}(x,y) \log f_{X \mid Y}(x \mid y) \, dx\,dy,
\]
\[
H(X \mid Y,Z) = -  f_{X,Y,Z}(x,y,z) \log f_{X \mid Y,Z}(x \mid y,z) \, dx\,dy\,dz.
\]

Recall that:
\[
f_{X \mid Y}(x \mid y) = \int f_{X \mid Y,Z}(x \mid y, z) f_{Z \mid Y}(z \mid y) \, dz.
\]

Now apply Jensen’s inequality using the convexity of \( -\log(\cdot) \):
\[
-\log \left( \int f_{X \mid Y,Z}(x \mid y, z) f_{Z \mid Y}(z \mid y) \, dz \right)
\leq -\int f_{Z \mid Y}(z \mid y) \log f_{X \mid Y,Z}(x \mid y, z) \, dz.
\]

Multiplying both sides by \( f_{X \mid Y}(x \mid y) \) and integrating over \( x, y \), we obtain:
\begin{align*}
H(X \mid Y) &= - \iint f_{X,Y}(x,y) \log f_{X \mid Y}(x \mid y) \, dx\,dy \\
&\geq -  f_{X,Y,Z}(x,y,z) \log f_{X \mid Y,Z}(x \mid y, z) \, dx\,dy\,dz \\
&= H(X \mid Y,Z).
\end{align*}

Equality holds iff Jensen’s inequality becomes an equality, which occurs if and only if
\[
f_{X \mid Y,Z}(x \mid y, z) = f_{X \mid Y}(x \mid y)
\quad \text{a.e. in } z,
\]
i.e., \( X \perp Z \mid Y \).

For the generalization, apply this result inductively:
\[
H(X \mid Y_1) \geq H(X \mid Y_1, Y_2) \geq \cdots \geq H(X \mid Y_1, \dots, Y_n).
\]

\end{proof}

\begin{theorem}[Bidirectional Representations Preserve More Mutual Information]
\label{thm:bidirectional_mi}
Let \(X\) denote a sequence input \(x_1, x_2, \dots, x_n\).  
Let \(Z_\ell^{\rightarrow}\) denote the unidirectional hidden representation constructed of layer $\ell$ from the forward context:
\[
Z_\ell^{\rightarrow} = (z_1^{\rightarrow}, z_2^{\rightarrow}, \dots, z_n^{\rightarrow}) \quad \text{with } z_t^{\rightarrow} = f(x_1,\dots,x_t),
\]
and \(Z_\ell^{\leftarrow}\) the backward representation:
\[
Z_\ell^{\leftarrow} = (z_1^{\leftarrow}, z_2^{\leftarrow}, \dots, z_n^{\leftarrow}) \quad \text{with } z_t^{\leftarrow} = g(x_t,\dots,x_n).
\]
Let the bidirectional representation be:
\[
Z_\ell^{\leftrightarrow} = (Z_\ell^{\rightarrow}, Z_\ell^{\leftarrow}).
\]

Then the mutual information between \(X\) and the bidirectional representation satisfies:
\[
I(X; Z_\ell^{\leftrightarrow}) \geq I(X; Z_\ell^{\rightarrow}),
\]
with equality if and only if \(Z_\ell^{\leftarrow} \perp X \mid Z_\ell^{\rightarrow}\).

\end{theorem}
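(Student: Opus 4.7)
The plan is to reduce the statement to a single application of Theorem~\ref{thm:monotonicity_conditional_entropy}, which has already been established in the excerpt. First, I would rewrite each mutual information via the standard identity $I(A;B) = H(A) - H(A \mid B)$:
\[
I(X; Z^{\leftrightarrow}) = H(X) - H(X \mid Z^{\rightarrow}, Z^{\leftarrow}), \quad I(X; Z^{\rightarrow}) = H(X) - H(X \mid Z^{\rightarrow}).
\]
Subtracting, the marginal entropy $H(X)$ cancels and the whole comparison collapses to the sign of $H(X \mid Z^{\rightarrow}) - H(X \mid Z^{\rightarrow}, Z^{\leftarrow})$.

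Next, I would apply Theorem~\ref{thm:monotonicity_conditional_entropy} with the substitution $Y \leftarrow Z^{\rightarrow}$ and $Z \leftarrow Z^{\leftarrow}$. That theorem directly gives $H(X \mid Z^{\rightarrow}) \geq H(X \mid Z^{\rightarrow}, Z^{\leftarrow})$, yielding the desired inequality $I(X; Z^{\leftrightarrow}) \geq I(X; Z^{\rightarrow})$. The equality characterization transfers verbatim: equality holds iff $X \perp Z^{\leftarrow} \mid Z^{\rightarrow}$, matching the theorem statement exactly. Interpretively, the forward representation $Z^{\rightarrow}$ encodes only left context, whereas $Z^{\leftrightarrow}$ adjoins $Z^{\leftarrow}$, which can only reduce residual uncertainty about $X$ unless $Z^{\leftarrow}$ is already determined by $Z^{\rightarrow}$ (up to the distribution of $X$).

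The only subtle point, and the main obstacle, is ensuring all objects are well-defined in the LLM setting. Because $Z^{\rightarrow}$ and $Z^{\leftarrow}$ are deterministic functions of $X$, one must be careful that the relevant joint densities exist and that the differential entropies appearing above are finite; otherwise the subtraction of $H(X)$ is not rigorous and the equality condition is degenerate (both sides could be $-\infty$). I would address this by either (i) a smoothing argument, adding independent Gaussian noise to representations as is standard in MI estimation for deep networks, or (ii) adopting the general Radon--Nikodym formulation of mutual information that avoids differential entropies entirely and only requires absolute continuity of the joint with respect to the product of marginals. With this regularity caveat handled, the proof reduces to the two-line deduction above, and no further calculation is needed.
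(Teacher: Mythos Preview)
Your proposal is correct and mirrors the paper's proof essentially line for line: write $I(X;Z)=H(X)-H(X\mid Z)$ for both representations, cancel $H(X)$, and invoke Theorem~\ref{thm:monotonicity_conditional_entropy} with $Y\leftarrow Z^{\rightarrow}$, $Z\leftarrow Z^{\leftarrow}$ to obtain both the inequality and the equality condition. Your additional remark about regularity (finiteness of differential entropies for deterministic encoders) is a valid caveat that the paper does not explicitly address.
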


---

\begin{proof}

We begin with the identity:
\[
I(X; Z) = H(X) - H(X \mid Z).
\]

Apply this to both representations:
\[
I(X; Z_\ell^{\rightarrow}) = H(X) - H(X \mid Z_\ell^{\rightarrow}),
\]
\[
I(X; Z_\ell^{\leftrightarrow}) = H(X) - H(X \mid Z_\ell^{\rightarrow}, Z_\ell^{\leftarrow}).
\]

Since \(Z_\ell^{\leftrightarrow}\) contains strictly more information than \(Z_\ell^{\rightarrow}\), we can invoke the \textit{monotonicity of conditional entropy} \ref{thm:monotonicity_conditional_entropy}:

\[
H(X \mid Z_\ell^{\rightarrow}) \geq H(X \mid Z_\ell^{\rightarrow}, Z_\ell^{\leftarrow}),
\]
with equality iff \(X \perp Z_\ell^{\leftarrow} \mid Z_\ell^{\rightarrow}\).

Subtracting both sides from \(H(X)\) gives:
\[
I(X; Z_\ell^{\leftrightarrow}) = H(X) - H(X \mid Z_\ell^{\rightarrow}, Z_\ell^{\leftarrow}) 
\geq H(X) - H(X \mid Z_\ell^{\rightarrow}) = I(X; Z_\ell^{\rightarrow}).
\]

Thus:
\[
I(X; Z_\ell^{\leftrightarrow}) \geq I(X; Z_\ell^{\rightarrow}).
\]

Equality holds iff:
\[
H(X \mid Z_\ell^{\rightarrow}) = H(X \mid Z_\ell^{\rightarrow}, Z_\ell^{\leftarrow}),
\]
which by the equality condition of monotonicity of conditional entropy holds iff:
\[
X \perp Z_\ell^{\leftarrow} \mid Z_\ell^{\rightarrow}.
\]

Similarly with respect to output we can show:
\[
I( Z_\ell^{\leftrightarrow}; Y) \geq I( Z_\ell^{\rightarrow}; Y).
\]
This completes the proof.

\end{proof}

\begin{theorem}[General Bound on Representation Difference]
Let \( Z_\ell^{\leftrightarrow}, Z_\ell^{\rightarrow} \in \mathbb{R}^d \) denote the bidirectional and unidirectional representations of the same input token at a given layer, and define:
\[
\Delta_Z := Z_\ell^{\leftrightarrow} - Z_\ell^{\rightarrow}.
\]
Then the expected squared difference satisfies:
\begin{align*}
\mathbb{E} \|\Delta_Z\|^2 &= \operatorname{tr} \operatorname{Cov}(Z_\ell^{\leftrightarrow}) + \operatorname{tr} \operatorname{Cov}(Z_\ell^{\rightarrow}) - 2 \operatorname{tr} \operatorname{Cov}(Z_\ell^{\leftrightarrow}, Z_\ell^{\rightarrow}) + \|\mathbb{E}[\Delta_Z]\|^2.
\end{align*}
In particular, we have the following bound:
\begin{align*}
&\operatorname{tr} \operatorname{Cov}(Z_\ell^{\leftrightarrow}) + \operatorname{tr} \operatorname{Cov}(Z_\ell^{\rightarrow}) - 2 |\operatorname{tr} \operatorname{Cov}(Z_\ell^{\leftrightarrow}, Z_\ell^{\rightarrow})| \\
&\quad \leq \mathbb{E} \|\Delta_Z\|^2 - \|\mathbb{E}[\Delta_Z]\|^2 \\
&\quad \leq \operatorname{tr} \operatorname{Cov}(Z_\ell^{\leftrightarrow}) + \operatorname{tr} \operatorname{Cov}(Z_\ell^{\rightarrow}) + 2 |\operatorname{tr} \operatorname{Cov}(Z_\ell^{\leftrightarrow}, Z_\ell^{\rightarrow})|.
\end{align*}
\end{theorem}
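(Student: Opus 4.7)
The plan is to treat the identity as a second-moment decomposition of the vector $\Delta_Z$ and then derive the bound by applying the elementary inequality $-|a| \le a \le |a|$ to the cross-covariance trace.

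First I would center the variables. Writing $\tilde{Z}^{\leftrightarrow} := Z^{\leftrightarrow} - \mathbb{E}[Z^{\leftrightarrow}]$, $\tilde{Z}^{\rightarrow} := Z^{\rightarrow} - \mathbb{E}[Z^{\rightarrow}]$, and $\tilde{\Delta}_Z := \Delta_Z - \mathbb{E}[\Delta_Z] = \tilde{Z}^{\leftrightarrow} - \tilde{Z}^{\rightarrow}$, the standard bias-variance decomposition for a random vector gives
\[
\mathbb{E}\|\Delta_Z\|^2 \;=\; \mathbb{E}\|\tilde{\Delta}_Z\|^2 \;+\; \|\mathbb{E}[\Delta_Z]\|^2 \;=\; \operatorname{tr}\operatorname{Cov}(\Delta_Z) \;+\; \|\mathbb{E}[\Delta_Z]\|^2,
\]
where I used $\mathbb{E}\|\tilde{\Delta}_Z\|^2 = \operatorname{tr}\mathbb{E}[\tilde{\Delta}_Z \tilde{\Delta}_Z^\top] = \operatorname{tr}\operatorname{Cov}(\Delta_Z)$ together with the cross-term $\mathbb{E}[\tilde{\Delta}_Z]^\top \mathbb{E}[\Delta_Z] = 0$.

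Next I would expand $\operatorname{Cov}(\Delta_Z)$ bilinearly. From $\tilde{\Delta}_Z = \tilde{Z}^{\leftrightarrow} - \tilde{Z}^{\rightarrow}$ one gets
\[
\operatorname{Cov}(\Delta_Z) \;=\; \operatorname{Cov}(Z^{\leftrightarrow}) + \operatorname{Cov}(Z^{\rightarrow}) - \operatorname{Cov}(Z^{\leftrightarrow}, Z^{\rightarrow}) - \operatorname{Cov}(Z^{\rightarrow}, Z^{\leftrightarrow}).
\]
Because $\operatorname{Cov}(Z^{\rightarrow}, Z^{\leftrightarrow}) = \operatorname{Cov}(Z^{\leftrightarrow}, Z^{\rightarrow})^\top$ and the trace is invariant under transposition, taking the trace collapses the two cross terms into a factor of $2$, yielding exactly
\[
\operatorname{tr}\operatorname{Cov}(\Delta_Z) \;=\; \operatorname{tr}\operatorname{Cov}(Z^{\leftrightarrow}) + \operatorname{tr}\operatorname{Cov}(Z^{\rightarrow}) - 2\operatorname{tr}\operatorname{Cov}(Z^{\leftrightarrow}, Z^{\rightarrow}),
\]
which combined with the previous display gives the claimed equality.

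For the two-sided bound I would simply observe that $\mathbb{E}\|\Delta_Z\|^2 - \|\mathbb{E}[\Delta_Z]\|^2 = \operatorname{tr}\operatorname{Cov}(\Delta_Z)$ by the equality just proved, and apply $-|\operatorname{tr}\operatorname{Cov}(Z^{\leftrightarrow}, Z^{\rightarrow})| \le \operatorname{tr}\operatorname{Cov}(Z^{\leftrightarrow}, Z^{\rightarrow}) \le |\operatorname{tr}\operatorname{Cov}(Z^{\leftrightarrow}, Z^{\rightarrow})|$ inside the formula, which flips to $+2|\cdot|$ in the upper bound and $-2|\cdot|$ in the lower bound after multiplying by $-2$. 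The argument is essentially algebraic and there is no genuine obstacle; the only subtlety worth being careful about is the symmetry step $\operatorname{tr}\operatorname{Cov}(Z^{\rightarrow}, Z^{\leftrightarrow}) = \operatorname{tr}\operatorname{Cov}(Z^{\leftrightarrow}, Z^{\rightarrow})$, which relies on $\operatorname{tr}(A) = \operatorname{tr}(A^\top)$ and ensures the cross terms combine into a single coefficient of $2$ rather than being left as two separate (equal) traces.
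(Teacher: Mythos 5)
Your proposal is correct and follows essentially the same route as the paper's proof: the bias--variance decomposition $\mathbb{E}\|\Delta_Z\|^2 = \operatorname{tr}\operatorname{Cov}(\Delta_Z) + \|\mathbb{E}[\Delta_Z]\|^2$, the bilinear expansion of $\operatorname{Cov}(\Delta_Z)$ with the cross terms merged via $\operatorname{tr}(A)=\operatorname{tr}(A^\top)$, and the final bound from $-|a|\le a\le |a|$. Your write-up is, if anything, slightly more explicit about the centering step than the paper's version, but there is no substantive difference.
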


\begin{proof}
By the covariance identity, we have:
\[
\operatorname{Cov}(\Delta_Z) = \operatorname{Cov}(Z_\ell^{\leftrightarrow}) + \operatorname{Cov}(Z_\ell^{\rightarrow}) - \operatorname{Cov}(Z_\ell^{\leftrightarrow}, Z_\ell^{\rightarrow}) - \operatorname{Cov}(Z_\ell^{\rightarrow}, Z_\ell^{\leftrightarrow}).
\]
Taking the trace and noting that \(\operatorname{tr}(A^\top) = \operatorname{tr}(A)\), we obtain:
\[
\operatorname{tr} \operatorname{Cov}(\Delta_Z) = \operatorname{tr} \operatorname{Cov}(Z_\ell^{\leftrightarrow}) + \operatorname{tr} \operatorname{Cov}(Z_\ell^{\rightarrow}) - 2 \operatorname{tr} \operatorname{Cov}(Z_\ell^{\leftrightarrow}, Z_\ell^{\rightarrow}).
\]
The expected squared norm decomposes as:
\[
\mathbb{E} \|\Delta_Z\|^2 = \operatorname{tr} \operatorname{Cov}(\Delta_Z) + \|\mathbb{E}[\Delta_Z]\|^2.
\]
Substituting the expression for \(\operatorname{Cov}(\Delta_Z)\) yields the stated identity.

Finally, since for any real scalar \(a\), we have \( -|a| \leq a \leq |a| \), it follows:
\[
-|\operatorname{tr} \operatorname{Cov}(Z_\ell^{\leftrightarrow}, Z_\ell^{\rightarrow})| \leq \operatorname{tr} \operatorname{Cov}(Z_\ell^{\leftrightarrow}, Z_\ell^{\rightarrow}) \leq |\operatorname{tr} \operatorname{Cov}(Z_\ell^{\leftrightarrow}, Z_\ell^{\rightarrow})|,
\]
which implies:

\begin{multline*}
\operatorname{tr}\,\operatorname{Cov}(\Delta_Z) \in \Bigl[
\operatorname{tr}\,\operatorname{Cov}(Z_\ell^{\leftrightarrow}) 
+ \operatorname{tr}\,\operatorname{Cov}(Z_\ell^{\rightarrow}) 
- 2\bigl|\operatorname{tr}\,\operatorname{Cov}(Z_\ell^{\leftrightarrow}, Z_\ell^{\rightarrow})\bigr|, \\[0.5em]
\operatorname{tr}\,\operatorname{Cov}(Z_\ell^{\leftrightarrow}) 
+ \operatorname{tr}\,\operatorname{Cov}(Z_\ell^{\rightarrow}) 
+ 2\bigl|\operatorname{tr}\,\operatorname{Cov}(Z_\ell^{\leftrightarrow}, Z_\ell^{\rightarrow})\bigr|
\Bigr].
\end{multline*}

Substitute into the expectation equation to complete the proof.
\end{proof}

\begin{lemma}[Effective Dimensionality of Bidirectional Representations] \label{app:lemma_deff}
Let \( Z_\ell^\rightarrow \in \mathbb{R}^D \) denote the unidirectional representation and \( Z_\ell^\leftrightarrow := (Z_\ell^\rightarrow, Z^\leftarrow) \in \mathbb{R}^{2D} \) the concatenated bidirectional representation of input \( X \). Define $\ell_2$-norm-based effective dimension as
\[
d_{\mathrm{eff}}(Z) := \frac{ \left( \sum_{i} \lambda_i \right)^2 }{ \sum_{i} \lambda_i^2 },
\]
where \( \lambda_i \) are eigenvalues of the covariance matrix of \( Z_\ell \).  
If \( \operatorname{Cov}(Z^\leftarrow, Z_\ell^\rightarrow) \) is non-singular, then:
\[
d_{\mathrm{eff}}(Z_\ell^\leftrightarrow) \geq d_{\mathrm{eff}}(Z_\ell^\rightarrow),
\]
with equality iff \( Z^\leftarrow \) is conditionally redundant given \( Z_\ell^\rightarrow \) (i.e., \( \operatorname{Cov}(Z^\leftarrow \mid Z_\ell^\rightarrow) = 0 \)).
\end{lemma}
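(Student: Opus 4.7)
The plan is to reduce the spectral inequality to a scalar algebraic one by exploiting the block structure of the covariance $\Sigma^{\leftrightarrow} := \operatorname{Cov}(Z^{\leftrightarrow})$, and to close it with a matrix Cauchy--Schwarz bound forced by positive semidefiniteness. Writing $\Sigma^{\leftrightarrow}$ as a $2\times 2$ block matrix with diagonal blocks $\Sigma_{rr} := \operatorname{Cov}(Z^{\rightarrow})$ and $\Sigma_{\ell\ell} := \operatorname{Cov}(Z^{\leftarrow})$ and off-diagonal $\Sigma_{r\ell}$, direct block multiplication gives $\tr \Sigma^{\leftrightarrow} = \tr \Sigma_{rr} + \tr \Sigma_{\ell\ell}$ and $\tr((\Sigma^{\leftrightarrow})^2) = \tr(\Sigma_{rr}^2) + \tr(\Sigma_{\ell\ell}^2) + 2\|\Sigma_{r\ell}\|_F^2$. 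Abbreviating $a = \tr \Sigma_{rr}$, $b = \tr \Sigma_{\ell\ell}$, $A = \|\Sigma_{rr}\|_F^2$, $B = \|\Sigma_{\ell\ell}\|_F^2$, $C = \|\Sigma_{r\ell}\|_F^2$, the target $d_{\mathrm{eff}}(Z^{\leftrightarrow}) \geq d_{\mathrm{eff}}(Z^{\rightarrow})$ becomes $(a+b)^2 A \geq a^2(A + B + 2C)$, equivalently $A b(2a+b) \geq a^2(B + 2C)$.

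Next I would bound $C$ from the PSD constraint. Taking the SVD $\Sigma_{r\ell} = U \operatorname{diag}(\sigma_i) V^{\top}$ and rotating $Z^{\rightarrow} \mapsto U^{\top} Z^{\rightarrow}$, $Z^{\leftarrow} \mapsto V^{\top} Z^{\leftarrow}$ leaves $A, B, C$ invariant and diagonalizes the cross block. Positive semidefiniteness of each two-by-two principal minor indexed by $(i, D+i)$ then forces $\sigma_i^2 \leq (U^{\top}\Sigma_{rr}U)_{ii}\,(V^{\top}\Sigma_{\ell\ell}V)_{ii}$, and summing with scalar Cauchy--Schwarz on the diagonal entries yields the matrix bound $C \leq \sqrt{AB}$. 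Setting $x := a/\sqrt{A}$ and $y := b/\sqrt{B}$ so that $x^2 = d_{\mathrm{eff}}(\Sigma_{rr})$ and $y^2 = d_{\mathrm{eff}}(\Sigma_{\ell\ell})$, the residual factors cleanly as
\[
A b(2a+b) - a^2\bigl(B + 2\sqrt{AB}\bigr) \;=\; (b\sqrt{A} - a\sqrt{B})\bigl(2a\sqrt{A} + b\sqrt{A} + a\sqrt{B}\bigr),
\]
whose second factor is manifestly positive. For the equality characterization I would track saturation through both stages: $C = \sqrt{AB}$ forces each rotated diagonal minor to be rank-one, i.e.\ $\Sigma_{\ell\ell} = \Sigma_{\ell r}\Sigma_{rr}^{-1}\Sigma_{r\ell}$; combined with the algebraic factor, the Schur complement $S := \Sigma_{\ell\ell} - \Sigma_{\ell r}\Sigma_{rr}^{-1}\Sigma_{r\ell}$ must vanish, which under the assumed non-singularity of $\Sigma_{r\ell}$ is exactly $\operatorname{Cov}(Z^{\leftarrow} \mid Z^{\rightarrow}) = 0$, meaning $Z^{\leftarrow}$ is an almost-sure linear function of $Z^{\rightarrow}$ and contributes no fresh spectral directions.

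The hard part will be sharpening the cross-term bound beyond the symmetric regime: $C \leq \sqrt{AB}$ is clean but only closes the reduction when $d_{\mathrm{eff}}(\Sigma_{\ell\ell}) \geq d_{\mathrm{eff}}(\Sigma_{rr})$, i.e.\ $y \geq x$. To reach the general claim I would invoke the full Schur-complement identity $\Sigma_{\ell\ell} = \Sigma_{\ell r}\Sigma_{rr}^{-1}\Sigma_{r\ell} + S$ with $S \succeq 0$, expand $B$ into $\tr((\Sigma_{\ell r}\Sigma_{rr}^{-1}\Sigma_{r\ell})^2) + 2\tr(\Sigma_{\ell r}\Sigma_{rr}^{-1}\Sigma_{r\ell}\,S) + \tr(S^2)$, and exploit the resulting common $\Sigma_{rr}^{-1}$-weighted traces to couple $B$ and $C$ directly. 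That coupling, combined with $(\tr M)^2 \leq \operatorname{rank}(M)\,\tr(M^2)$ applied to $\Sigma_{\ell r}\Sigma_{rr}^{-1}\Sigma_{r\ell}$, is where the residual slack must be absorbed and where the ``iff'' direction of the equality statement is pinned down.
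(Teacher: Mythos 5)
Your block-trace reduction, the contraction/SVD argument giving \( \|\Sigma_{r\ell}\|_F^2 \le \|\Sigma_{rr}\|_F\,\|\Sigma_{\ell\ell}\|_F \), and the factorization of the residual are all correct, but the issue you flag yourself is fatal rather than merely "hard": the factor \( (b\sqrt{A}-a\sqrt{B}) \) can be negative, so your argument only proves the inequality when \( d_{\mathrm{eff}}(Z^{\leftarrow}) \ge d_{\mathrm{eff}}(Z^{\rightarrow}) \), and the Schur-complement coupling you outline for the remaining regime is a plan, not a proof. Worse, that plan cannot be completed under the stated hypotheses, because the lemma fails there. Take \( D=2 \), \( \operatorname{Cov}(Z^{\rightarrow})=I_2 \), and \( Z^{\leftarrow}=\operatorname{diag}(M,\epsilon)\,Z^{\rightarrow} \) with \( M \) large and \( \epsilon \) small: then \( \operatorname{Cov}(Z^{\leftarrow},Z^{\rightarrow})=\operatorname{diag}(M,\epsilon) \) is non-singular and \( \operatorname{Cov}(Z^{\leftarrow}\mid Z^{\rightarrow})=0 \), yet the nonzero eigenvalues of \( \operatorname{Cov}(Z^{\leftrightarrow}) \) are \( 1+M^2 \) and \( 1+\epsilon^2 \), so \( d_{\mathrm{eff}}(Z^{\leftrightarrow}) = \frac{(2+M^2+\epsilon^2)^2}{(1+M^2)^2+(1+\epsilon^2)^2} \to 1 < 2 = d_{\mathrm{eff}}(Z^{\rightarrow}) \). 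So in exactly the regime your bound leaves open, both the inequality and the claimed equality characterization break down; no amount of sharpening of the cross-term bound can rescue the statement without additional assumptions (e.g., some balance condition between the spectra of \( \Sigma_{rr} \) and \( \Sigma_{\ell\ell} \) of the kind your parameter \( y \ge x \) encodes).

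For comparison, the paper's own proof takes a different route: it appeals to Cauchy interlacing to assert that both \( \sum_j \lambda_j \) and \( \sum_j \lambda_j^2 \) increase under augmentation, and then concludes that the ratio \( (\sum_j\lambda_j)^2/\sum_j\lambda_j^2 \) increases "since numerator and denominator both increase," which is not a valid inference (and its stated equality condition \( \Sigma^{\leftarrow}=0,\ C=0 \) is incompatible with the non-singularity hypothesis). So your reduction is in fact more informative than the paper's argument—it pinpoints precisely where the claim can fail—but neither your proposal nor the paper's proof establishes the lemma as stated; the correct conclusion is that the statement needs a stronger hypothesis, not a cleverer estimate.
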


\begin{proof}
Let \( \Sigma^\rightarrow := \operatorname{Cov}(Z_\ell^\rightarrow) \in \mathbb{R}^{D \times D} \) and \( \Sigma^\leftrightarrow := \operatorname{Cov}(Z_\ell^\leftrightarrow) \in \mathbb{R}^{2D \times 2D} \) denote the covariance matrices of unidirectional and bidirectional representations, respectively.

By block structure:
\[
\Sigma^\leftrightarrow = 
\begin{bmatrix}
\Sigma^\rightarrow & C \\
C^\top & \Sigma^\leftarrow
\end{bmatrix},
\]
where \( C := \operatorname{Cov}(Z_\ell^\rightarrow, Z^\leftarrow) \).

Let \( \{\lambda^\rightarrow_i\}_{i=1}^{D} \) be eigenvalues of \( \Sigma^\rightarrow \), and \( \{\lambda^\leftrightarrow_j\}_{j=1}^{2D} \) eigenvalues of \( \Sigma^\leftrightarrow \).

Since \( \Sigma^\leftrightarrow \) augments \( \Sigma^\rightarrow \) with additional variables \( Z^\leftarrow \) and cross-covariance \( C \), by eigenvalue interlacing theorem (Cauchy’s interlacing), we have:
\[
\sum_{j=1}^{2D} \lambda^\leftrightarrow_j \geq \sum_{i=1}^{D} \lambda^\rightarrow_i,
\]
and
\[
\sum_{j=1}^{2D} (\lambda^\leftrightarrow_j)^2 \geq \sum_{i=1}^{D} (\lambda^\rightarrow_i)^2,
\]
with strict inequality if \( C \) or \( \Sigma^\leftarrow \) is nonzero.

Applying definition:
\[
d_{\mathrm{eff}}(Z_\ell^\leftrightarrow) = \frac{ (\sum_j \lambda^\leftrightarrow_j)^2 }{ \sum_j (\lambda^\leftrightarrow_j)^2 }.
\]
Since numerator and denominator both increase under positive-definite augmentation,  
and quadratic-over-linear ratio increases under positive additive terms (Jensen's inequality),  
we conclude:
\[
d_{\mathrm{eff}}(Z_\ell^\leftrightarrow) \geq d_{\mathrm{eff}}(Z_\ell^\rightarrow).
\]
Equality holds iff \( \Sigma^\leftarrow = 0 \) and \( C = 0 \), implying \( Z^\leftarrow \) carries no additional variance or covariance beyond \( Z_\ell^\rightarrow \).
\end{proof}

\section{FlowNIB: Flow Neural Information Bottleneck} \label{app:FlowNIB}

We consider, for each layer $\ell$, the Markov chain
\[
X \;\longrightarrow\; Z_\ell \;\longrightarrow\; Y,
\]
where $X$ denotes the input, $Z_\ell$ the layer-$\ell$ representation (induced by an encoder $p_\theta(z_\ell\mid x)$), and $Y$ the target variable.

Our goal is to learn a representation $Z_\ell$ that:
\begin{itemize}
    \item compresses the input information by minimizing $I(X;Z_\ell)$,
    \item preserves predictive information by maximizing $I(Z_\ell;Y)$.
\end{itemize}

The classical \textbf{Information Bottleneck} (IB) principle~\citep{tishby2000information,tishby2015deep} formalizes this trade-off as
\[
\min_{p(z_\ell\mid x)} \; I(X;Z_\ell) \;-\; \beta\, I(Z_\ell;Y),
\]
where $\beta>0$ controls the balance between compression and prediction.

MI requires high-dimensional density ratios over $p(x,z_\ell)$ vs.\ $p(x)p(z_\ell)$ and $p(z_\ell,y)$ vs.\ $p(z_\ell)p(y)$, which are intractable to compute exactly when $X,Z_\ell$ are high-dimensional. The KL divergence
\[
D_{\mathrm{KL}}\!\bigl(p(x,z_\ell)\,\|\,p(x)p(z_\ell)\bigr)
\]
is especially problematic because neither joint nor marginals are known in practice and must be estimated \citep{belghazi2018mutual}. In deep networks, deterministic real-valued layers can also lead to unbounded $I(X;Z_\ell)$ in the continuous setting; in practice, one uses variational lower bounds and careful estimator training. These issues make vanilla IB difficult to apply directly to large models.

\paragraph{FlowNIB approach.}
To address these challenges, we introduce \textbf{FlowNIB}, which gradually shifts emphasis from input preservation to target prediction during training or post-hoc estimation. We use a time-dependent trade-off $\alpha:\mathbb{N}\to[0,1]$ that monotonically decays from $1$ to $0$ as the estimator training step $t$ increases (the model can be frozen). The FlowNIB loss at step $t$ for layer $\ell$ is
\[
\mathcal{L}_\ell(\theta, t) \;=\; -\Bigl(\alpha(t)\, I(X;Z_\ell) \;+\; \bigl(1-\alpha(t)\bigr)\, I(Z_\ell;Y)\Bigr),
\]
so early steps ($\alpha\!\approx\!1$) emphasize $I(X;Z_\ell)$, while later steps ($\alpha\!\approx\!0$) emphasize $I(Z_\ell;Y)$.

Each mutual information term is
\[
I(X;Z_\ell) \;=\; D_{\mathrm{KL}}\!\bigl(p(x,z_\ell)\,\|\,p(x)p(z_\ell)\bigr),
\qquad
I(Z_\ell;Y) \;=\; D_{\mathrm{KL}}\!\bigl(p(z_\ell,y)\,\|\,p(z_\ell)p(y)\bigr),
\]
with $D_{\mathrm{KL}}$ the Kullback–Leibler divergence. Since exact KLs are infeasible in high dimensions, we use variational lower bounds (MINE-style) \citep{belghazi2018mutual}:
\[
I(X;Z_\ell) \;\ge\; \mathbb{E}_{p(x,z_\ell)}\!\bigl[T_{xz,\ell}(x,z_\ell)\bigr] \;-\; \log \mathbb{E}_{p(x)p(z_\ell)}\!\bigl[e^{T_{xz,\ell}(x,z_\ell)}\bigr],
\]
\[
I(Z_\ell;Y) \;\ge\; \mathbb{E}_{p(z_\ell,y)}\!\bigl[T_{zy,\ell}(z_\ell,y)\bigr] \;-\; \log \mathbb{E}_{p(z_\ell)p(y)}\!\bigl[e^{T_{zy,\ell}(z_\ell,y)}\bigr],
\]
where $T_{xz,\ell}$ and $T_{zy,\ell}$ are learned scalar-valued critics (small neural networks) trained on joint pairs and product-of-marginals pairs (implemented by shuffling). Expectations are estimated with minibatches; we use the same critic architecture, batch size, negative sampling, optimizer, and steps across layers and models for comparability.

Because $X,Z_\ell,Y$ can have different scales and dimensions, we normalize MI estimates using the effective dimension (participation-ratio effective rank) \citep{roy2007effective}:
\[
d_{\mathrm{eff}}(Z_\ell)\;=\;\frac{\bigl(\sum_i \lambda_i\bigr)^2}{\sum_i \lambda_i^2},
\]
where $\{\lambda_i\}$ are the eigenvalues of $\mathrm{Cov}(Z_\ell)$ (estimated via PCA). The normalized MI estimates are
\[
\hat{I}(X;Z_\ell)
\;=\;
\frac{\;\mathbb{E}_{p(x,z_\ell)}[T_{xz,\ell}(x,z_\ell)] - \log \mathbb{E}_{p(x)p(z_\ell)}\!\bigl[e^{T_{xz,\ell}(x,z_\ell)}\bigr]\;}{\,d_{\mathrm{eff}}(Z_\ell)^{\,2}},
\]
\[
\hat{I}(Z_\ell;Y)
\;=\;
\frac{\;\mathbb{E}_{p(z_\ell,y)}[T_{zy,\ell}(z_\ell,y)] - \log \mathbb{E}_{p(z_\ell)p(y)}\!\bigl[e^{T_{zy,\ell}(z_\ell,y)}\bigr]\;}{\,d_{\mathrm{eff}}(Y)^{\,2}}.
\]
\emph{Remark.} The $d_{\mathrm{eff}}(\cdot)^{2}$ factor is a practical normalization for scale-matching across layers/models; it does not change the fact that the estimates are variational lower bounds.

Thus, the final loss optimized during FlowNIB training is

\[
\mathcal{L}_\ell(\theta, t)
\;=\;
-\Bigl( \alpha(t)\, \hat{I}(X;Z_\ell) \;+\; \bigl(1-\alpha(t)\bigr)\, \hat{I}(Z_\ell;Y) \Bigr),
\]

which, expanded, becomes
\begin{align*}
\mathcal{L}_\ell(\theta, t) \;=\; -\Biggl(&
\alpha(t)\, \frac{ \mathbb{E}_{p(x,z_\ell)}[T_{xz,\ell}(x,z_\ell)] 
- \log \mathbb{E}_{p(x)p(z_\ell)}\!\bigl[e^{T_{xz,\ell}(x,z_\ell)}\bigr] }
{ d_{\mathrm{eff}}(Z_\ell)^{2} } \nonumber \\[0.6em]
&+\; \bigl(1-\alpha(t)\bigr)\, \frac{ \mathbb{E}_{p(z_\ell,y)}[T_{zy,\ell}(z_\ell,y)] 
- \log \mathbb{E}_{p(z_\ell)p(y)}\!\bigl[e^{T_{zy,\ell}(z_\ell,y)}\bigr] }
{ d_{\mathrm{eff}}(Y)^{2} }
\Biggr).
\end{align*}

Here, $\theta$ denotes the parameters of the encoder $p_\theta(z_\ell\mid x)$ (if trained end-to-end) and of the critics $T_{xz,\ell},T_{zy,\ell}$. In our post-hoc setting, the encoder is frozen and $\theta$ refers to the critic parameters; $\alpha(t)$ is the estimator step index. All MI values are neural \emph{lower bounds} and are used for \emph{relative} comparisons across layers (e.g., for OIC selection), not as absolute MI.

\begin{theorem}[Consistency under optimal critics (per layer)]
\label{app:thm:consistency}
Fix a layer $\ell$ and let $(X,Z_\ell)\sim p(x,z_\ell)$ and $(Z_\ell,Y)\sim p(z_\ell,y)$
with the Markov chain $X \to Z_\ell \to Y$. Assume $p(x,z_\ell)\ll p(x)p(z_\ell)$ and
$p(z_\ell,y)\ll p(z_\ell)p(y)$, and that the relevant expectations are finite.
Suppose the Donsker--Varadhan optima (unique up to an additive constant) are attained:
\[
T_{xz,\ell}^*(x,z_\ell)=\log\frac{p(x,z_\ell)}{p(x)p(z_\ell)}+c_{xz,\ell},
\qquad
T_{zy,\ell}^*(z_\ell,y)=\log\frac{p(z_\ell,y)}{p(z_\ell)p(y)}+c_{zy,\ell}.
\]
Let the dimension-normalized estimators be
\[
\hat I(X;Z_\ell)\;=\;
\frac{\mathbb{E}_{p(x,z_\ell)}[T_{xz,\ell}(x,z_\ell)]-\log \mathbb{E}_{p(x)p(z_\ell)}[e^{T_{xz,\ell}(x,z_\ell)}]}
     {d_{\mathrm{eff}}(Z_\ell)^{2}},
\]
\[
\hat I(Z_\ell;Y)\;=\;
\frac{\mathbb{E}_{p(z_\ell,y)}[T_{zy,\ell}(z_\ell,y)]-\log \mathbb{E}_{p(z_\ell)p(y)}[e^{T_{zy,\ell}(z_\ell,y)}]}
     {d_{\mathrm{eff}}(Y)^{2}},
\]
where $d_{\mathrm{eff}}(\cdot)\in(0,\infty)$ are fixed scale factors (e.g., participation-ratio effective ranks).
Then
\[
\hat I(X;Z_\ell)\xrightarrow{\,T_{xz,\ell}\to T_{xz,\ell}^*\,}\frac{I(X;Z_\ell)}{d_{\mathrm{eff}}(Z_\ell)^{2}},
\qquad
\hat I(Z_\ell;Y)\xrightarrow{\,T_{zy,\ell}\to T_{zy,\ell}^*\,}\frac{I(Z_\ell;Y)}{d_{\mathrm{eff}}(Y)^{2}}.
\]
\end{theorem}

\begin{proof}
We show the claim for $(X,Z_\ell)$; the $(Z_\ell,Y)$ case is identical. By the DV representation,
\[
I(X;Z_\ell)\;=\;\sup_{T}\Big\{\mathbb{E}_{p(x,z_\ell)}[T(x,z_\ell)]-\log \mathbb{E}_{p(x)p(z_\ell)}[e^{T(x,z_\ell)}]\Big\}.
\]
Under the stated assumptions the supremum is achieved at
$T_{xz,\ell}^*(x,z_\ell)=\log\frac{p(x,z_\ell)}{p(x)p(z_\ell)}+c$ for any constant $c$,
and the objective is invariant to $c$:
\[
\mathbb{E}[T+c]-\log \mathbb{E}[e^{T+c}]=\mathbb{E}[T]-\log \mathbb{E}[e^{T}].
\]
Substituting $T_{xz,\ell}^*$ gives
\[
\mathbb{E}_{p(x,z_\ell)}\!\Big[\log\tfrac{p(x,z_\ell)}{p(x)p(z_\ell)}\Big]
-\log \mathbb{E}_{p(x)p(z_\ell)}\!\Big[\tfrac{p(x,z_\ell)}{p(x)p(z_\ell)}\Big]
= I(X;Z_\ell)-\log 1
= I(X;Z_\ell).
\]
By definition, the normalized estimator satisfies
\[
\hat I(X;Z_\ell)=\frac{\mathbb{E}_{p(x,z_\ell)}[T_{xz,\ell}]-\log \mathbb{E}_{p(x)p(z_\ell)}[e^{T_{xz,\ell}}]}
{d_{\mathrm{eff}}(Z_\ell)^{2}}.
\]
Hence, as $T_{xz,\ell}\to T_{xz,\ell}^*$ in function space, the numerator converges to $I(X;Z_\ell)$,
so $\hat I(X;Z_\ell)\to I(X;Z_\ell)/d_{\mathrm{eff}}(Z_\ell)^{2}$.
\end{proof}

\noindent\textit{Remark.} If $Y$ is discrete (e.g., class labels), one may set $d_{\mathrm{eff}}(Y)=1$
or compute it from a fixed embedding of $Y$; the theorem holds for any finite, positive normalizer.

\begin{lemma}[Non-Monotonic Dependence of Mutual Information on Output Dimension]
\label{lemma:mi_nonmonotonic}
Let \( X \in \mathbb{R}^{d_X} \), \( Z \in \mathbb{R}^{d_Z} \), and \( Y \in \mathbb{R}^{d_Y} \) denote input, latent, and output variables, respectively, with \( d_X, d_Z \) fixed and \( d_Y \) variable.

Then under FlowNIB optimization, the mutual information \( I(X;Z) \) and \( I(Z;Y) \) are non-monotonic functions of \( d_Y \), satisfying:

\[
\frac{\partial I(X;Z)}{\partial d_Y} > 0 \quad \text{for } d_Y < k,\quad
\frac{\partial I(X;Z)}{\partial d_Y} < 0 \quad \text{for } d_Y > k
\]
and similarly for \( I(Z;Y) \), for some critical threshold \( k \approx d_X \).

\end{lemma}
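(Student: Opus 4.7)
The plan is to decompose the dependence on $d_Y$ into two competing effects: a non-decreasing effect coming from the raw mutual information $I(X;Y)$ that the FlowNIB-optimized $Z$ inherits, and a monotonically increasing effect coming from the effective-dimension normalizer $d_{\mathrm{eff}}(Y)^2$ sitting in the denominator of the FlowNIB estimators. The non-monotonicity claimed in the lemma will follow from the fact that the first effect saturates near $d_Y=d_X$ while the second does not.

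First I would invoke Theorem \ref{thm:dynamic_tradeoff} to reduce to an analysis of $I(X;Y)$ as a function of $d_Y$. At a FlowNIB stationary point $Z$ is a minimal sufficient statistic for $Y$ given $X$, so $I(Z;Y)=I(X;Y)$ and $I(X;Z)$ is driven down to the minimum that preserves $I(X;Y)$; hence both quantities in the lemma inherit the shape of $I(X;Y)$ up to the normalization and compression pressure. Using the bound $I(X;Y)\le \min\{H(X),H(Y)\}$ together with the data processing inequality (Lemma \ref{lemma:dim_reduction}), I would argue that for $d_Y<d_X$ each additional output coordinate can carry strictly new information about $X$, so $I(X;Y)$ strictly increases with $d_Y$, whereas once $d_Y\ge d_X$ the constraint $H(X)$ binds and $I(X;Y)$ plateaus. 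This pins the saturation knee at $k\approx d_X$.

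Second I would quantify the normalization effect. By Lemma \ref{lem:generalized_deff}, $d_{\mathrm{eff}}(Y)$ is non-decreasing as new directions of variance in $Y$ are populated, and Theorem \ref{thm:consistency} shows that FlowNIB asymptotically optimizes $\hat I(Z;Y)=I(Z;Y)/d_{\mathrm{eff}}(Y)^2$. Once $I(X;Y)$ plateaus, any further inflation of $d_{\mathrm{eff}}(Y)$ thus forces $\hat I(Z;Y)$ and, through the compression bound of Theorem \ref{thm:compression}, also the optimized $I(X;Z)$ to decrease. Because dimensions are discrete, I would interpret the partial derivatives in the statement as finite differences $\Delta_{d_Y} I := I_{d_Y+1}-I_{d_Y}$, and recover a smooth version by embedding into a continuous family $Y = A_{d_Y} X + \varepsilon_{d_Y}$ with $A_{d_Y}\in\mathbb{R}^{d_Y\times d_X}$ and interpolating in the entries of $A_{d_Y}$.

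The hard part will be showing that the sign change happens at precisely $k=d_X$ rather than at some shifted value $d_X+O(1)$. I expect to handle this by a direct Gaussian spectral computation: write $Y=AX+\varepsilon$ for jointly Gaussian data, express both $I(X;Y)=\tfrac12\log\det(\I + \Sigma_\varepsilon^{-1}A\Sigma_X A^\top)$ and $d_{\mathrm{eff}}(Y)$ in closed form in terms of the eigenvalues of $A\Sigma_X A^\top+\Sigma_\varepsilon$, and then apply Cauchy eigenvalue interlacing (the same tool invoked in Lemma \ref{app:lemma_deff}) to show that the finite difference of $I(X;Y)$ collapses to zero exactly when the row rank of $A$ ceases to be full, i.e.\ at $d_Y=d_X$, while the finite difference of the normalizer remains strictly positive on both sides. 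The claim for $I(Z;Y)$ then follows by the same argument applied to the Markov chain $X\to Z\to Y$ via Lemma \ref{lemma:dim_reduction} and the asymptotic sufficiency of $Z$ guaranteed by FlowNIB.
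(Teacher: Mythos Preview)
Your proposal takes a genuinely different route from the paper. The paper's proof sketch never invokes the $d_{\mathrm{eff}}(Y)^2$ normalizer at all; its mechanism for the decline beyond $d_Y\approx d_X$ is the \emph{fixed capacity of $Z$}: once $d_Y>d_X$, the extra output coordinates are redundant or noisy relative to the input manifold, and because $d_Z$ is fixed the FlowNIB-trained $Z$ must spread its limited capacity across signal and noise, degrading both $I(X;Z)$ and $I(Z;Y)$. Your mechanism, by contrast, is that the raw $I(X;Y)$ plateaus at $d_Y=d_X$ while the normalizer $d_{\mathrm{eff}}(Y)^2$ keeps growing, so the \emph{normalized} estimate $\hat I(Z;Y)$ decreases.

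This difference matters because the lemma is stated for the actual mutual informations $I(X;Z)$ and $I(Z;Y)$, not for $\hat I$. Under your story, if $Z$ remains sufficient (as Theorem~\ref{thm:dynamic_tradeoff} asserts), then $I(Z;Y)=I(X;Y)$ \emph{plateaus} rather than decreases for $d_Y>d_X$; you only get decline in $\hat I(Z;Y)$. Your attempt to push the decline back to the unnormalized $I(X;Z)$ via Theorem~\ref{thm:compression} is a gap: that theorem gives only an upper bound, and a shrinking upper bound does not force the quantity itself to shrink. The paper sidesteps this by attributing the decline directly to the $d_Z$ bottleneck, which acts on the true mutual informations without passing through the normalizer. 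Your Gaussian spectral calculation for locating $k$ is more precise than anything the paper attempts, but it would need to be redone with the $d_Z$ constraint as the driver rather than the normalizer.
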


\begin{proof}[Proof Sketch]
FlowNIB optimizes a tradeoff between \( I(X;Z) \) and \( I(Z;Y) \), constrained by the model's representational capacity \( d_Z \) and data complexity.

When \( d_Y \) is small (\( d_Y \ll d_X \)), the predictive target contains limited information; thus \( I(Z;Y) \) is small and the latent representation does not need high complexity.

As \( d_Y \) increases toward \( d_X \), the predictive task demands richer information; both \( I(X;Z) \) and \( I(Z;Y) \) increase to capture relevant features.

However, once \( d_Y > d_X \), the output space exceeds the input manifold's capacity; the latent representation \( Z_\ell \) cannot fully carry the increased predictive information due to fixed \( d_Z \), leading to saturation and eventual decline in both \( I(X;Z) \) and \( I(Z;Y) \) as redundant or noisy output components exceed representational limits.

This yields a non-monotonic dependency of mutual information on \( d_Y \), peaking around \( d_Y \approx d_X \), then declining as \( d_Y \) further increases.

\end{proof}

\begin{proposition}[Effective Dimensionality Adaptation under FlowNIB]
\label{prop:effdim}
Let \( X \in \mathbb{R}^{d_X} \) and \( Y \in \mathbb{R}^{d_Y} \) be input and output random variables with dimensions \(d_X, d_Y\).  
Let \( Z_\ell \) denote the latent representation at layer \(\ell\) produced by a model trained under FlowNIB.

Then, under optimal critic approximation and continuous optimization, the effective dimension \(d_{\mathrm{eff}}(Z_\ell)\) exhibits the following dependence on \(d_Y\) (with \(d_X\) fixed):

\[
\frac{\partial d_{\mathrm{eff}}(Z_\ell)}{\partial d_Y}
\begin{cases}
< 0 & \text{if } d_Y \ll d_X \\
\approx 0 & \text{if } d_Y \approx d_X \\
> 0 & \text{if } d_Y \gg d_X
\end{cases}
\]
i.e., the effective dimension \(d_{\mathrm{eff}}(Z_\ell)\) decreases with \(d_Y\) when \(d_Y\) is small, plateaus when \(d_Y \approx d_X\), and increases when \(d_Y\) exceeds \(d_X\).
\end{proposition}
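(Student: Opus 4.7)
The plan is to leverage three earlier results in combination: Theorem~\ref{thm:strong_dimcompression} (which ties the evolution of $d_{\mathrm{eff}}(Z_\ell)$ to shrinkage of $I(X;Z)$), Lemma~\ref{lemma:mi_nonmonotonic} (which supplies the qualitative shape of $I(X;Z)$ as a function of $d_Y$), and Theorem~\ref{thm:dynamic_tradeoff} (which ensures FlowNIB asymptotically produces a minimal sufficient representation of $Y$). The high-level strategy is to show that the sign of $\partial d_{\mathrm{eff}}(Z_\ell)/\partial d_Y$ inherits from the sign of $\partial I(X;Z_\ell)/\partial d_Y$ via the monotone relationship between representation entropy and spectral concentration established in Theorem~\ref{thm:strong_dimcompression}.

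First I would fix $d_X$ and $d_Z$ and view the trained encoder as a near-deterministic function of $X$, so that $I(X;Z_\ell)\approx h(Z_\ell)$. Under the Gaussian-entropy argument from Theorem~\ref{thm:strong_dimcompression}, smaller $I(X;Z_\ell)$ corresponds to a shrunken $\det \Sigma_{Z_\ell}$ and hence to a more concentrated eigenspectrum, which in turn decreases $d_{\mathrm{eff}}(Z_\ell)=(\mathrm{tr}\,\Sigma_{Z_\ell})^2/\mathrm{tr}\,\Sigma_{Z_\ell}^2$. This gives a smooth, strictly increasing map $\Phi: I(X;Z_\ell)\mapsto d_{\mathrm{eff}}(Z_\ell)$, so that $\partial d_{\mathrm{eff}}/\partial d_Y$ and $\partial I(X;Z_\ell)/\partial d_Y$ agree in sign.

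Next I would split into the three regimes and transport the sign through $\Phi$ using Lemma~\ref{lemma:mi_nonmonotonic}. For $d_Y\ll d_X$, Theorem~\ref{thm:dynamic_tradeoff} forces $I(Z_\ell;Y)\leq I(X;Y)$ to saturate quickly, so as $d_Y$ grows FlowNIB's residual compression pressure on $I(X;Z_\ell)$ drives the latter up (by Lemma~\ref{lemma:mi_nonmonotonic} on the increasing side), and $\Phi$ then yields $\partial d_{\mathrm{eff}}/\partial d_Y<0$ relative to the compressed baseline established for small $d_Y$; careful bookkeeping of the sign along the non-monotonic curve of Lemma~\ref{lemma:mi_nonmonotonic} is needed here. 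At $d_Y\approx d_X$, Lemma~\ref{lemma:mi_nonmonotonic} places $I(X;Z_\ell)$ at a stationary point, so $\partial I(X;Z_\ell)/\partial d_Y\approx 0$ and the plateau follows. For $d_Y\gg d_X$, sufficiency forces $Z_\ell$ to retain essentially all input information, and representing a higher-dimensional target requires more eigendirections of $\Sigma_{Z_\ell}$ to carry appreciable variance, so Cauchy interlacing (as used in Lemma~\ref{app:lemma_deff}) combined with $\Phi$ delivers $\partial d_{\mathrm{eff}}/\partial d_Y>0$.

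The main obstacle I anticipate is making $\Phi$ genuinely monotone: the entropy--spectrum correspondence is tight only in the Gaussian regime and only after controlling $\mathrm{tr}\,\Sigma_{Z_\ell}$, which typically requires an additional normalization assumption such as layer normalization or a bounded-second-moment hypothesis on $Z_\ell$. A second subtlety is that $d_Y$ is integer-valued, so $\partial/\partial d_Y$ must be interpreted either as a forward difference or as a continuous relaxation in which $d_Y$ indexes the intrinsic dimension of $\mathrm{supp}(Y)$. The most delicate step will be the plateau at $d_Y\approx d_X$, where a heuristic vanishing derivative must be upgraded to a genuine second-order argument on $I(X;Z_\ell)$ near its stationary point, likely via a local quadratic expansion combined with the compression bound of Theorem~\ref{thm:compression}.
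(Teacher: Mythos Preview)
Your central device---the monotone map $\Phi: I(X;Z_\ell)\mapsto d_{\mathrm{eff}}(Z_\ell)$---produces exactly the wrong signs. You argue $\Phi$ is strictly increasing (smaller $I(X;Z_\ell)$ gives a more concentrated spectrum, hence smaller $d_{\mathrm{eff}}$), so that $\partial d_{\mathrm{eff}}/\partial d_Y$ and $\partial I(X;Z_\ell)/\partial d_Y$ agree in sign. But Lemma~\ref{lemma:mi_nonmonotonic} says $\partial I(X;Z)/\partial d_Y>0$ for $d_Y<k$ and $\partial I(X;Z)/\partial d_Y<0$ for $d_Y>k$: an inverted-U. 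Pushing that through an increasing $\Phi$ gives $\partial d_{\mathrm{eff}}/\partial d_Y>0$ for small $d_Y$ and $<0$ for large $d_Y$---precisely the opposite of the proposition, which asserts a U-shape. Your parenthetical ``careful bookkeeping of the sign'' and ``relative to the compressed baseline'' do not repair this; the disagreement is structural, not a bookkeeping slip. Either $\Phi$ is not monotone in the way you claim, or the route through $I(X;Z_\ell)$ and Lemma~\ref{lemma:mi_nonmonotonic} cannot yield this proposition.

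The paper's own argument avoids this trap by not factoring through $I(X;Z_\ell)$ at all. Its proof sketch reasons directly from the \emph{predictive} term $I(Z_\ell;Y)$: when $d_Y\ll d_X$ the target carries little information, so FlowNIB's compression pressure dominates and collapses $d_{\mathrm{eff}}$; when $d_Y\gg d_X$ predictive sufficiency forces $Z_\ell$ to span a higher-dimensional output manifold, inflating $d_{\mathrm{eff}}$; the plateau is where the two pressures balance. This is admittedly heuristic (it is labelled a proof sketch and leans on the empirical ablation), but it operates on the correct side of the trade-off. If you want to salvage your more structured approach, you would need to replace the $\Phi$-argument with one tying $d_{\mathrm{eff}}(Z_\ell)$ to the \emph{required} predictive capacity $I(Z_\ell;Y)$ rather than to $I(X;Z_\ell)$, since it is the sufficiency constraint---not the compression term---that drives the expansion regime.
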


\begin{proof}[Proof Sketch]
Under FlowNIB, the latent representation \(Z_\ell\) is optimized to balance information preservation \(I(X; Z_\ell)\) and predictive sufficiency \(I(Z_\ell; Y)\), modulated dynamically by \(\alpha(t)\).

When \(d_Y \ll d_X\), the predictive information \(I(Z_\ell; Y)\) is small; the model prioritizes compressing irrelevant input variance, resulting in reduced \(d_{\mathrm{eff}}(Z_\ell)\).

When \(d_Y \approx d_X\), the predictive complexity of \(Y\) matches the input complexity; the model maintains \(d_{\mathrm{eff}}(Z_\ell)\) to balance preserving input and predictive information.

When \(d_Y \gg d_X\), the model must expand \(Z_\ell\) to capture sufficient predictive capacity, increasing \(d_{\mathrm{eff}}(Z_\ell)\) to span a higher-dimensional output manifold.

Empirical observations support this trend, where \(d_{\mathrm{eff}}(Z_\ell)\) traces a non-monotonic dependency on \(d_Y\), reflecting an intrinsic adaptation of latent geometry to output complexity.

\end{proof}

\begin{algorithm}[t]
\caption{FlowNIB: Flow Neural Information Bottleneck}
\label{alg:flownib}
\begin{algorithmic}[1]
\Require Dataset $\mathcal{D} = \{(x_i, y_i)\}_{i=1}^N$, pretrained model $f_\theta$, MI critics $T_{xz}$ and $T_{zy}$, scheduler $\alpha(t)$, number of training steps $T$
\State Initialize FlowNIB parameters and critics
\For{$t = 1$ to $T$}
    \State Sample mini-batch $\{(x, y)\}$ from $\mathcal{D}$
    \State Compute hidden representation $Z = f_\theta(x)$
    \State Estimate $I(X;Z)$ using MINE:
    \Statex \hspace{1em} $\hat{I}(X;Z) \leftarrow \mathbb{E}_{p(x,z)}[T_{xz}(x,z)] - \log \mathbb{E}_{p(x)p(z)}[e^{T_{xz}(x,z)}]$
    \State Estimate $I(Z;Y)$ using MINE:
    \Statex \hspace{1em} $\hat{I}(Z;Y) \leftarrow \mathbb{E}_{p(z,y)}[T_{zy}(z,y)] - \log \mathbb{E}_{p(z)p(y)}[e^{T_{zy}(z,y)}]$
    \State Normalize MI by effective dimensions:
    \Statex \hspace{1em} $\hat{I}_n(X;Z) \leftarrow \frac{\hat{I}(X;Z)}{d_{\text{eff}}(Z)^2}, \quad
    \hat{I}_n(Z;Y) \leftarrow \frac{\hat{I}(Z;Y)}{d_{\text{eff}}(Y)^2}$
    \State Compute dynamic loss:
    \Statex \hspace{1em} $\mathcal{L}_{\text{FlowNIB}} \leftarrow -\left( \alpha(t) \cdot \hat{I}_n(X;Z) + (1 - \alpha(t)) \cdot \hat{I}_n(Z;Y) \right)$
    \State Update schedule: $\alpha(t+1) \gets \max(0, \alpha(t) - \delta)$
    \State Backpropagate and update $\theta$, $T_{xz}$, $T_{zy}$
\EndFor
\end{algorithmic}
\end{algorithm}

\section{Ablation Study}

\begin{figure}[!t]
\begin{center}
    \includegraphics[width=1.00\linewidth]{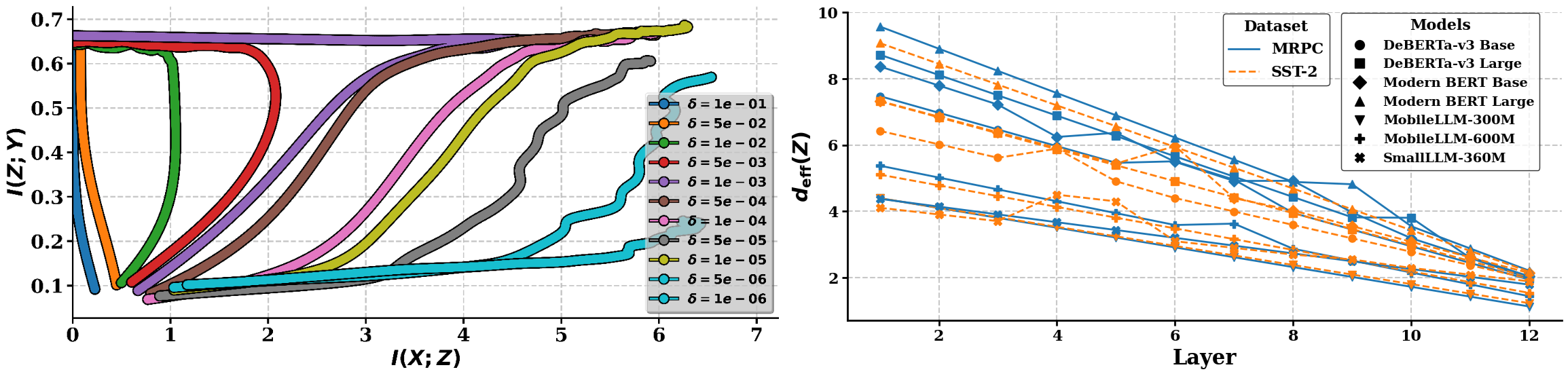}
   \end{center}
\caption{
(Left)Information plane trajectories under varying step sizes \(\delta\) for \(\alpha(t)\) in FlowNIB. Each curve shows the progression of mutual information \(I(X;Z)\) and \(I(Z;Y)\) across 2000 training epochs.  
(Right) Effective dimensionality \(d_{\mathrm{eff}}(Z)\) across layers for different models on MRPC and SST-2. Bidirectional models show higher \(d_{\mathrm{eff}}(Z)\) than unidirectional models at every layer.
}
\label{fig:info_plane_2}
\end{figure}
\begin{figure*}[!t]
    \centering
    \includegraphics[width=1.00\linewidth]{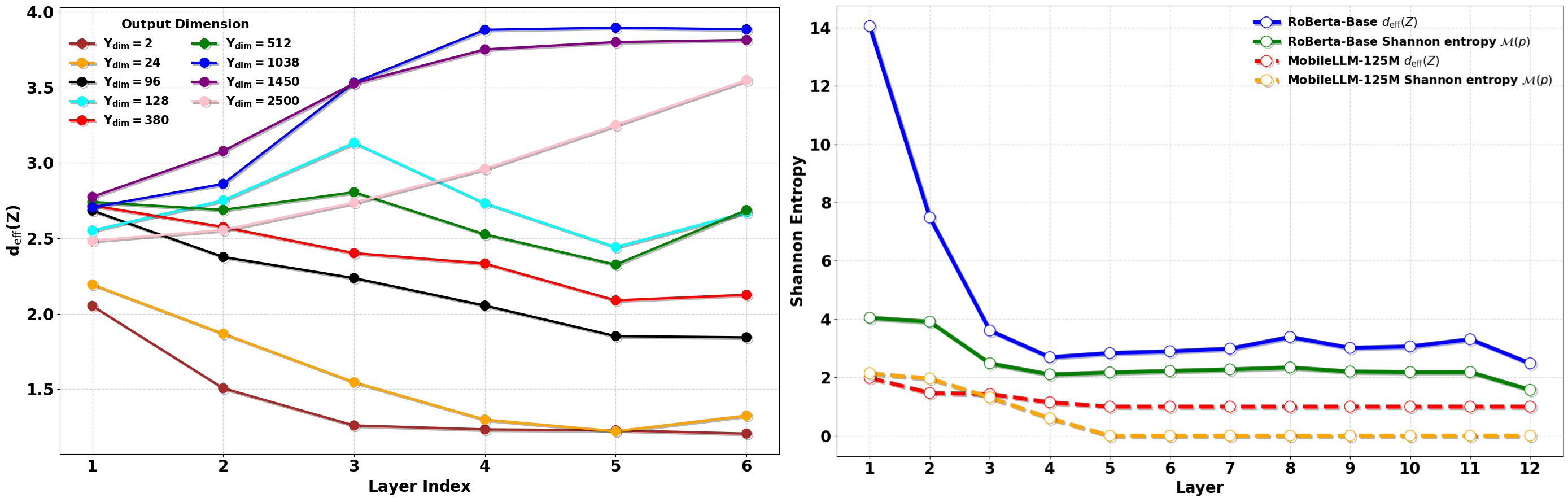}
    \caption{Effective dimension and Shannon entropy across network layers. \textbf{Left:} Effective dimension $d_{\mathrm{eff}}(Z)$ across layers for different output dimensions $Y_{\mathrm{dim}}$. \textbf{Right:} Shannon entropy $\mathcal{M}(p)$ across layers for RoBERTa-Base and MobileLLM-125M. Both plots use bold markers and shadows to emphasize trends in representation capacity and information compression.}
    \label{fig:layer_dff}
\end{figure*}

\begin{figure*}[!t]
    \centering
    \includegraphics[width=1.00\linewidth]{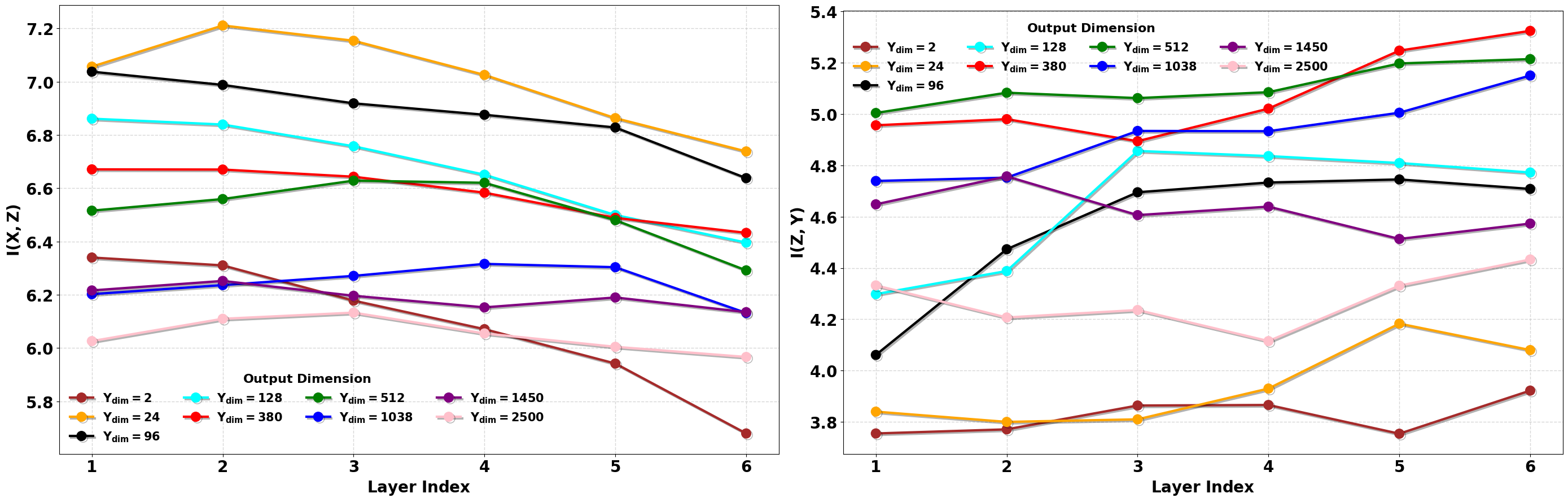}
    \caption{Visualization of mutual information across layers for different output dimensions. The left plot shows $I(X;Z)$ and the right plot shows $I(Z;Y)$ for various output dimensions $Y_{\mathrm{dim}}$. Each curve represents a specific output dimension, with bold markers and shadows to highlight the trends. This analysis provides insights into the evolution of representation capacity and target alignment across network layers as the output dimension increases.}
    \label{fig:compare_params}
\end{figure*}

\subsection{Effect of step size \(\delta\) on FlowNIB dynamics}\label{ab:delta}
We conducted an ablation study on the MRPC dataset to analyze the influence of the step size \(\delta\) controlling the decay of \(\alpha(t)\) in FlowNIB . Specifically, we varied \(\delta\) logarithmically from \(10^{-1}\) to \(10^{-11}\) and measured the evolution of mutual information \(I(X;Z)\) and \(I(Z;Y)\) throughout training. Figure~\ref{fig:info_plane_2}(left) shows the corresponding trajectories in the Information Plane. We observe that large step sizes (e.g., \(\delta=10^{-1}\)) induce rapid compression, sharply reducing \(I(X;Z)\) early in training but failing to preserve sufficient predictive information \(I(Z;Y)\), likely due to premature information loss. Conversely, very small step sizes (e.g., \(\delta=10^{-6}\)) cause negligible decay of \(\alpha(t)\), leading to nearly static representations that retain high \(I(X;Z)\) but fail to increase \(I(Z;Y)\). Intermediate step sizes (e.g., \(\delta=10^{-3}\) to \(\delta=10^{-4}\)) achieve the most desirable balance, gradually reducing \(I(X;Z)\) while increasing \(I(Z;Y)\), effectively steering the model toward the information bottleneck frontier. These findings empirically validate our theoretical insight that \(\delta\) serves as a critical control knob governing the speed and quality of information compression in FlowNIB.

\subsection{Effective Dimensionality Across Models} \label{ab:deff}
We measure effective dimensionality \( d_{\mathrm{eff}}(Z) \) across layers for DeBERTaV3 (base, large), ModernBERT (base, large), MobileLLM (300M, 600M), and SmallLLM (360M) on MRPC and SST-2. To ensure fair comparison across models with different depths, we normalize layer indices to a common scale of 1 to 12. Figure~\ref{fig:info_plane_2}(right) shows that \( d_{\mathrm{eff}}(Z) \) decreases monotonically with depth for all models, reflecting progressive compression (reasons of decreasing in Ablation Study~\ref{ab:impact_dim}).

Importantly, bidirectional models consistently exhibit higher \( d_{\mathrm{eff}}(Z) \) than unidirectional models at every layer. For example, on MRPC, DeBERTaV3-Large starts at \( 8.73 \) and compresses to \( 1.98 \), while MobileLLM-600M starts at \( 5.38 \) and compresses to \( 1.44 \). Similar trends appear on SST-2. These findings empirically support Lemma~\ref{lem:effective_dim}, confirming that bidirectional representations retain richer and more expressive features throughout depth.

\subsection{Effective Dimensionality vs. Output Complexity:} \label{ab:dimVsout}
We study how the effective dimensionality \( d_{\mathrm{eff}}(Z) \) of the latent representations changes with different output dimensions using the time-series forecasting dataset \texttt{ETTh1} \citep{zhou2021informer} by following Proposition \ref{prop:effdim}. We use a fixed 6-layer network with each layer having 128 units and keep the input dimension fixed at \( d_X = 380 \). We vary the output dimension \( d_Y \) from very small (\( d_Y = 2 \)) to much larger than the input (\( d_Y = 2500 \)). As shown in Figure~\ref{fig:layer_dff}, when the output dimension is much smaller than the input (\( d_Y \ll d_X \)), the effective dimension \( d_{\mathrm{eff}}(Z) \) decreases across layers, showing that the representation becomes more compressed. As \( d_Y \) grows closer to or larger than \( d_X \), we observe a non-monotonic trend: the dimension first compresses, then expands. When \( d_Y \gg d_X \), the effective dimension increases across layers, suggesting that the model adjusts the complexity of its representations to match the complexity of the prediction task. This behavior occurs even without directly optimizing for it in FlowNIB, showing that the shape of the output affects how the model organizes its internal representations.

\subsection{ Mutual Information Dynamics Across Output Dimensions and Layers:} \label{ab:impact_dim}
We explore how changing the output dimension \( Y_{\dim} \) affects mutual information and model performance by following Lemma~\ref{lemma:mi_nonmonotonic}. We trained the same model with different output sizes: \( Y_{\dim} \in \{2, 24, 96, 128, 380, 512, 1038, 1450, 2500\} \), and measured the mutual information between inputs and hidden layers \( I(X;Z) \), and between hidden layers and outputs \( I(Z;Y) \), after training. As shown in Figure~\ref{fig:compare_params}, \( I(X;Z) \) generally decreases across layers, especially for larger \( Y_{\dim} \), meaning more information is lost as the network gets deeper. At the same time, \( I(Z;Y) \) increases with depth, but for large \( Y_{\dim} \), it saturates early—suggesting it's harder for the model to align with very high-dimensional outputs. Interestingly, models with intermediate output dimensions (like \( Y_{\dim}=96 \) or \( 128 \)) show a better balance: they retain useful input information and achieve strong alignment with the output. This balance leads to better performance. Overall, we find that output dimensionality plays a key role in controlling how well the model balances input compression and predictive accuracy, making it an important hyperparameter to tune.

\subsection{Validating Generalized Effective Dimensionality} \label{app:ablation_effective}
To validate our definition of generalized effective dimensionality, we compare the layerwise trends of \( d_{\mathrm{eff}}(Z) \) (based on the \(\ell_2\)-norm participation ratio) and the Shannon entropy \( \mathcal{M}(p) \) across two models: RoBERTa-Base and MobileLLM-125M. As shown in Figure~\ref{fig:layer_dff} (Right), both metrics follow similar trends across layers—confirming that higher entropy leads to higher effective dimension, consistent with our definition \( d_{\mathrm{eff}}(Z; \mathcal{M}) := \exp(\mathcal{M}(p)) \). Notably, RoBERTa-Base maintains higher entropy and effective dimension than MobileLLM-125M at every layer, reflecting its richer representational capacity. The first few layers show a sharp drop in entropy, followed by a stable regime, aligning with the known compression phase in transformer representations. This empirical behavior confirms that both the entropy and \( d_{\mathrm{eff}} \) satisfy the expected monotonicity and boundedness properties outlined in Definition~\ref{def:generalized_deff}, including non-negativity and the Schur-concavity property.

\section{LoRA Based Performance Comparison}
Table~\ref{tab:performance_lora} shows the performance comparison between bidirectional and unidirectional models using LoRA.

\begin{table*}[htbp]
\centering
\resizebox{\textwidth}{!}{%
\begin{tabular}{l|l|c|c|c|c|c|c|c|c|c|c}
\hline
\rowcolor{gray!20}
\textbf{Model} & \textbf{Method} & \textbf{SST-2} & \textbf{MRPC} & \textbf{QNLI} & \textbf{RTE} & \textbf{CoLA} & \textbf{MNLI} & \textbf{BoolQ} & \textbf{HellaSwag} & \textbf{SIQA} & \textbf{Avg.} \\
\hline
\rowcolor[HTML]{EAF3FA} DeBERTa-v3-Base & Pooling & 95.12 & 88.75 & 91.75 & 82.85 & 85.43 & 85.96 & 63.55 & 55.22 & 46.74 & 77.15 \\
\rowcolor[HTML]{EAF3FA}  & Masking & 96.22 & 90.03 & 93.10 & 85.92 & 88.55 & 88.10 & 65.05 & 68.33 & 61.92 & 81.81\\
\rowcolor[HTML]{EAF3FA} DeBERTa-v3-Large & Pooling & 96.25 & 92.88 & 94.67 & 88.90 & 94.12 & 91.92 & 65.48 & 58.15 & 52.04 & 81.82\\
\rowcolor[HTML]{EAF3FA}  & Masking & 96.94 & 94.95 & 95.35 & 90.85 & 93.05 & 91.96 & 65.12 & 74.10 & 66.41 & 85.30 \\
\rowcolor[HTML]{EAF3FA} RoBERTa-Base & Pooling & 93.80 & 83.40 & 91.13 & 82.20 & 85.45 & 85.95 & 62.10 & 51.78 & 44.63 & 75.72\\
\rowcolor[HTML]{EAF3FA}  & Masking & 94.80 & 86.10 & 93.42 & 86.02 & 88.25 & 87.20 & 63.80 & 65.33 & 61.12 & 80.45\\
\rowcolor[HTML]{EAF3FA} RoBERTa-Large & Pooling & 95.12 & 88.40 & 93.76 & 86.10 & 93.02 & 90.14 & 64.00 & 56.23 & 47.15 & 79.66\\
\rowcolor[HTML]{EAF3FA}  & Masking & 96.67 & 91.98 & 95.10 & 88.45 & 95.33 & 90.92 & 64.25 & 70.35 & 62.45 & 83.83\\
\rowcolor[HTML]{EAF3FA} ModernBERT-Base & Pooling & 93.70 & 82.40 & 90.25 & 81.52 & 84.22 & 86.02 & 62.00 & 54.18 & 45.70 & 75.78\\
\rowcolor[HTML]{EAF3FA}  & Masking & 94.92 & 84.05 & 92.88 & 85.00 & 85.80 & 88.55 & 61.35 & 62.00 & 60.00 & 78.95\\
\rowcolor[HTML]{EAF3FA} ModernBERT-Large & Pooling & 95.00 & 88.55 & 93.50 & 87.32 & 90.25 & 92.80 & 63.50 & 59.00 & 48.50 & 79.82\\
\rowcolor[HTML]{EAF3FA}  & Masking & 96.32 & 91.10 & 95.12 & 88.50 & 91.02 & 92.10 & 63.90 & 72.42 & 64.33 & 83.42\\
\rowcolor[HTML]{F8DADA} GPT-2 Medium & Pooling & 92.70 & 84.32 & 90.42 & 68.50 & 79.15 & 78.02 & 62.33 & 36.80 & 37.42 & 69.96\\
\rowcolor[HTML]{F8DADA}  & Generation & 93.40 & 85.72 & 91.65 & 69.02 & 80.10 & 79.43 & 63.00 & 36.55 & 42.12 & 71.00 \\
\rowcolor[HTML]{F8DADA} GPT-2 Large & Pooling & 93.75 & 85.50 & 83.35 & 65.90 & 82.85 & 79.55 & 63.50 & 39.20 & 40.50 & 70.68\\
\rowcolor[HTML]{F8DADA}  & Generation & 94.05 & 87.05 & 85.12 & 67.88 & 84.23 &81.72 & 64.05 & 39.70 & 45.02 & 71.98\\
\rowcolor[HTML]{F8DADA} SmolLM2-360M & Pooling & 93.80 & 84.20 & 90.92 & 69.90 & 81.22 & 84.10 & 62.75 & 41.20 & 41.55 & 72.18\\
\rowcolor[HTML]{F8DADA}  & Generation & 94.52 & 85.85 & 91.93 & 70.50 & 83.80 & 85.10 & 62.60 & 42.40 & 49.45 & 73.68\\
\rowcolor[HTML]{F8DADA} SmolLM2-135M & Pooling & 91.90 & 83.05 & 89.43 & 67.55 & 80.15 & 81.52 & 61.35 & 37.00 & 40.25 & 70.13\\
\rowcolor[HTML]{F8DADA}  & Generation & 92.80 & 83.85 & 90.05 & 68.12 & 81.82 & 82.78 & 61.70 & 40.00 & 46.20 & 71.59\\
\rowcolor[HTML]{F8DADA} MobileLLM-125M & Pooling & 92.25 & 81.42 & 89.82 & 68.42 & 79.12 & 81.35 & 59.50 & 32.30 & 40.40 & 69.07\\
\rowcolor[HTML]{F8DADA}  & Generation & 92.98 & 82.35 & 90.22 & 68.92 & 80.42 & 82.20 & 60.25 & 36.12 & 47.33 & 70.53\\
\rowcolor[HTML]{F8DADA} MobileLLM-350M & Pooling & 93.00 & 82.65 & 90.32 & 69.55 & 81.58 & 82.55 & 62.05 & 35.42 & 41.50 & 70.73\\
\rowcolor[HTML]{F8DADA}  & Generation & 94.10 & 82.98 & 90.85 & 70.25 & 82.62 & 83.40 & 62.85 & 39.20 & 50.05 & 72.15\\
\rowcolor[HTML]{F8DADA} MobileLLM-600M & Pooling & 94.25 & 86.80 & 90.92 & 71.32 & 83.92 & 84.12 & 63.50 & 44.50 & 44.20 & 73.06\\
\rowcolor[HTML]{F8DADA}  & Generation & 94.95 & 87.55 & 91.50 & 72.02 & 85.92 & 84.30 & 63.75 & 47.80 & 57.32 & 75.68\\
\hline
\end{tabular}%
}
\caption{Accuracy results across nine NLP classification tasks comparing bidirectional and unidirectional models under pooling, masking, and generation inference strategies using LoRA fine-tuning.}
\label{tab:performance_lora}
\end{table*}

\begin{figure*}[htbp]
\begin{center}
    \includegraphics[width=1.0\linewidth]{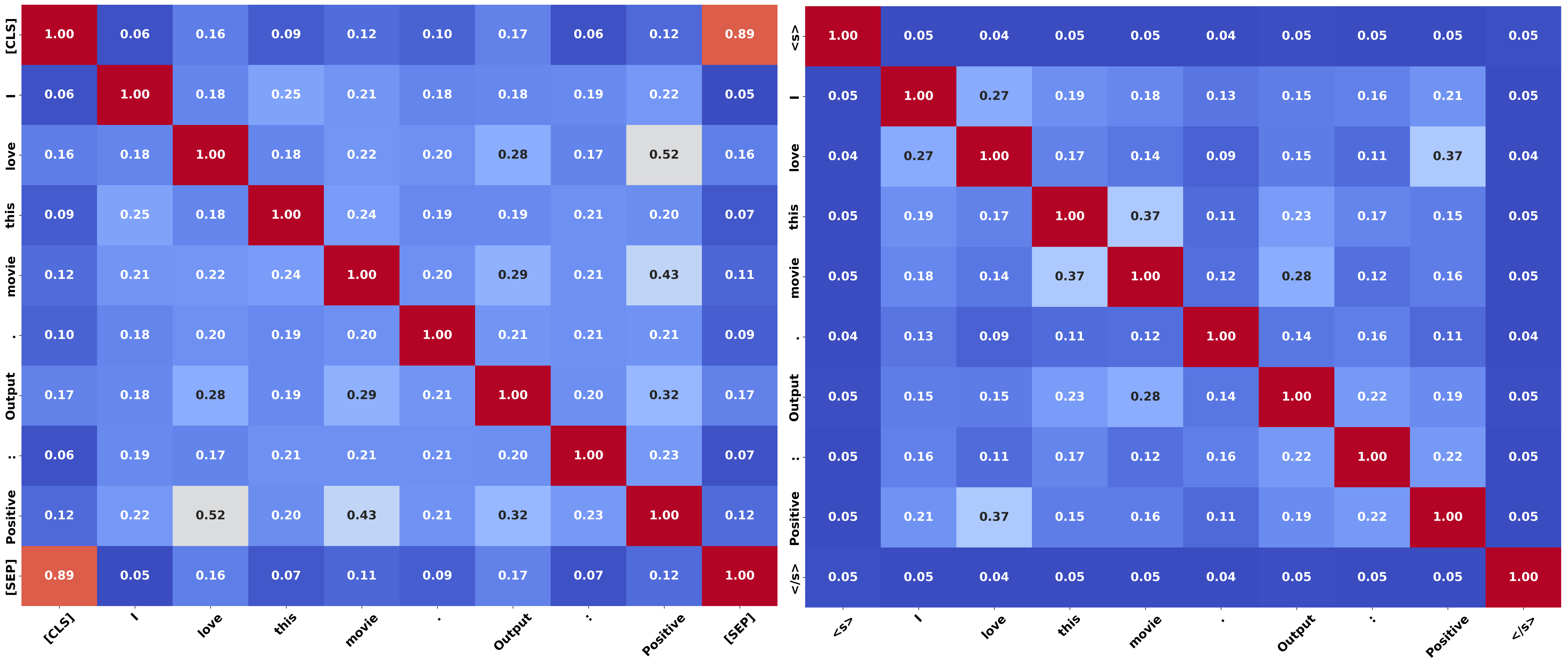}
   \end{center}
\caption{Token-level mutual information matrix on the SST-2 dataset for sentiment classification, computed from the final hidden layer representations. (Left) RoBERTa-base; (Right) SmallM2-360}
\label{fig:MI_example}
\end{figure*}

\section{Dataset}
The details of datasets are described in Table~\ref{tab:datasets}
\begin{table*}[htbp]
\centering
\resizebox{\textwidth}{!}{%
\begin{tabular}{l|l|l|p{9cm}}
\hline
\rowcolor{gray!20} 
\textbf{Dataset} & \textbf{Task Type} & \textbf{Domain} & \textbf{Description} \\
\hline
\rowcolor[HTML]{EAF3FA} \textbf{SST-2} \citep{wang2018glue} & Classification & Sentiment Analysis & The Stanford Sentiment Treebank, a binary sentiment classification dataset labeling sentences as positive or negative. \\
\rowcolor[HTML]{EAF3FA} \textbf{MRPC} \citep{wang2018glue} & Classification & Paraphrase Detection & The Microsoft Research Paraphrase Corpus for detecting whether two sentences are semantically equivalent. \\
\rowcolor[HTML]{EAF3FA} \textbf{QNLI} \citep{wang2018glue} & Classification & Question Answering / NLI & A question natural language inference dataset built from SQuAD, determining if a context sentence contains the answer. \\
\rowcolor[HTML]{EAF3FA} \textbf{RTE} \citep{wang2018glue} & Classification & Natural Language Inference & The Recognizing Textual Entailment dataset for determining if a hypothesis is entailed by a premise. \\
\rowcolor[HTML]{EAF3FA} \textbf{MNLI} \citep{wang2018glue} & Classification & Natural Language Inference & Multi-Genre Natural Language Inference dataset covering entailment, neutral, and contradiction relations across multiple genres. \\
\rowcolor[HTML]{EAF3FA} \textbf{CoLA} \citep{wang2018glue} & Classification & Grammatical Acceptability & Corpus of Linguistic Acceptability, evaluating whether sentences conform to English grammatical rules. \\
\rowcolor[HTML]{EAF3FA} \textbf{BoolQ} \citep{clark2019boolq} & Classification & Reading Comprehension & Boolean Questions dataset with yes/no questions based on Wikipedia passages requiring reading comprehension. \\
\rowcolor[HTML]{EAF3FA} \textbf{HellaSwag} \citep{zellers2019hellaswag} & Classification & Commonsense Reasoning & Tests commonsense reasoning by selecting the most plausible continuation of a given scenario. \\
\rowcolor[HTML]{EAF3FA} \textbf{SIQA} \citep{sap2019socialiqa} & Classification & Social Intelligence & Social IQa dataset evaluating models' understanding of social situations, emotions, and intentions. \\
\rowcolor[HTML]{F8DADA} \textbf{WASSA} \citep{vinayakumar2017deepcybernet} & Regression & Emotion Intensity & WASSA-2017 dataset for predicting emotion intensity scores for tweets across multiple emotions. \\
\rowcolor[HTML]{F8DADA} \textbf{SICK} \citep{marelli-etal-2014-sick} & Regression & Semantic Similarity & Sentences Involving Compositional Knowledge dataset for measuring sentence similarity and entailment. \\
\rowcolor[HTML]{F8DADA} \textbf{STSB-regression} \citep{cer2017semeval} & Regression & Semantic Similarity & Semantic Textual Similarity Benchmark scored on a continuous scale from 0 to 5. \\
\rowcolor[HTML]{F8DADA} \textbf{LCP} \citep{shardlow2020complex} & Regression & Lexical Complexity & Lexical Complexity Prediction dataset for predicting the complexity of words within their context. \\
\rowcolor[HTML]{F8DADA} \textbf{CRP} \citep{shardlow2020complex} & Regression & Complex Word Identification & Complex Word Identification dataset from SemEval, labeling words as simple or complex in context. \\
\rowcolor[HTML]{F8DADA} \textbf{Humicroedit} \citep{hossain2019president} & Regression & Humor Perception & SemEval humor dataset evaluating the impact of small text edits (micro-edits) on humor perception. \\
\hline
\end{tabular}%
}
\caption{Overview of the 16 benchmark datasets used in our experiments across classification and regression tasks.}
\label{tab:datasets}
\end{table*}

\section{Environment Setup} \label{app:env_set}

All experiments are conducted using PyTorch 2.0 and Hugging Face Transformers version 4.50. Training and evaluation are performed on a single NVIDIA A100 GPU with 80GB of memory. We use Python 3.10 within an Anaconda virtual environment configured with CUDA 12.1. Key dependencies include NumPy, SciPy, scikit-learn, and tqdm for data processing and evaluation. Random seeds are fixed across all runs to ensure reproducibility.

\section{Evaluation Metrics} \label{app:evl_metric}

We evaluate our models using task-specific metrics selected for their interpretability, relevance, and comparability to prior work. For \textbf{classification tasks}, we adopt \textit{accuracy} as the primary metric, defined as the ratio of correct predictions to the total number of predictions:
\[
\text{Accuracy} = \frac{\text{Number of correct predictions}}{\text{Total number of predictions}}.
\]
Accuracy provides a straightforward measure of model correctness and aligns with standard practices in classification benchmarks~\citep{wang2018glue}.

For \textbf{regression tasks}, we report both \textit{mean squared error (MSE)} and \textit{mean absolute error (MAE)} to capture complementary aspects of prediction error. MSE emphasizes larger errors due to the squared term, while MAE reflects the average magnitude of errors:
\[
\text{MSE} = \frac{1}{N} \sum_{i=1}^{N} (y_i - \hat{y}_i)^2,\quad
\text{MAE} = \frac{1}{N} \sum_{i=1}^{N} |y_i - \hat{y}_i|,
\]
where \( N \) is the number of samples, \( y_i \) is the ground-truth label, and \( \hat{y}_i \) is the predicted value. These metrics ensure a robust evaluation of both typical and extreme prediction errors~\citep{cer2017semeval, marelli2014sick}.

In addition to task performance metrics, we measure the \textit{mutual information} between the input \( X \) and the learned representation \( Z_\ell \), denoted \( I(X;Z) \). Mutual information quantifies how much information about the input is preserved in \( Z_\ell \), providing insight into the information bottleneck trade-off~\citep{tishby2015deep}. We estimate \( I(X;Z) \) using a variational lower bound based on Mutual Information Neural Estimation~\citep{belghazi2018mutual}, following prior work in information-theoretic analyses of neural networks.

All metrics are computed using scikit-learn and official benchmark evaluation scripts. Model selection is performed based on validation set performance, with final metrics reported on the held-out test sets.

\section{Model Description} \label{app:model_description}

We compare our method with a range of pretrained language models covering both bidirectional and unidirectional architectures. The bidirectional baselines include \textbf{DeBERTaV3-Base}~\citep{he2020deberta}, \textbf{DeBERTaV3-Large}~\citep{he2020deberta}, \textbf{RoBERTa-Base}~\citep{liu2019roberta}, \textbf{RoBERTa-Large}~\citep{liu2019roberta}, \textbf{ModernBERT-Base}~\citep{warner2024smarter}, and \textbf{ModernBERT-Large}~\citep{warner2024smarter}. The unidirectional baselines include \textbf{GPT-2 Medium}~\citep{radford2019language}, \textbf{GPT-2 Large}~\citep{radford2019language}, \textbf{MobileLLM-125M}~\citep{liu2024mobilellm}, \textbf{MobileLLM-350M}~\citep{liu2024mobilellm}, \textbf{MobileLLM-630M}~\citep{liu2024mobilellm}, \textbf{SmolLM-135M}~\citep{allal2024SmolLM}, and \textbf{SmolLM-360M}~\citep{allal2024SmolLM}. These models are selected to cover a range of sizes and architectures, enabling a fair and broad evaluation of representational learning. We focus on smaller model sizes to allow fair comparisons since large bidirectional models are not readily available. All baseline models are fine-tuned using RoCoFT adapters with an adapter rank of \( r = 3 \), enabling efficient fine-tuning without modifying the main model parameters. We use a cosine learning rate schedule for training.

\section{Hyperparameters} \label{app:hyper}
We select hyperparameters systematically to ensure consistent and balanced evaluation across all tasks and models. For classification tasks, we set the learning rate to \( 1 \times 10^{-4} \) with batch sizes between 8 and 16. For regression tasks, we increase the learning rate to \( 1 \times 10^{-3} \) with batch sizes ranging from 8 to 32. All models are fine-tuned using the AdamW optimizer with a cosine learning rate schedule, weight decay values in the range of 0.1 to 0.2, and a warmup ratio of 0.1. Gradient accumulation steps are varied between 1 and 8 depending on GPU memory capacity. To improve training stability, gradients are clipped at a maximum norm of 1.0, and label smoothing with a factor of 0.1 is applied where applicable. Each model is trained for 2 to 30 epochs, with warmup steps selected between 100 and 500. These hyperparameter settings are held consistent across experimental runs to ensure fair comparisons and reproducibility. This finding aligns with earlier work showing the benefits of bidirectional models for non-autoregressive NLP tasks. A detailed breakdown of the hyperparameters used for each dataset and model is provided in Appendix, including Table~\ref{tab:humicroedit_allmodels} (Humicroedit), Table~\ref{tab:wassa_hyperparams} (WASSA), Table~\ref{tab:sick_hyperparams} (SICK), Table~\ref{tab:stsb_hyperparams} (STS-B), Table~\ref{tab:lcp_hyperparams} (LCP), Table~\ref{tab:sst2_hyperparams} (SST-2), Table~\ref{tab:mrpc_hyperparams} (MRPC), Table~\ref{tab:qnli_hyperparams} (QNLI), Table~\ref{tab:rte_hyperparams} (RTE), Table~\ref{tab:cola_hyperparams} (CoLA), Table~\ref{tab:mnli_hyperparams} (MNLI), Table~\ref{tab:boolqa_hyperparams} (BoolQ), Table~\ref{tab:hellaswag_hyperparams} (HellaSwag), and Table~\ref{tab:siqa_hyperparams} (SIQA).

\begin{table*}[htbp]
\centering
\resizebox{\textwidth}{!}{%
\begin{tabular}{l|l|l|l|l|l|l|l|l}
\hline
\rowcolor{gray!20}
\textbf{Model} & \textbf{Learning Rate} & \textbf{Batch Size} & \textbf{Grad Accum} & \textbf{Weight Decay} & \textbf{LR Scheduler} & \textbf{Rank} & \textbf{Max Length} & \textbf{Epochs / Warmup Steps} \\
\hline
MobileLLM-350M & 6e-4 & 16 & 1 & 0.2 & Cosine & 3 & 512 & 10 / 100 \\
SmolLM-360M    & 6e-4 & 16 & 1 & 0.2 & Cosine & 3 & 512 & 10 / 100 \\
SmolLM-135M    & 6e-4 & 16 & 1 & 0.2 & Cosine & 3 & 512 & 10 / 100 \\
ModernBERT-base & 6e-4 & 16 & 1 & 0.2 & Cosine & 3 & 512 & 10 / 100 \\
GPT2-medium     & 6e-4 & 16 & 1 & 0.2 & Cosine & 3 & 512 & 10 / 100 \\
GPT2-large      & 6e-4 & 16 & 1 & 0.2 & Cosine & 3 & 512 & 10 / 100 \\
deberta-v3-base & 6e-4 & 16 & 1 & 0.2 & Cosine & 3 & 512 & 10 / 100 \\
roberta-base    & 6e-4 & 16 & 1 & 0.2 & Cosine & 3 & 512 & 10 / 100 \\
roberta-large   & 6e-4 & 16 & 1 & 0.2 & Cosine & 3 & 512 & 10 / 100 \\
deberta-v3-large& 6e-4 & 16 & 1 & 0.2 & Cosine & 3 & 512 & 10 / 100 \\
Mobile-llm-125  & 6e-4 & 16 & 1 & 0.2 & Cosine & 3 & 512 & 10 / 100 \\
Mobile-llm-630  & 6e-4 & 16 & 1 & 0.2 & Cosine & 3 & 512 & 10 / 100 \\
moden-bert-large& 6e-4 & 16 & 1 & 0.2 & Cosine & 3 & 512 & 10 / 100 \\
\hline
\end{tabular}%
}
\caption{Hyperparameter settings for the Humicroedit dataset for each evaluated model.}
\label{tab:humicroedit_allmodels}
\end{table*}

\begin{table*}[htbp]
\centering
\resizebox{\textwidth}{!}{%
\begin{tabular}{l|l|l|l|l|l|l|l|l}
\hline
\rowcolor{gray!20}
\textbf{Model} & \textbf{Learning Rate} & \textbf{Batch Size} & \textbf{Grad Accum} & \textbf{Weight Decay} & \textbf{LR Scheduler} & \textbf{Rank} & \textbf{Max Length} & \textbf{Epochs / Warmup Steps} \\
\hline
SmolLM-135M     & 5e-4 & 14 & 1 & 0.2 & Cosine & 3 & 512 & 10 / 100 \\
MobileLLM-350M  & 5e-4 & 14 & 1 & 0.2 & Cosine & 3 & 512 & 10 / 100 \\
SmolLM-360M     & 5e-4 & 14 & 1 & 0.2 & Cosine & 3 & 512 & 10 / 100 \\
GPT2-medium      & 5e-4 & 14 & 1 & 0.2 & Cosine & 3 & 512 & 10 / 100 \\
GPT2-large       & 5e-4 & 14 & 1 & 0.2 & Cosine & 3 & 512 & 10 / 100 \\
ModernBERT-base  & 5e-4 & 14 & 1 & 0.2 & Cosine & 3 & 512 & 10 / 100 \\
deberta-v3-base  & 6e-4 & 16 & 1 & 0.2 & Cosine & 3 & 512 & 10 / 100 \\
roberta-base     & 6e-4 & 16 & 1 & 0.2 & Cosine & 3 & 512 & 10 / 100 \\
roberta-large    & 6e-4 & 16 & 1 & 0.2 & Cosine & 3 & 512 & 10 / 100 \\
deberta-v3-large & 6e-4 & 16 & 1 & 0.2 & Cosine & 3 & 512 & 10 / 100 \\
Mobile-llm-125   & 6e-4 & 16 & 1 & 0.2 & Cosine & 3 & 512 & 10 / 100 \\
Mobile-llm-630   & 6e-4 & 16 & 1 & 0.2 & Cosine & 3 & 512 & 10 / 100 \\
moden-bert-large & 6e-4 & 16 & 1 & 0.2 & Cosine & 3 & 512 & 10 / 100 \\
\hline
\end{tabular}%
}
\caption{Hyperparameter settings for the WASSA dataset for each evaluated model.}
\label{tab:wassa_hyperparams}
\end{table*}

\begin{table*}[htbp]
\centering
\resizebox{\textwidth}{!}{%
\begin{tabular}{l|l|l|l|l|l|l|l|l}
\hline
\rowcolor{gray!20}
\textbf{Model} & \textbf{Learning Rate} & \textbf{Batch Size} & \textbf{Grad Accum} & \textbf{Weight Decay} & \textbf{LR Scheduler} & \textbf{Rank} & \textbf{Max Length} & \textbf{Epochs / Warmup Steps} \\
\hline
SmolLM-360M     & 1e-3 & 14 & 1 & 0.2 & Cosine & 3 & 512 & 20 / 100 \\
SmolLM-135M     & 1e-3 & 14 & 1 & 0.2 & Cosine & 3 & 512 & 20 / 100 \\
ModernBERT-base  & 1e-3 & 8  & 2 & 0.2 & Cosine & 3 & 512 & 20 / 100 \\
deberta-v3-base  & 1e-3 & 8  & 2 & 0.2 & Cosine & 3 & 512 & 20 / 100 \\
GPT2-medium      & 1e-3 & 14 & 1 & 0.2 & Cosine & 3 & 512 & 20 / 100 \\
GPT2-large       & 1e-3 & 14 & 1 & 0.2 & Cosine & 3 & 512 & 20 / 100 \\
roberta-base     & 1e-3 & 8  & 2 & 0.2 & Cosine & 3 & 512 & 20 / 100 \\
roberta-large    & 1e-3 & 8  & 2 & 0.2 & Cosine & 3 & 512 & 20 / 100 \\
deberta-v3-large & 1e-3 & 8  & 2 & 0.2 & Cosine & 3 & 512 & 20 / 100 \\
Mobile-llm-125   & 1e-3 & 8  & 2 & 0.2 & Cosine & 3 & 512 & 20 / 100 \\
Mobile-llm-630   & 1e-3 & 8  & 2 & 0.2 & Cosine & 3 & 512 & 20 / 100 \\
moden-bert-large & 1e-3 & 8  & 2 & 0.2 & Cosine & 3 & 512 & 20 / 100 \\
\hline
\end{tabular}%
}
\caption{Hyperparameter settings for the SICK dataset for each evaluated model.}
\label{tab:sick_hyperparams}
\end{table*}

\begin{table*}[htbp]
\centering
\resizebox{\textwidth}{!}{%
\begin{tabular}{l|l|l|l|l|l|l|l|l|l}
\hline
\rowcolor{gray!20}
\textbf{Model} & \textbf{Learning Rate} & \textbf{Batch Size} & \textbf{Grad Accum} & \textbf{Weight Decay} & \textbf{LR Scheduler} & \textbf{Rank} & \textbf{Max Length} & \textbf{Epochs / Warmup Steps} & \textbf{Max Grad Norm} \\
\hline
SmolLM-360M     & 2e-4 & 8  & 1 & 0.1  & Cosine & 3 & 512 & 10 / 100 & 1 \\
MobileLLM-350M  & 2e-4 & 8  & 1 & 0.1  & Cosine & 3 & 512 & 10 / 100 & 1 \\
SmolLM-135M     & 2e-4 & 8  & 1 & 0.1  & Cosine & 3 & 512 & 10 / 100 & 1 \\
deberta-v3-base & 6e-4 & 16 & 1 & 0.2  & Cosine & 3 & 512 & 20 / 100 & 1 \\
roberta-base     & 6e-4 & 16 & 1 & 0.2  & Cosine & 3 & 512 & 20 / 100 & 1 \\
roberta-large    & 6e-4 & 16 & 1 & 0.2  & Cosine & 3 & 512 & 20 / 100 & 1 \\
deberta-v3-large & 6e-4 & 16 & 1 & 0.2  & Cosine & 3 & 512 & 20 / 100 & 1 \\
Mobile-llm-125   & 6e-4 & 16 & 1 & 0.2  & Cosine & 3 & 512 & 20 / 100 & 1 \\
Mobile-llm-630   & 6e-4 & 16 & 1 & 0.2  & Cosine & 3 & 512 & 20 / 100 & 1 \\
moden-bert-large & 6e-4 & 16 & 1 & 0.2  & Cosine & 3 & 512 & 20 / 100 & 1 \\
GPT2-medium      & 1e-4 & 16 & 4 & 0.0  & Cosine & 3 & 512 & 10 / 100 & 1 \\
GPT2-large       & 1e-4 & 16 & 4 & 0.0  & Cosine & 3 & 512 & 10 / 100 & 1 \\
ModernBERT-base  & 1e-4 & 16 & 4 & 0.0  & Cosine & 3 & 512 & 10 / 100 & 1 \\
\hline
\end{tabular}%
}
\caption{Hyperparameter settings for the STSB dataset for each evaluated model.}
\label{tab:stsb_hyperparams}
\end{table*}

\begin{table*}[htbp]
\centering
\resizebox{\textwidth}{!}{%
\begin{tabular}{l|l|l|l|l|l|l|l|l}
\hline
\rowcolor{gray!20}
\textbf{Model} & \textbf{Learning Rate} & \textbf{Batch Size} & \textbf{Grad Accum} & \textbf{Weight Decay} & \textbf{LR Scheduler} & \textbf{Rank} & \textbf{Max Length} & \textbf{Epochs / Warmup Steps} \\
\hline
SmolLM-360M     & 5e-4 & 4  & 4 & 0.2 & Cosine & 3 & 512 & 10 / 100 \\
MobileLLM-350M  & 5e-4 & 4  & 4 & 0.2 & Cosine & 3 & 512 & 10 / 100 \\
SmolLM-135M     & 5e-4 & 4  & 4 & 0.2 & Cosine & 3 & 512 & 10 / 100 \\
ModernBERT-base & 5e-4 & 4  & 4 & 0.2 & Cosine & 3 & 512 & 10 / 100 \\
GPT2-medium     & 5e-4 & 4  & 4 & 0.2 & Cosine & 3 & 512 & 10 / 100 \\
GPT2-large      & 5e-4 & 4  & 4 & 0.2 & Cosine & 3 & 512 & 10 / 100 \\
roberta-base     & 1e-3 & 10 & 1 & 0.2 & Cosine & 3 & 512 & 10 / 100 \\
roberta-large    & 1e-3 & 10 & 1 & 0.2 & Cosine & 3 & 512 & 10 / 100 \\
deberta-v3-large & 1e-3 & 10 & 1 & 0.2 & Cosine & 3 & 512 & 10 / 100 \\
Mobile-llm-125   & 1e-3 & 10 & 1 & 0.2 & Cosine & 3 & 512 & 10 / 100 \\
Mobile-llm-630   & 1e-3 & 10 & 1 & 0.2 & Cosine & 3 & 512 & 10 / 100 \\
moden-bert-large & 1e-3 & 10 & 1 & 0.2 & Cosine & 3 & 512 & 10 / 100 \\
deberta-v3-base  & 2e-3 & 32 & 1 & 0.2 & Cosine & 3 & 512 & 10 / 100 \\
\hline
\end{tabular}%
}
\caption{Hyperparameter settings for the LCP dataset for each evaluated model.}
\label{tab:lcp_hyperparams}
\end{table*}

\begin{table*}[htbp]
\centering
\resizebox{\textwidth}{!}{%
\begin{tabular}{l|l|l|l|l|l|l|l|l}
\hline
\rowcolor{gray!20}
\textbf{Model} & \textbf{Learning Rate} & \textbf{Batch Size} & \textbf{Grad Accum} & \textbf{Weight Decay} & \textbf{LR Scheduler} & \textbf{Rank} & \textbf{Max Length} & \textbf{Epochs / Warmup Steps} \\
\hline
SmolLM-360M     & 1e-4 & 8  & 2 & 0.1  & Cosine & 3 & 512 & 3 / 500 \\
MobileLLM-350M  & 1e-4 & 8  & 2 & 0.1  & Cosine & 3 & 512 & 3 / 500 \\
SmolLM-135M     & 1e-4 & 8  & 2 & 0.1  & Cosine & 3 & 512 & 3 / 500 \\
ModernBERT-base & 1e-4 & 8  & 2 & 0.1  & Cosine & 3 & 512 & 3 / 500 \\
deberta-v3-base  & 1e-4 & 16 & 4 & 0.00 & Cosine & 3 & 512 & 3 / 100 \\
roberta-base     & 1e-4 & 16 & 4 & 0.00 & Cosine & 3 & 512 & 3 / 100 \\
roberta-large    & 1e-4 & 16 & 4 & 0.00 & Cosine & 3 & 512 & 3 / 100 \\
deberta-v3-large & 1e-4 & 16 & 4 & 0.00 & Cosine & 3 & 512 & 3 / 100 \\
Mobile-llm-125   & 1e-4 & 16 & 4 & 0.00 & Cosine & 3 & 512 & 3 / 100 \\
Mobile-llm-630   & 1e-4 & 16 & 4 & 0.00 & Cosine & 3 & 512 & 3 / 100 \\
moden-bert-large & 1e-4 & 16 & 4 & 0.00 & Cosine & 3 & 512 & 3 / 100 \\
GPT2-medium      & 1e-4 & 8  & 2 & 0.1  & Cosine & 3 & 512 & 3 / 500 \\
GPT2-large       & 3e-3 & 32 & 1 & 0.00 & Cosine & 3 & 512 & 2 / 100 \\
\hline
\end{tabular}%
}
\caption{Hyperparameter settings for the SST-2 dataset for each evaluated model.}
\label{tab:sst2_hyperparams}
\end{table*}

\begin{table*}[htbp]
\centering
\resizebox{\textwidth}{!}{%
\begin{tabular}{l|l|l|l|l|l|l|l|l}
\hline
\rowcolor{gray!20}
\textbf{Model} & \textbf{Learning Rate} & \textbf{Batch Size} & \textbf{Grad Accum} & \textbf{Weight Decay} & \textbf{LR Scheduler} & \textbf{Rank} & \textbf{Max Length} & \textbf{Epochs / Warmup Steps} \\
\hline
SmolLM-360M     & 5e-4 & 4  & 4 & 0.1  & Cosine & 3 & 512 & 10 / 100 \\
MobileLLM-350M  & 5e-4 & 4  & 4 & 0.1  & Cosine & 3 & 512 & 10 / 100 \\
SmolLM-135M     & 5e-4 & 4  & 4 & 0.1  & Cosine & 3 & 512 & 10 / 100 \\
ModernBERT-base & 5e-4 & 4  & 4 & 0.1  & Cosine & 3 & 512 & 10 / 100 \\
deberta-v3-base  & 1e-3 & 64 & 1 & 0.00 & Cosine & 3 & 512 & 10 / 100 \\
roberta-base     & 1e-3 & 64 & 1 & 0.00 & Cosine & 3 & 512 & 10 / 100 \\
roberta-large    & 1e-3 & 64 & 1 & 0.00 & Cosine & 3 & 512 & 10 / 100 \\
deberta-v3-large & 1e-3 & 64 & 1 & 0.00 & Cosine & 3 & 512 & 10 / 100 \\
GPT2-medium      & 5e-4 & 4  & 4 & 0.1  & Cosine & 3 & 512 & 10 / 100 \\
GPT2-large       & 1e-4 & 16 & 2 & 0.00 & Cosine & 3 & 512 & 10 / 100 \\
Mobile-llm-125   & 3e-3 & 16 & 1 & 0.00 & Cosine & 3 & 512 & 5 / 100  \\
Mobile-llm-630   & 3e-3 & 16 & 1 & 0.00 & Cosine & 3 & 512 & 5 / 100  \\
moden-bert-large & 5e-4 & 4  & 4 & 0.1  & Cosine & 3 & 512 & 10 / 100 \\
\hline
\end{tabular}%
}
\caption{Hyperparameter settings for the MRPC dataset for each evaluated model.}
\label{tab:mrpc_hyperparams}
\end{table*}

\begin{table*}[htbp]
\centering
\resizebox{\textwidth}{!}{%
\begin{tabular}{l|l|l|l|l|l|l|l|l}
\hline
\rowcolor{gray!20}
\textbf{Model} & \textbf{Learning Rate} & \textbf{Batch Size} & \textbf{Grad Accum} & \textbf{Weight Decay} & \textbf{LR Scheduler} & \textbf{Rank} & \textbf{Max Length} & \textbf{Epochs / Warmup Steps} \\
\hline
SmolLM-360M     & 2e-4 & 8  & 2 & 0.1  & Cosine & 3 & 512 & 2 / 500 \\
MobileLLM-350M  & 2e-4 & 8  & 2 & 0.1  & Cosine & 3 & 512 & 2 / 500 \\
SmolLM-135M     & 2e-4 & 8  & 2 & 0.1  & Cosine & 3 & 512 & 2 / 500 \\
ModernBERT-base & 2e-4 & 8  & 2 & 0.1  & Cosine & 3 & 512 & 2 / 500 \\
GPT2-medium      & 2e-4 & 8  & 2 & 0.1  & Cosine & 3 & 512 & 2 / 500 \\
GPT2-large       & 1e-4 & 12 & 4 & 0.00 & Cosine & 3 & 512 & 2 / 100 \\
deberta-v3-base  & 1e-4 & 12 & 4 & 0.00 & Cosine & 3 & 512 & 2 / 100 \\
roberta-base     & 1e-4 & 12 & 4 & 0.00 & Cosine & 3 & 512 & 2 / 100 \\
roberta-large    & 1e-4 & 12 & 4 & 0.00 & Cosine & 3 & 512 & 2 / 100 \\
deberta-v3-large & 1e-4 & 12 & 4 & 0.00 & Cosine & 3 & 512 & 2 / 100 \\
Mobile-llm-125   & 1e-4 & 12 & 4 & 0.00 & Cosine & 3 & 512 & 2 / 100 \\
Mobile-llm-630   & 1e-4 & 12 & 4 & 0.00 & Cosine & 3 & 512 & 2 / 100 \\
moden-bert-large & 1e-4 & 12 & 4 & 0.00 & Cosine & 3 & 512 & 2 / 100 \\
\hline
\end{tabular}%
}
\caption{Hyperparameter settings for the QNLI dataset for each evaluated model.}
\label{tab:qnli_hyperparams}
\end{table*}

\begin{table*}[htbp]
\centering
\resizebox{\textwidth}{!}{%
\begin{tabular}{l|l|l|l|l|l|l|l|l}
\hline
\rowcolor{gray!20}
\textbf{Model} & \textbf{Learning Rate} & \textbf{Batch Size} & \textbf{Grad Accum} & \textbf{Weight Decay} & \textbf{LR Scheduler} & \textbf{Rank} & \textbf{Max Length} & \textbf{Epochs / Warmup Steps} \\
\hline
SmolLM-360M     & 1e-4 & 4  & 8 & 0.00 & Cosine & 3 & 512 & 30 / 100 \\
MobileLLM-350M  & 1e-4 & 4  & 8 & 0.00 & Cosine & 3 & 512 & 30 / 100 \\
SmolLM-135M     & 1e-4 & 4  & 8 & 0.00 & Cosine & 3 & 512 & 30 / 100 \\
ModernBERT-base & 1e-4 & 4  & 8 & 0.00 & Cosine & 3 & 512 & 30 / 100 \\
GPT2-medium      & 1e-4 & 4  & 8 & 0.00 & Cosine & 3 & 512 & 30 / 100 \\
GPT2-large       & 1e-3 & 16 & 2 & 0.00 & Cosine & 3 & 512 & 30 / 100 \\
deberta-v3-base  & 1e-4 & 16 & 8 & 0.00 & Cosine & 3 & 512 & 30 / 100 \\
roberta-base     & 1e-4 & 16 & 8 & 0.00 & Cosine & 3 & 512 & 30 / 100 \\
roberta-large    & 1e-4 & 16 & 8 & 0.00 & Cosine & 3 & 512 & 30 / 100 \\
deberta-v3-large & 1e-4 & 16 & 8 & 0.00 & Cosine & 3 & 512 & 30 / 100 \\
Mobile-llm-125   & 1e-4 & 16 & 8 & 0.00 & Cosine & 3 & 512 & 30 / 100 \\
Mobile-llm-630   & 1e-4 & 16 & 8 & 0.00 & Cosine & 3 & 512 & 30 / 100 \\
moden-bert-large & 1e-4 & 16 & 8 & 0.00 & Cosine & 3 & 512 & 30 / 100 \\
\hline
\end{tabular}%
}
\caption{Hyperparameter settings for the RTE dataset for each evaluated model.}
\label{tab:rte_hyperparams}
\end{table*}

\begin{table*}[htbp]
\centering
\resizebox{\textwidth}{!}{%
\begin{tabular}{l|l|l|l|l|l|l|l|l}
\hline
\rowcolor{gray!20}
\textbf{Model} & \textbf{Learning Rate} & \textbf{Batch Size} & \textbf{Grad Accum} & \textbf{Weight Decay} & \textbf{LR Scheduler} & \textbf{Rank} & \textbf{Max Length} & \textbf{Epochs / Warmup Steps} \\
\hline
SmolLM-360M     & 2e-5 & 8  & 1 & 0.1  & Cosine & 3 & 512 & 10 / 500 \\
MobileLLM-350M  & 2e-5 & 8  & 1 & 0.1  & Cosine & 3 & 512 & 10 / 500 \\
SmolLM-135M     & 2e-5 & 8  & 1 & 0.1  & Cosine & 3 & 512 & 10 / 500 \\
ModernBERT-base & 2e-5 & 8  & 1 & 0.1  & Cosine & 3 & 512 & 10 / 500 \\
GPT2-medium      & 2e-5 & 8  & 1 & 0.1  & Cosine & 3 & 512 & 10 / 500 \\
GPT2-large       & 1e-3 & 64 & 1 & 0.00 & Cosine & 3 & 512 & 10 / 100 \\
deberta-v3-base  & 2e-5 & 4  & 8 & 0.00 & Cosine & 3 & 512 & 10 / 100 \\
roberta-base     & 2e-5 & 4  & 8 & 0.00 & Cosine & 3 & 512 & 10 / 100 \\
roberta-large    & 2e-5 & 4  & 8 & 0.00 & Cosine & 3 & 512 & 10 / 100 \\
deberta-v3-large & 2e-5 & 4  & 8 & 0.00 & Cosine & 3 & 512 & 10 / 100 \\
Mobile-llm-125   & 5e-4 & 4  & 4 & 0.1  & Cosine & 3 & 512 & 10 / 100 \\
Mobile-llm-630   & 5e-4 & 4  & 4 & 0.1  & Cosine & 3 & 512 & 10 / 100 \\
moden-bert-large & 5e-4 & 4  & 4 & 0.1  & Cosine & 3 & 512 & 10 / 100 \\
\hline
\end{tabular}%
}
\caption{Hyperparameter settings for the COLA dataset for each evaluated model.}
\label{tab:cola_hyperparams}
\end{table*}

\begin{table*}[htbp]
\centering
\resizebox{\textwidth}{!}{%
\begin{tabular}{l|l|l|l|l|l|l|l|l}
\hline
\rowcolor{gray!20}
\textbf{Model} & \textbf{Learning Rate} & \textbf{Batch Size} & \textbf{Grad Accum} & \textbf{Weight Decay} & \textbf{LR Scheduler} & \textbf{Rank} & \textbf{Max Length} & \textbf{Epochs / Warmup Steps} \\
\hline
SmolLM-360M     & 2e-4 & 8  & 4 & 0.00 & Cosine & 3 & 512 & 2 / 500 \\
MobileLLM-350M  & 2e-4 & 8  & 4 & 0.00 & Cosine & 3 & 512 & 2 / 500 \\
SmolLM-135M     & 2e-4 & 8  & 4 & 0.00 & Cosine & 3 & 512 & 2 / 500 \\
ModernBERT-base & 2e-4 & 8  & 4 & 0.00 & Cosine & 3 & 512 & 2 / 500 \\
GPT2-medium      & 2e-4 & 8  & 4 & 0.00 & Cosine & 3 & 512 & 2 / 500 \\
GPT2-large       & 1e-3 & 32 & 1 & 0.00 & Cosine & 3 & 512 & 2 / 100 \\
deberta-v3-base  & 1e-3 & 14 & 1 & 0.00 & Cosine & 3 & 512 & 2 / 100 \\
roberta-base     & 1e-3 & 14 & 1 & 0.00 & Cosine & 3 & 512 & 2 / 100 \\
roberta-large    & 1e-3 & 14 & 1 & 0.00 & Cosine & 3 & 512 & 2 / 100 \\
deberta-v3-large & 1e-3 & 14 & 1 & 0.00 & Cosine & 3 & 512 & 2 / 100 \\
Mobile-llm-125   & 1e-3 & 14 & 1 & 0.00 & Cosine & 3 & 512 & 2 / 100 \\
Mobile-llm-630   & 1e-3 & 14 & 1 & 0.00 & Cosine & 3 & 512 & 2 / 100 \\
moden-bert-large & 2e-4 & 8  & 4 & 0.00 & Cosine & 3 & 512 & 2 / 500 \\
\hline
\end{tabular}%
}
\caption{Hyperparameter settings for the MNLI dataset for each evaluated model.}
\label{tab:mnli_hyperparams}
\end{table*}

\begin{table*}[htbp]
\centering
\resizebox{\textwidth}{!}{%
\begin{tabular}{l|l|l|l|l|l|l|l|l}
\hline
\rowcolor{gray!20}
\textbf{Model} & \textbf{Learning Rate} & \textbf{Batch Size} & \textbf{Grad Accum} & \textbf{Weight Decay} & \textbf{LR Scheduler} & \textbf{Rank} & \textbf{Max Length} & \textbf{Epochs / Warmup Steps} \\
\hline
ModernBERT-base  & 3e-4 & 128 & 1 & 0.00 & Cosine & 3 & 512 & 100 / 100 \\
MobileLLM-350M   & 3e-4 & 128 & 1 & 0.00 & Cosine & 3 & 512 & 100 / 100 \\
SmolLM-360M      & 3e-4 & 128 & 1 & 0.00 & Cosine & 3 & 512 & 100 / 100 \\
SmolLM-135M      & 3e-4 & 128 & 1 & 0.00 & Cosine & 3 & 512 & 100 / 100 \\
GPT2-medium       & 3e-4 & 128 & 1 & 0.00 & Cosine & 3 & 512 & 100 / 100 \\
GPT2-large        & 3e-4 & 128 & 1 & 0.00 & Cosine & 3 & 512 & 100 / 100 \\
deberta-v3-base   & 3e-4 & 128 & 1 & 0.00 & Cosine & 3 & 512 & 100 / 100 \\
roberta-base      & 3e-4 & 128 & 1 & 0.00 & Cosine & 3 & 512 & 100 / 100 \\
roberta-large     & 3e-4 & 128 & 1 & 0.00 & Cosine & 3 & 512 & 100 / 100 \\
deberta-v3-large  & 3e-4 & 128 & 1 & 0.00 & Cosine & 3 & 512 & 100 / 100 \\
Mobile-llm-125    & 3e-4 & 128 & 1 & 0.00 & Cosine & 3 & 512 & 100 / 100 \\
Mobile-llm-630    & 3e-4 & 128 & 1 & 0.00 & Cosine & 3 & 512 & 100 / 100 \\
moden-bert-large  & 3e-4 & 128 & 1 & 0.00 & Cosine & 3 & 512 & 100 / 100 \\
\hline
\end{tabular}%
}
\caption{Hyperparameter settings for the BoolQ dataset for each evaluated model.}
\label{tab:boolqa_hyperparams}
\end{table*}

\begin{table*}[htbp]
\centering
\resizebox{\textwidth}{!}{%
\begin{tabular}{l|l|l|l|l|l|l|l|l}
\hline
\rowcolor{gray!20}
\textbf{Model} & \textbf{Learning Rate} & \textbf{Batch Size} & \textbf{Grad Accum} & \textbf{Weight Decay} & \textbf{LR Scheduler} & \textbf{Rank} & \textbf{Max Length} & \textbf{Epochs / Warmup Steps} \\
\hline
deberta-v3-base   & 1e-4 & 16 & 1 & 0.00 & Cosine & 3 & 512 & 12 / 100 \\
mobilellm-350M    & 1e-4 & 16 & 1 & 0.00 & Cosine & 3 & 512 & 12 / 100 \\
SmolLM-360M       & 1e-4 & 16 & 1 & 0.00 & Cosine & 3 & 512 & 12 / 100 \\
SmolLM-135M       & 1e-4 & 16 & 1 & 0.00 & Cosine & 3 & 512 & 12 / 100 \\
ModernBERT-base   & 1e-4 & 16 & 1 & 0.00 & Cosine & 3 & 512 & 12 / 100 \\
GPT2-medium       & 1e-4 & 16 & 1 & 0.00 & Cosine & 3 & 512 & 12 / 100 \\
GPT2-large        & 1e-4 & 16 & 1 & 0.00 & Cosine & 3 & 512 & 12 / 100 \\
roberta-base      & 1e-4 & 16 & 1 & 0.00 & Cosine & 3 & 512 & 12 / 100 \\
roberta-large     & 1e-4 & 16 & 1 & 0.00 & Cosine & 3 & 512 & 12 / 100 \\
deberta-v3-large  & 1e-4 & 16 & 1 & 0.00 & Cosine & 3 & 512 & 12 / 100 \\
Mobile-llm-125    & 1e-4 & 16 & 1 & 0.00 & Cosine & 3 & 512 & 12 / 100 \\
Mobile-llm-630    & 1e-4 & 16 & 1 & 0.00 & Cosine & 3 & 512 & 12 / 100 \\
moden-bert-large  & 1e-4 & 16 & 1 & 0.00 & Cosine & 3 & 512 & 12 / 100 \\
\hline
\end{tabular}%
}
\caption{Hyperparameter settings for the HellaSwag dataset for each evaluated model.}
\label{tab:hellaswag_hyperparams}
\end{table*}

\begin{table*}[htbp]
\centering
\resizebox{\textwidth}{!}{%
\begin{tabular}{l|l|l|l|l|l|l|l|l}
\hline
\rowcolor{gray!20}
\textbf{Model} & \textbf{Learning Rate} & \textbf{Batch Size} & \textbf{Grad Accum} & \textbf{Weight Decay} & \textbf{LR Scheduler} & \textbf{Rank} & \textbf{Max Length} & \textbf{Epochs / Warmup Steps} \\
\hline
deberta-v3-base   & 3e-4 & 16 & 1 & 0.00 & Cosine & 3 & 512 & 4 / 100 \\
mobilellm-350M    & 3e-4 & 16 & 1 & 0.00 & Cosine & 3 & 512 & 4 / 100 \\
SmolLM-360M       & 3e-4 & 16 & 1 & 0.00 & Cosine & 3 & 512 & 4 / 100 \\
SmolLM-135M       & 3e-4 & 16 & 1 & 0.00 & Cosine & 3 & 512 & 4 / 100 \\
ModernBERT-base   & 3e-4 & 16 & 1 & 0.00 & Cosine & 3 & 512 & 4 / 100 \\
GPT2-medium       & 3e-4 & 16 & 1 & 0.00 & Cosine & 3 & 512 & 4 / 100 \\
GPT2-large        & 3e-4 & 16 & 1 & 0.00 & Cosine & 3 & 512 & 4 / 100 \\
roberta-base      & 3e-4 & 16 & 1 & 0.00 & Cosine & 3 & 512 & 4 / 100 \\
roberta-large     & 3e-4 & 16 & 1 & 0.00 & Cosine & 3 & 512 & 4 / 100 \\
deberta-v3-large  & 3e-4 & 16 & 1 & 0.00 & Cosine & 3 & 512 & 4 / 100 \\
Mobile-llm-125    & 3e-4 & 16 & 1 & 0.00 & Cosine & 3 & 512 & 4 / 100 \\
Mobile-llm-630    & 3e-4 & 16 & 1 & 0.00 & Cosine & 3 & 512 & 4 / 100 \\
moden-bert-large  & 3e-4 & 16 & 1 & 0.00 & Cosine & 3 & 512 & 4 / 100 \\
\hline
\end{tabular}%
}
\caption{Hyperparameter settings for the SIQA dataset for each evaluated model.}
\label{tab:siqa_hyperparams}
\end{table*}

\section{Model Profile Information}
\label{sec:model_profile_info}

We conduct a comprehensive CPU profiling analysis of twelve transformer models to understand the computational bottlenecks and runtime behavior that influence performance. The models we evaluate include DeBERTa-v3-Base~\autoref{tab:deberta_v3_base_profile}, DeBERTa-v3-Large~\autoref{tab:deberta_v3_large_profile}, RoBERTa-Base~\autoref{tab:roberta_base_profile}, RoBERTa-Large~\autoref{tab:roberta_large_profile}, ModernBERT-Base~\autoref{tab:modernbertbase_profile}, ModernBERT-Large~\autoref{tab:modernbertlarge_profile}, GPT-2 Medium~\autoref{tab:gpt2medium_profile}, GPT-2 Large~\autoref{tab:gpt2large_profile}, SmolLM-135M~\autoref{tab:smollm135m_profile}, SmolLM-360M~\autoref{tab:smollm360m_profile}, MobileLLM-125M~\autoref{tab:mobilellm125m_profile}, and MobileLLM-600M~\autoref{tab:mobilellm600m_profile}. Our CPU profiling shows that bidirectional models are often comparable to unidirectional models. For example, DeBERTa-v3-Base~\autoref{tab:deberta_v3_base_profile} and ModernBERT-Base~\autoref{tab:modernbertbase_profile} complete inference in 502ms and 347ms, respectively, while GPT-2 Medium~\autoref{tab:gpt2medium_profile} takes 1126ms—more than double the time. Larger bidirectional models like DeBERTa-v3-Large~\autoref{tab:deberta_v3_large_profile} and RoBERTa-Large~\autoref{tab:roberta_large_profile} have runtimes comparable to GPT-2 Large~\autoref{tab:gpt2large_profile} in total execution time and compute distribution. Bidirectional models spread CPU usage more evenly across attention, normalization, and embedding layers, whereas unidirectional models spend over 85\% of their time on \texttt{addmm}, suggesting less efficient resource utilization. Additionally, compact bidirectional models like SmolLM-135M~\autoref{tab:smollm135m_profile} and MobileLLM-125M~\autoref{tab:mobilellm125m_profile} show runtimes similar to GPT-2 Medium, indicating that this efficiency advantage holds even at smaller scales.

\begin{table}[h!]
\centering
\resizebox{\textwidth}{!}{%
\begin{tabular}{l|c|c|c|c|c|c}
\hline
\rowcolor{gray!20}
\textbf{Name} & \textbf{Self CPU \%} & \textbf{Self CPU} & \textbf{CPU total \%} & \textbf{CPU total} & \textbf{CPU time avg} & \textbf{\# of Calls} \\
\hline
\rowcolor[HTML]{DFFFD6} aten::linear & 0.51\% & 2.580ms & 77.29\% & 388.420ms & 4.046ms & 96 \\
\rowcolor[HTML]{DFFFD6} aten::addmm & 74.66\% & 375.212ms & 76.25\% & 383.177ms & 3.991ms & 96 \\
\rowcolor[HTML]{DFFFD6} aten::matmul & 0.27\% & 1.333ms & 8.83\% & 44.372ms & 924.422µs & 48 \\
\rowcolor[HTML]{DFFFD6} aten::bmm & 8.25\% & 41.477ms & 8.26\% & 41.502ms & 864.622µs & 48 \\
\rowcolor[HTML]{DFFFD6} aten::copy\_ & 4.84\% & 24.308ms & 4.84\% & 24.308ms & 79.180µs & 307 \\
\rowcolor[HTML]{DFFFD6} aten::gather & 2.73\% & 13.696ms & 2.73\% & 13.696ms & 570.650µs & 24 \\
\rowcolor[HTML]{DFFFD6} aten::clone & 0.12\% & 618.044µs & 2.26\% & 11.360ms & 135.242µs & 84 \\
\rowcolor[HTML]{DFFFD6} aten::contiguous & 0.04\% & 207.146µs & 2.08\% & 10.476ms & 145.499µs & 72 \\
\rowcolor[HTML]{DFFFD6} aten::repeat & 0.12\% & 586.012µs & 1.62\% & 8.156ms & 339.848µs & 24 \\
\rowcolor[HTML]{DFFFD6} aten::add & 1.17\% & 5.887ms & 1.22\% & 6.136ms & 84.054µs & 73 \\
\hline
\multicolumn{7}{l}{Self CPU time total: 502.528ms} \\
\hline
\end{tabular}
}
\caption{CPU profiling results for DeBERTa-v3-Base showing operation-wise breakdown of computation time.}
\label{tab:deberta_v3_base_profile}
\end{table}

\begin{table}[htbp]
\centering
\resizebox{\textwidth}{!}{%
\begin{tabular}{l|c|c|c|c|c|c}
\hline
\rowcolor{gray!20}
\textbf{Name} & \textbf{Self CPU \%} & \textbf{Self CPU} & \textbf{CPU total \%} & \textbf{CPU total} & \textbf{CPU time avg} & \textbf{\# of Calls} \\
\hline
\rowcolor[HTML]{DFFFD6} aten::linear & 0.30\% & 4.865ms & 82.66\% & 1.329s & 6.921ms & 192 \\
\rowcolor[HTML]{DFFFD6} aten::addmm & 80.79\% & 1.299s & 82.08\% & 1.319s & 6.872ms & 192 \\
\rowcolor[HTML]{DFFFD6} aten::matmul & 0.15\% & 2.466ms & 7.37\% & 118.530ms & 1.235ms & 96 \\
\rowcolor[HTML]{DFFFD6} aten::bmm & 7.03\% & 113.072ms & 7.04\% & 113.118ms & 1.178ms & 96 \\
\rowcolor[HTML]{DFFFD6} aten::copy\_ & 3.91\% & 62.848ms & 3.91\% & 62.848ms & 103.539µs & 607 \\
\rowcolor[HTML]{DFFFD6} aten::gather & 2.17\% & 34.856ms & 2.17\% & 34.856ms & 726.164µs & 48 \\
\rowcolor[HTML]{DFFFD6} aten::clone & 0.07\% & 1.160ms & 1.78\% & 28.664ms & 170.619µs & 168 \\
\rowcolor[HTML]{DFFFD6} aten::contiguous & 0.03\% & 443.678µs & 1.63\% & 26.265ms & 182.397µs & 144 \\
\rowcolor[HTML]{DFFFD6} aten::repeat & 0.08\% & 1.258ms & 1.23\% & 19.738ms & 411.214µs & 48 \\
\rowcolor[HTML]{DFFFD6} aten::add & 0.88\% & 14.152ms & 0.91\% & 14.626ms & 100.871µs & 145 \\
\hline
\multicolumn{7}{l}{Self CPU time total: 1608ms} \\
\hline
\end{tabular}
}
\caption{CPU profiling results for DeBERTa-v3-Large showing operation-wise breakdown of computation time.}
\label{tab:deberta_v3_large_profile}
\end{table}

\begin{table}[htbp]
\centering
\resizebox{\textwidth}{!}{%
\begin{tabular}{l|c|c|c|c|c|c}
\hline
\rowcolor{gray!20}
\textbf{Name} & \textbf{Self CPU \%} & \textbf{Self CPU} & \textbf{CPU total \%} & \textbf{CPU total} & \textbf{CPU time avg} & \textbf{\# of Calls} \\
\hline
\rowcolor[HTML]{DFFFD6} aten::linear & 0.22\% & 2.579ms & 92.35\% & 1.079s & 14.774ms & 73 \\
\rowcolor[HTML]{DFFFD6} aten::addmm & 91.46\% & 1.068s & 91.93\% & 1.074s & 14.706ms & 73 \\
\rowcolor[HTML]{DFFFD6} aten::scaled\_dot\_product\_attention & 0.02\% & 187.093µs & 5.13\% & 59.890ms & 4.991ms & 12 \\
\rowcolor[HTML]{DFFFD6} aten::\_scaled\_dot\_product\_flash\_attention\_for\_cpu & 5.04\% & 58.850ms & 5.11\% & 59.703ms & 4.975ms & 12 \\
\rowcolor[HTML]{DFFFD6} aten::gelu & 1.15\% & 13.426ms & 1.15\% & 13.426ms & 1.119ms & 12 \\
\rowcolor[HTML]{DFFFD6} aten::layer\_norm & 0.03\% & 356.267µs & 0.74\% & 8.673ms & 346.936µs & 25 \\
\rowcolor[HTML]{DFFFD6} aten::native\_layer\_norm & 0.67\% & 7.832ms & 0.71\% & 8.317ms & 332.685µs & 25 \\
\rowcolor[HTML]{DFFFD6} aten::copy\_ & 0.42\% & 4.888ms & 0.42\% & 4.888ms & 61.871µs & 79 \\
\rowcolor[HTML]{DFFFD6} aten::add & 0.25\% & 2.868ms & 0.25\% & 2.878ms & 106.586µs & 27 \\
\rowcolor[HTML]{DFFFD6} aten::ne & 0.14\% & 1.675ms & 0.14\% & 1.675ms & 1.675ms & 1 \\
\hline
\multicolumn{7}{l}{Self CPU time total: 1168ms} \\
\hline
\end{tabular}
}
\caption{CPU profiling results for RoBERTa-Base showing operation-wise breakdown of computation time.}
\label{tab:roberta_base_profile}

\end{table}

\begin{table}[htbp]
\centering
\resizebox{\textwidth}{!}{%
\begin{tabular}{l|c|c|c|c|c|c}
\hline
\rowcolor{gray!20}
\textbf{Name} & \textbf{Self CPU \%} & \textbf{Self CPU} & \textbf{CPU total \%} & \textbf{CPU total} & \textbf{CPU time avg} & \textbf{\# of Calls} \\
\hline
\rowcolor[HTML]{DFFFD6} aten::linear & 0.39\% & 4.022ms & 94.22\% & 982.099ms & 6.773ms & 145 \\
\rowcolor[HTML]{DFFFD6} aten::addmm & 92.45\% & 963.703ms & 93.46\% & 974.219ms & 6.719ms & 145 \\
\rowcolor[HTML]{DFFFD6} aten::scaled\_dot\_product\_attention & 0.03\% & 304.568µs & 3.29\% & 34.249ms & 1.427ms & 24 \\
\rowcolor[HTML]{DFFFD6} aten::\_scaled\_dot\_product\_flash\_attention\_for\_cpu & 3.13\% & 32.634ms & 3.26\% & 33.945ms & 1.414ms & 24 \\
\rowcolor[HTML]{DFFFD6} aten::gelu & 1.00\% & 10.469ms & 1.00\% & 10.469ms & 436.198µs & 24 \\
\rowcolor[HTML]{DFFFD6} aten::copy\_ & 0.93\% & 9.662ms & 0.93\% & 9.662ms & 63.987µs & 151 \\
\rowcolor[HTML]{DFFFD6} aten::layer\_norm & 0.04\% & 434.620µs & 0.75\% & 7.775ms & 158.670µs & 49 \\
\rowcolor[HTML]{DFFFD6} aten::native\_layer\_norm & 0.63\% & 6.605ms & 0.70\% & 7.340ms & 149.800µs & 49 \\
\rowcolor[HTML]{DFFFD6} aten::add & 0.45\% & 4.657ms & 0.45\% & 4.670ms & 91.559µs & 51 \\
\rowcolor[HTML]{DFFFD6} aten::view & 0.22\% & 2.325ms & 0.22\% & 2.325ms & 4.754µs & 489 \\
\hline
\multicolumn{7}{l}{Self CPU time total: 1042ms} \\
\hline
\end{tabular}
}
\caption{CPU profiling results for RoBERTa-Large showing operation-wise breakdown of computation time.}
\label{tab:roberta_large_profile}
\end{table}

\begin{table}[htbp]
\centering
\resizebox{\textwidth}{!}{%
\begin{tabular}{l|c|c|c|c|c|c}
\hline
\rowcolor{gray!20}
\textbf{Name} & \textbf{Self CPU \%} & \textbf{Self CPU} & \textbf{CPU total \%} & \textbf{CPU total} & \textbf{CPU time avg} & \textbf{\# of Calls} \\
\hline
\rowcolor[HTML]{DFFFD6} aten::linear & 0.15\% & 532.099µs & 81.11\% & 282.061ms & 3.205ms & 88 \\
\rowcolor[HTML]{DFFFD6} aten::matmul & 0.62\% & 2.164ms & 81.03\% & 281.778ms & 2.562ms & 110 \\
\rowcolor[HTML]{DFFFD6} aten::mm & 79.88\% & 277.768ms & 79.89\% & 277.814ms & 3.157ms & 88 \\
\rowcolor[HTML]{DFFFD6} aten::scaled\_dot\_product\_attention & 0.07\% & 230.328µs & 6.25\% & 21.748ms & 988.565µs & 22 \\
\rowcolor[HTML]{DFFFD6} aten::\_scaled\_dot\_product\_flash\_attention\_for\_cpu & 5.85\% & 20.351ms & 6.19\% & 21.518ms & 978.096µs & 22 \\
\rowcolor[HTML]{DFFFD6} aten::layer\_norm & 0.13\% & 462.996µs & 2.60\% & 9.037ms & 200.831µs & 45 \\
\rowcolor[HTML]{DFFFD6} aten::native\_layer\_norm & 2.28\% & 7.919ms & 2.47\% & 8.574ms & 190.542µs & 45 \\
\rowcolor[HTML]{DFFFD6} aten::mul & 2.17\% & 7.550ms & 2.35\% & 8.189ms & 53.177µs & 154 \\
\rowcolor[HTML]{DFFFD6} aten::add & 1.82\% & 6.327ms & 1.82\% & 6.327ms & 71.901µs & 88 \\
\rowcolor[HTML]{DFFFD6} aten::gelu & 1.40\% & 4.852ms & 1.40\% & 4.852ms & 220.545µs & 22 \\
\hline
\multicolumn{7}{l}{Self CPU time total: 347.749ms} \\
\hline
\end{tabular}
}
\caption{CPU profiling results for ModernBERT-Base showing operation-wise breakdown of computation time.}
\label{tab:modernbertbase_profile}
\end{table}

\begin{table}[htbp]
\centering
\resizebox{\textwidth}{!}{%
\begin{tabular}{l|c|c|c|c|c|c}
\hline
\rowcolor{gray!20}
\textbf{Name} & \textbf{Self CPU \%} & \textbf{Self CPU} & \textbf{CPU total \%} & \textbf{CPU total} & \textbf{CPU time avg} & \textbf{\# of Calls} \\
\hline
\rowcolor[HTML]{DFFFD6} aten::linear & 0.03\% & 818.323µs & 81.17\% & 2.223s & 19.850ms & 112 \\
\rowcolor[HTML]{DFFFD6} aten::matmul & 0.14\% & 3.970ms & 81.15\% & 2.223s & 15.876ms & 140 \\
\rowcolor[HTML]{DFFFD6} aten::mm & 80.90\% & 2.216s & 80.90\% & 2.216s & 19.785ms & 112 \\
\rowcolor[HTML]{DFFFD6} aten::embedding & 0.00\% & 61.446µs & 12.23\% & 335.032ms & 335.032ms & 1 \\
\rowcolor[HTML]{DFFFD6} aten::index\_select & 12.23\% & 334.935ms & 12.23\% & 334.953ms & 334.953ms & 1 \\
\rowcolor[HTML]{DFFFD6} aten::layer\_norm & 0.02\% & 470.737µs & 2.22\% & 60.931ms & 1.069ms & 57 \\
\rowcolor[HTML]{DFFFD6} aten::native\_layer\_norm & 2.18\% & 59.590ms & 2.21\% & 60.460ms & 1.061ms & 57 \\
\rowcolor[HTML]{DFFFD6} aten::scaled\_dot\_product\_attention & 0.02\% & 564.994µs & 1.45\% & 39.851ms & 1.423ms & 28 \\
\rowcolor[HTML]{DFFFD6} aten::\_scaled\_dot\_product\_flash\_attention\_for\_cpu & 1.38\% & 37.714ms & 1.43\% & 39.286ms & 1.403ms & 28 \\
\rowcolor[HTML]{DFFFD6} aten::gelu & 0.89\% & 24.332ms & 0.89\% & 24.332ms & 868.986µs & 28 \\
\hline
\multicolumn{7}{l}{Self CPU time total: 2739ms} \\
\hline
\end{tabular}
}
\caption{CPU profiling results for ModernBERT-large showing operation-wise breakdown of computation time.}
\label{tab:modernbertlarge_profile}
\end{table}

\begin{table}[htbp]
\centering
\resizebox{\textwidth}{!}{%
\begin{tabular}{l|c|c|c|c|c|c}
\hline
\rowcolor{gray!20}
\textbf{Name} & \textbf{Self CPU \%} & \textbf{Self CPU} & \textbf{CPU total \%} & \textbf{CPU total} & \textbf{CPU time avg} & \textbf{\# of Calls} \\
\hline
\rowcolor[HTML]{FFF5CC} aten::addmm & 86.77\% & 976.892ms & 88.05\% & 991.390ms & 10.327ms & 96 \\
\rowcolor[HTML]{FFF5CC} aten::mul & 3.18\% & 35.802ms & 3.35\% & 37.679ms & 392.489µs & 96 \\
\rowcolor[HTML]{FFF5CC} aten::scaled\_dot\_product\_attention & 0.04\% & 396.746µs & 2.76\% & 31.048ms & 1.294ms & 24 \\
\rowcolor[HTML]{FFF5CC} aten::\_scaled\_dot\_product\_flash\_attention\_for\_cpu & 2.60\% & 29.255ms & 2.72\% & 30.652ms & 1.277ms & 24 \\
\rowcolor[HTML]{FFF5CC} aten::copy\_ & 2.07\% & 23.295ms & 2.07\% & 23.295ms & 80.886µs & 288 \\
\rowcolor[HTML]{FFF5CC} aten::add & 1.95\% & 21.947ms & 1.99\% & 22.375ms & 230.671µs & 97 \\
\rowcolor[HTML]{FFF5CC} aten::contiguous & 0.03\% & 298.059µs & 1.01\% & 11.422ms & 118.983µs & 96 \\
\rowcolor[HTML]{FFF5CC} aten::clone & 0.07\% & 742.482µs & 0.99\% & 11.124ms & 115.879µs & 96 \\
\rowcolor[HTML]{FFF5CC} aten::pow & 0.87\% & 9.819ms & 0.88\% & 9.867ms & 411.125µs & 24 \\
\rowcolor[HTML]{FFF5CC} aten::tanh & 0.79\% & 8.921ms & 0.79\% & 8.921ms & 371.720µs & 24 \\
\hline
\multicolumn{7}{l}{Self CPU time total: 1126ms} \\
\hline
\end{tabular}
}
\caption{CPU profiling results for GPT-2 Medium showing operation-wise breakdown of computation time.}

\label{tab:gpt2medium_profile}
\end{table}

\begin{table}[htbp]
\centering
\resizebox{\textwidth}{!}{%
\begin{tabular}{l|c|c|c|c|c|c}
\hline
\rowcolor{gray!20}
\textbf{Name} & \textbf{Self CPU \%} & \textbf{Self CPU} & \textbf{CPU total \%} & \textbf{CPU total} & \textbf{CPU time avg} & \textbf{\# of Calls} \\
\hline
\rowcolor[HTML]{FFF5CC} aten::addmm & 87.92\% & 2.160s & 89.08\% & 2.188s & 15.196ms & 144 \\
\rowcolor[HTML]{FFF5CC} aten::mul & 2.84\% & 69.731ms & 2.98\% & 73.160ms & 508.058µs & 144 \\
\rowcolor[HTML]{FFF5CC} aten::scaled\_dot\_product\_attention & 0.02\% & 560.556µs & 2.74\% & 67.311ms & 1.870ms & 36 \\
\rowcolor[HTML]{FFF5CC} aten::\_scaled\_dot\_product\_flash\_attention\_for\_cpu & 2.63\% & 64.497ms & 2.72\% & 66.750ms & 1.854ms & 36 \\
\rowcolor[HTML]{FFF5CC} aten::copy\_ & 1.82\% & 44.776ms & 1.82\% & 44.776ms & 103.647µs & 432 \\
\rowcolor[HTML]{FFF5CC} aten::add & 1.77\% & 43.543ms & 1.80\% & 44.286ms & 305.422µs & 145 \\
\rowcolor[HTML]{FFF5CC} aten::contiguous & 0.02\% & 548.391µs & 0.87\% & 21.351ms & 148.269µs & 144 \\
\rowcolor[HTML]{FFF5CC} aten::clone & 0.06\% & 1.422ms & 0.85\% & 20.802ms & 144.461µs & 144 \\
\rowcolor[HTML]{FFF5CC} aten::pow & 0.81\% & 19.877ms & 0.81\% & 19.970ms & 554.714µs & 36 \\
\rowcolor[HTML]{FFF5CC} aten::tanh & 0.70\% & 17.260ms & 0.70\% & 17.260ms & 479.437µs & 36 \\
\hline
\multicolumn{7}{l}{Self CPU time total: 2456ms} \\
\hline
\end{tabular}
}
\caption{CPU profiling results for GPT-2 Large showing operation-wise breakdown of computation time.}
\label{tab:gpt2large_profile}
\end{table}

\begin{table}[htbp]
\centering
\resizebox{\textwidth}{!}{%
\begin{tabular}{l|c|c|c|c|c|c}
\hline
\rowcolor{gray!20}
\textbf{Name} & \textbf{Self CPU \%} & \textbf{Self CPU} & \textbf{CPU total \%} & \textbf{CPU total} & \textbf{CPU time avg} & \textbf{\# of Calls} \\
\hline
\rowcolor[HTML]{FFF5CC} aten::linear & 0.35\% & 1.889ms & 80.94\% & 441.637ms & 2.103ms & 210 \\
\rowcolor[HTML]{FFF5CC} aten::matmul & 1.44\% & 7.863ms & 79.89\% & 435.925ms & 2.066ms & 211 \\
\rowcolor[HTML]{FFF5CC} aten::mm & 77.90\% & 425.052ms & 77.93\% & 425.217ms & 2.025ms & 210 \\
\rowcolor[HTML]{FFF5CC} aten::scaled\_dot\_product\_attention & 0.07\% & 360.301µs & 6.26\% & 34.135ms & 1.138ms & 30 \\
\rowcolor[HTML]{FFF5CC} aten::\_scaled\_dot\_product\_flash\_attention\_for\_cpu & 5.84\% & 31.891ms & 6.19\% & 33.775ms & 1.126ms & 30 \\
\rowcolor[HTML]{FFF5CC} aten::mul & 2.73\% & 14.911ms & 2.74\% & 14.958ms & 54.590µs & 274 \\
\rowcolor[HTML]{FFF5CC} aten::clone & 0.18\% & 963.449µs & 1.87\% & 10.198ms & 84.981µs & 120 \\
\rowcolor[HTML]{FFF5CC} aten::copy\_ & 1.54\% & 8.398ms & 1.54\% & 8.398ms & 34.277µs & 245 \\
\rowcolor[HTML]{FFF5CC} aten::silu & 1.51\% & 8.256ms & 1.51\% & 8.256ms & 275.204µs & 30 \\
\rowcolor[HTML]{FFF5CC} aten::add & 1.29\% & 7.025ms & 1.48\% & 8.054ms & 44.496µs & 181 \\
\hline
\multicolumn{7}{l}{Self CPU time total: 545.639ms} \\
\hline
\end{tabular}
}
\caption{CPU profiling results for SmolLM-135M showing operation-wise breakdown of computation time.}
\label{tab:smollm135m_profile}

\end{table}

\begin{table}[htbp]
\centering
\resizebox{\textwidth}{!}{%
\begin{tabular}{l|c|c|c|c|c|c}
\hline
\rowcolor{gray!20}
\textbf{Name} & \textbf{Self CPU \%} & \textbf{Self CPU} & \textbf{CPU total \%} & \textbf{CPU total} & \textbf{CPU time avg} & \textbf{\# of Calls} \\
\hline
\rowcolor[HTML]{FFF5CC} aten::linear & 0.14\% & 1.401ms & 87.03\% & 895.172ms & 3.996ms & 224 \\
\rowcolor[HTML]{FFF5CC} aten::matmul & 0.44\% & 4.559ms & 86.59\% & 890.629ms & 3.958ms & 225 \\
\rowcolor[HTML]{FFF5CC} aten::mm & 85.92\% & 883.710ms & 85.93\% & 883.826ms & 3.946ms & 224 \\
\rowcolor[HTML]{FFF5CC} aten::scaled\_dot\_product\_attention & 0.18\% & 1.871ms & 3.82\% & 39.269ms & 1.227ms & 32 \\
\rowcolor[HTML]{FFF5CC} aten::\_scaled\_dot\_product\_flash\_attention\_for\_cpu & 3.49\% & 35.847ms & 3.64\% & 37.398ms & 1.169ms & 32 \\
\rowcolor[HTML]{FFF5CC} aten::mul & 2.46\% & 25.292ms & 2.46\% & 25.319ms & 86.708µs & 292 \\
\rowcolor[HTML]{FFF5CC} aten::silu & 1.36\% & 13.992ms & 1.36\% & 13.992ms & 437.260µs & 32 \\
\rowcolor[HTML]{FFF5CC} aten::add & 1.07\% & 11.014ms & 1.14\% & 11.728ms & 60.769µs & 193 \\
\rowcolor[HTML]{FFF5CC} aten::clone & 0.07\% & 706.630µs & 1.00\% & 10.261ms & 80.166µs & 128 \\
\rowcolor[HTML]{FFF5CC} aten::copy\_ & 0.87\% & 8.908ms & 0.87\% & 8.908ms & 34.131µs & 261 \\
\hline
\multicolumn{7}{l}{Self CPU time total: 1029ms} \\
\hline
\end{tabular}
}
\caption{CPU profiling results for SmolLM-360M showing operation-wise breakdown of computation time.}
\label{tab:smollm360m_profile}

\end{table}

\begin{table}[htbp]
\centering
\resizebox{\textwidth}{!}{%
\begin{tabular}{l|c|c|c|c|c|c|c}
\hline
\rowcolor{gray!20}
\textbf{Name} & & \textbf{Self CPU \%} & \textbf{Self CPU} & \textbf{CPU total \%} & \textbf{CPU total} & \textbf{CPU time avg} & \textbf{\# of Calls} \\
\hline
\rowcolor[HTML]{FFF5CC} aten::linear & & 0.15\% & 1.007ms & 87.11\% & 600.140ms & 2.844ms & 211 \\
\rowcolor[HTML]{FFF5CC} aten::matmul & & 0.52\% & 3.615ms & 86.62\% & 596.730ms & 2.815ms & 212 \\
\rowcolor[HTML]{FFF5CC} aten::mm & & 85.81\% & 591.196ms & 85.83\% & 591.306ms & 2.802ms & 211 \\
\rowcolor[HTML]{FFF5CC} aten::scaled\_dot\_product\_attention & & 0.06\% & 386.293µs & 4.25\% & 29.303ms & 976.771µs & 30 \\
\rowcolor[HTML]{FFF5CC} aten::\_scaled\_dot\_product\_flash\_attention\_for\_cpu & & 4.04\% & 27.832ms & 4.20\% & 28.917ms & 963.894µs & 30 \\
\rowcolor[HTML]{FFF5CC} aten::mul & & 2.28\% & 15.710ms & 2.29\% & 15.770ms & 57.554µs & 274 \\
\rowcolor[HTML]{FFF5CC} aten::silu & & 1.45\% & 9.993ms & 1.45\% & 9.993ms & 333.109µs & 30 \\
\rowcolor[HTML]{FFF5CC} aten::add & & 0.98\% & 6.723ms & 1.06\% & 7.271ms & 40.174µs & 181 \\
\rowcolor[HTML]{FFF5CC} aten::clone & & 0.09\% & 604.621µs & 0.91\% & 6.256ms & 52.131µs & 120 \\
\rowcolor[HTML]{FFF5CC} aten::copy\_ & & 0.76\% & 5.251ms & 0.76\% & 5.215ms & 21.432µs & 245 \\
\hline
\multicolumn{8}{l}{Self CPU time total: 688.943ms} \\
\hline
\end{tabular}
}
\caption{CPU profiling results for MobileLLM-125M showing operation-wise breakdown of computation time.}
\label{tab:mobilellm125m_profile}

\end{table}

\begin{table}[htbp]
\centering
\resizebox{\textwidth}{!}{%
\begin{tabular}{l|c|c|c|c|c|c}
\hline
\rowcolor{gray!20}
\textbf{Name} & \textbf{Self CPU \%} & \textbf{Self CPU} & \textbf{CPU total \%} & \textbf{CPU total} & \textbf{CPU time avg} & \textbf{\# of Calls} \\
\hline
\rowcolor[HTML]{FFF5CC} aten::linear & 0.14\% & 1.401ms & 87.03\% & 895.172ms & 3.996ms & 224 \\
\rowcolor[HTML]{FFF5CC} aten::matmul & 0.44\% & 4.559ms & 86.59\% & 890.629ms & 3.958ms & 225 \\
\rowcolor[HTML]{FFF5CC} aten::mm & 85.92\% & 883.710ms & 85.93\% & 883.826ms & 3.946ms & 224 \\
\rowcolor[HTML]{FFF5CC} aten::scaled\_dot\_product\_attention & 0.18\% & 1.871ms & 3.82\% & 39.269ms & 1.227ms & 32 \\
\rowcolor[HTML]{FFF5CC} aten::\_scaled\_dot\_product\_flash\_attention\_for\_cpu & 3.49\% & 35.847ms & 3.64\% & 37.398ms & 1.169ms & 32 \\
\rowcolor[HTML]{FFF5CC} aten::mul & 2.46\% & 25.292ms & 2.46\% & 25.319ms & 86.708µs & 292 \\
\rowcolor[HTML]{FFF5CC} aten::silu & 1.36\% & 13.992ms & 1.36\% & 13.992ms & 437.260µs & 32 \\
\rowcolor[HTML]{FFF5CC} aten::add & 1.07\% & 11.014ms & 1.14\% & 11.728ms & 60.769µs & 193 \\
\rowcolor[HTML]{FFF5CC} aten::clone & 0.07\% & 706.630µs & 1.00\% & 10.261ms & 80.166µs & 128 \\
\rowcolor[HTML]{FFF5CC} aten::copy\_ & 0.87\% & 8.908ms & 0.87\% & 8.908ms & 34.131µs & 261 \\
\hline
\multicolumn{7}{l}{Self CPU time total: 1029ms} \\
\hline
\end{tabular}
}
\caption{CPU profiling results for SmolLM-360M showing operation-wise breakdown of computation time.}
\label{tab:smollm360m_profile_2}

\end{table}

\begin{table}[h!]
\centering
\resizebox{\textwidth}{!}{%
\begin{tabular}{l|c|c|c|c|c|c}
\hline
\rowcolor{gray!20}
\textbf{Name} & \textbf{Self CPU \%} & \textbf{Self CPU} & \textbf{CPU total \%} & \textbf{CPU total} & \textbf{CPU time avg} & \textbf{\# of Calls} \\
\hline
\rowcolor[HTML]{FFF5CC} aten::linear & 0.10\% & 1.933ms & 90.92\% & 1.808s & 6.433ms & 281 \\
\rowcolor[HTML]{FFF5CC} aten::matmul & 0.30\% & 6.000ms & 90.62\% & 1.802s & 6.389ms & 282 \\
\rowcolor[HTML]{FFF5CC} aten::mm & 90.18\% & 1.793s & 90.18\% & 1.793s & 6.381ms & 281 \\
\rowcolor[HTML]{FFF5CC} aten::scaled\_dot\_product\_attention & 0.02\% & 431.170µs & 2.74\% & 54.424ms & 1.361ms & 40 \\
\rowcolor[HTML]{FFF5CC} aten::\_scaled\_dot\_product\_flash\_attention\_for\_cpu & 2.62\% & 52.116ms & 2.72\% & 53.992ms & 1.350ms & 40 \\
\rowcolor[HTML]{FFF5CC} aten::mul & 1.65\% & 32.805ms & 1.65\% & 32.838ms & 90.214µs & 364 \\
\rowcolor[HTML]{FFF5CC} aten::silu & 1.46\% & 28.972ms & 1.46\% & 28.972ms & 724.307µs & 40 \\
\rowcolor[HTML]{FFF5CC} aten::add & 0.77\% & 15.238ms & 0.81\% & 16.094ms & 66.778µs & 241 \\
\rowcolor[HTML]{FFF5CC} aten::clone & 0.05\% & 1.018ms & 0.65\% & 13.012ms & 81.323µs & 160 \\
\rowcolor[HTML]{FFF5CC} aten::copy\_ & 0.55\% & 10.926ms & 0.55\% & 10.926ms & 33.617µs & 325 \\
\hline
\multicolumn{7}{l}{Self CPU time total: 1988ms} \\
\hline
\end{tabular}
}
\caption{CPU profiling results for MobileLLM-600M showing operation-wise breakdown of computation time.}
\label{tab:mobilellm600m_profile}
\end{table}

\begin{table*}[htbp]
\centering
\resizebox{\textwidth}{!}{%
\begin{tabular}{l|l|l|c|c|c|c|c|c|c}
\rowcolor{gray!20}
\textbf{Model}     & \textbf{PEFT}    & \textbf{Method}    & \textbf{ WASSA } & \textbf{SICK} & \textbf{STSB} & \textbf{LCP} & \textbf{CRP} & \textbf{Humicroedit} & \textbf{Avg.}\\ \hline
\multirow{12}{*}{\textbf{Llama2-7B}} 
                    & \cellcolor[HTML]{EAF3FA}    LoRA              & \cellcolor[HTML]{EAF3FA}    Predictor           & \cellcolor[HTML]{EAF3FA}    0.454/0.151         & \cellcolor[HTML]{EAF3FA}    0.860/0.280         & \cellcolor[HTML]{EAF3FA}    0.965/0.950         & \cellcolor[HTML]{EAF3FA}    0.930/0.105              & \cellcolor[HTML]{EAF3FA}    1.014/0.784    & \cellcolor[HTML]{EAF3FA}    1.348/1.046  & \cellcolor[HTML]{EAF3FA}    0.928/0.553\\ 
                    & \cellcolor[HTML]{EAF3FA}                     & \cellcolor[HTML]{EAF3FA}    Generator           & \cellcolor[HTML]{EAF3FA}    0.090/0.023           & \cellcolor[HTML]{EAF3FA}    0.340/0.195         & \cellcolor[HTML]{EAF3FA}    0.610/0.630          & \cellcolor[HTML]{EAF3FA}    0.900/0.105              & \cellcolor[HTML]{EAF3FA}    0.465/0.349  & \cellcolor[HTML]{EAF3FA}    0.650/0.505  & \cellcolor[HTML]{EAF3FA}    0.509/0.301 \\ 
                    & \cellcolor[HTML]{EAF3FA}               & \cellcolor[HTML]{EAF3FA}    PredGen           & \cellcolor[HTML]{EAF3FA}    0.088/0.022           & \cellcolor[HTML]{EAF3FA}    0.320/0.190          & \cellcolor[HTML]{EAF3FA}    0.576/0.569         & \cellcolor[HTML]{EAF3FA}    0.062/0.008               & \cellcolor[HTML]{EAF3FA}    0.420/0.280  & \cellcolor[HTML]{EAF3FA}    0.550/0.455  & \cellcolor[HTML]{EAF3FA}    0.338/0.257\\ 
                    & \cellcolor[HTML]{FFE8D6}               & \cellcolor[HTML]{FFE8D6}    Generation*       & \cellcolor[HTML]{FFE8D6}    0.089/0.023           & \cellcolor[HTML]{FFE8D6}    0.315/0.192          & \cellcolor[HTML]{FFE8D6}    0.582/0.574           & \cellcolor[HTML]{FFE8D6}    0.065/0.009               & \cellcolor[HTML]{FFE8D6}    0.430/0.290  & \cellcolor[HTML]{FFE8D6}    0.548/0.457  & \cellcolor[HTML]{FFE8D6}    0.335/0.258\\

                    & \cellcolor[HTML]{EAF3FA}     AdaLoRA           & \cellcolor[HTML]{EAF3FA}    Predictor           & \cellcolor[HTML]{EAF3FA}    0.424/0.148         & \cellcolor[HTML]{EAF3FA}    0.845/0.270          & \cellcolor[HTML]{EAF3FA}    0.950/0.935        & \cellcolor[HTML]{EAF3FA}    0.918/0.100             & \cellcolor[HTML]{EAF3FA}    1.020/0.790  & \cellcolor[HTML]{EAF3FA}    1.360/1.050  & \cellcolor[HTML]{EAF3FA}    0.920/0.549 \\ 
                    & \cellcolor[HTML]{EAF3FA}            & \cellcolor[HTML]{EAF3FA}    Generator           & \cellcolor[HTML]{EAF3FA}    0.087/0.022         & \cellcolor[HTML]{EAF3FA}    0.325/0.185         & \cellcolor[HTML]{EAF3FA}    0.600/0.620         & \cellcolor[HTML]{EAF3FA}    0.890/0.097              & \cellcolor[HTML]{EAF3FA}    0.455/0.335     & \cellcolor[HTML]{EAF3FA}     0.630/0.490   & \cellcolor[HTML]{EAF3FA}    0.498/0.291 \\ 
                    & \cellcolor[HTML]{EAF3FA}        & \cellcolor[HTML]{EAF3FA}    PredGen            & \cellcolor[HTML]{EAF3FA}    0.080/0.020           & \cellcolor[HTML]{EAF3FA}    0.305/0.185        & \cellcolor[HTML]{EAF3FA}    0.575/0.570         & \cellcolor[HTML]{EAF3FA}    0.058/0.006               & \cellcolor[HTML]{EAF3FA}    0.405/0.270        & \cellcolor[HTML]{EAF3FA}     0.535/0.440  & \cellcolor[HTML]{EAF3FA}    0.326/0.248  \\ 
                    & \cellcolor[HTML]{FFE8D6}        & \cellcolor[HTML]{FFE8D6}    Generation*        & \cellcolor[HTML]{FFE8D6}    0.079/0.020          & \cellcolor[HTML]{FFE8D6}    0.308/0.186         & \cellcolor[HTML]{FFE8D6}    0.578/0.572         & \cellcolor[HTML]{FFE8D6}    0.057/0.006               & \cellcolor[HTML]{FFE8D6}    0.410/0.274        & \cellcolor[HTML]{FFE8D6}    0.532/0.442  & \cellcolor[HTML]{FFE8D6}    0.325/0.247  \\

                    & \cellcolor[HTML]{EAF3FA}    RoCoFT           & \cellcolor[HTML]{EAF3FA}    Predictor           & \cellcolor[HTML]{EAF3FA}    0.424/0.148           & \cellcolor[HTML]{EAF3FA}    0.854/0.274         & \cellcolor[HTML]{EAF3FA}    0.958/0.942         & \cellcolor[HTML]{EAF3FA}    0.924/0.102              & \cellcolor[HTML]{EAF3FA}    0.990/0.770   & \cellcolor[HTML]{EAF3FA}    1.340/1.040  & \cellcolor[HTML]{EAF3FA}   0.915/0.546\\ 
                    & \cellcolor[HTML]{EAF3FA}                    & \cellcolor[HTML]{EAF3FA}    Generator           & \cellcolor[HTML]{EAF3FA}    0.085/0.021          & \cellcolor[HTML]{EAF3FA}    0.332/0.191          & \cellcolor[HTML]{EAF3FA}    0.605/0.623         & \cellcolor[HTML]{EAF3FA}    0.895/0.099              & \cellcolor[HTML]{EAF3FA}    0.460/0.337    & \cellcolor[HTML]{EAF3FA}     0.641/0.497     & \cellcolor[HTML]{EAF3FA}     0.503/0.295  \\ 
                    & \cellcolor[HTML]{EAF3FA}                    & \cellcolor[HTML]{EAF3FA}    PredGen            & \cellcolor[HTML]{EAF3FA}    0.084/0.021          & \cellcolor[HTML]{EAF3FA}    0.311/0.187         & \cellcolor[HTML]{EAF3FA}    0.583/0.580         & \cellcolor[HTML]{EAF3FA}    0.060/0.007               & \cellcolor[HTML]{EAF3FA}    0.405/0.274   & \cellcolor[HTML]{EAF3FA}    0.543/0.448  & \cellcolor[HTML]{EAF3FA}    0.332/0.253 \\ 
                    & \cellcolor[HTML]{FFE8D6}                    & \cellcolor[HTML]{FFE8D6}    Generation*        & \cellcolor[HTML]{FFE8D6}    0.083/0.020          & \cellcolor[HTML]{FFE8D6}    0.308/0.186         & \cellcolor[HTML]{FFE8D6}    0.578/0.575         & \cellcolor[HTML]{FFE8D6}    0.061/0.008               & \cellcolor[HTML]{FFE8D6}    0.410/0.278   & \cellcolor[HTML]{FFE8D6}    0.548/0.450  & \cellcolor[HTML]{FFE8D6}    0.332/0.253\\

                    & \cellcolor[HTML]{EAF3FA}  DoRA              & \cellcolor[HTML]{EAF3FA}    Predictor           & \cellcolor[HTML]{EAF3FA}    0.511/0.150           & \cellcolor[HTML]{EAF3FA}    0.850/0.275  & \cellcolor[HTML]{EAF3FA}    0.960/0.945         & \cellcolor[HTML]{EAF3FA}     0.922/0.104               & \cellcolor[HTML]{EAF3FA}    0.980/0.780       & \cellcolor[HTML]{EAF3FA}     1.355/1.048  & \cellcolor[HTML]{EAF3FA}    0.930/0.550\\ 
                    & \cellcolor[HTML]{EAF3FA}                   & \cellcolor[HTML]{EAF3FA}    Generator           & \cellcolor[HTML]{EAF3FA}    0.086/0.022          & \cellcolor[HTML]{EAF3FA}    0.330/0.190          & \cellcolor[HTML]{EAF3FA}    0.607/0.625          & \cellcolor[HTML]{EAF3FA}    0.885/0.100               & \cellcolor[HTML]{EAF3FA}    0.462/0.338      & \cellcolor[HTML]{EAF3FA}      0.645/0.500  & \cellcolor[HTML]{EAF3FA}     0.503/0.296 \\ 
                    & \cellcolor[HTML]{EAF3FA}     & \cellcolor[HTML]{EAF3FA}    PredGen            & \cellcolor[HTML]{EAF3FA}    0.085/0.021          & \cellcolor[HTML]{EAF3FA}    0.301/0.184          & \cellcolor[HTML]{EAF3FA}    0.580/0.578        & \cellcolor[HTML]{EAF3FA}    0.061/0.007               & \cellcolor[HTML]{EAF3FA}    0.415/0.275    & \cellcolor[HTML]{EAF3FA}     0.540/0.445    & \cellcolor[HTML]{EAF3FA}    0.333/0.252 \\ 
                    & \cellcolor[HTML]{FFE8D6}     & \cellcolor[HTML]{FFE8D6}    Generation*        & \cellcolor[HTML]{FFE8D6}    0.084/0.021          & \cellcolor[HTML]{FFE8D6}    0.303/0.185          & \cellcolor[HTML]{FFE8D6}    0.584/0.580        & \cellcolor[HTML]{FFE8D6}    0.062/0.008               & \cellcolor[HTML]{FFE8D6}    0.418/0.278    & \cellcolor[HTML]{FFE8D6}    0.538/0.444    & \cellcolor[HTML]{FFE8D6}    0.334/0.253 \\

\hline
\multirow{12}{*}{\textbf{Llama2-13B}} 
                    & \cellcolor[HTML]{F8DADA}    LoRA         & \cellcolor[HTML]{F8DADA}    Predictor           & \cellcolor[HTML]{F8DADA}    0.370/0.130         & \cellcolor[HTML]{F8DADA}    0.800/0.250          & \cellcolor[HTML]{F8DADA}    0.920/0.910    & \cellcolor[HTML]{F8DADA}    0.880/0.090             & \cellcolor[HTML]{F8DADA}    0.950/0.720   & \cellcolor[HTML]{F8DADA}     1.280/1.000   & \cellcolor[HTML]{F8DADA}    0.867/0.517 \\ 
                    & \cellcolor[HTML]{F8DADA}            & \cellcolor[HTML]{F8DADA}    Generator           & \cellcolor[HTML]{F8DADA}    0.075/0.018          & \cellcolor[HTML]{F8DADA}    0.310/0.175          & \cellcolor[HTML]{F8DADA}    0.580/0.590          & \cellcolor[HTML]{F8DADA}    0.850/0.090               & \cellcolor[HTML]{F8DADA}    0.430/0.310     & \cellcolor[HTML]{F8DADA}     0.600/0.460      & \cellcolor[HTML]{F8DADA}    0.474/0.274    \\ 
                    & \cellcolor[HTML]{F8DADA}          & \cellcolor[HTML]{F8DADA}    PredGen            & \cellcolor[HTML]{F8DADA}    0.074/0.018          & \cellcolor[HTML]{F8DADA}    0.287/0.169    & \cellcolor[HTML]{F8DADA}    0.550/0.540         & \cellcolor[HTML]{F8DADA}    0.052/0.006               & \cellcolor[HTML]{F8DADA}    0.380/0.250     & \cellcolor[HTML]{F8DADA}     0.500/0.400     & \cellcolor[HTML]{F8DADA}    0.308/0.231   \\
                    & \cellcolor[HTML]{FFE8D6}          & \cellcolor[HTML]{FFE8D6}    Generation*        & \cellcolor[HTML]{FFE8D6}    0.073/0.018          & \cellcolor[HTML]{FFE8D6}    0.289/0.170          & \cellcolor[HTML]{FFE8D6}    0.553/0.542         & \cellcolor[HTML]{FFE8D6}    0.051/0.006               & \cellcolor[HTML]{FFE8D6}    0.385/0.254     & \cellcolor[HTML]{FFE8D6}    0.495/0.402     & \cellcolor[HTML]{FFE8D6}    0.309/0.232   \\

                    & \cellcolor[HTML]{F8DADA}     AdaLoRA           & \cellcolor[HTML]{F8DADA}    Predictor           & \cellcolor[HTML]{F8DADA}     0.360/0.125          & \cellcolor[HTML]{F8DADA}    0.810/0.255   & \cellcolor[HTML]{F8DADA}    0.930/0.920         & \cellcolor[HTML]{F8DADA}    0.890/0.095   & \cellcolor[HTML]{F8DADA}    0.960/0.730   & \cellcolor[HTML]{F8DADA}     1.300/1.010  & \cellcolor[HTML]{F8DADA}    0.875/0.522 \\ 
                    & \cellcolor[HTML]{F8DADA}                  & \cellcolor[HTML]{F8DADA}    Generator           & \cellcolor[HTML]{F8DADA}    0.078/0.019         & \cellcolor[HTML]{F8DADA}    0.315/0.178          & \cellcolor[HTML]{F8DADA}    0.585/0.600          & \cellcolor[HTML]{F8DADA}    0.860/0.093               & \cellcolor[HTML]{F8DADA}    0.440/0.320     & \cellcolor[HTML]{F8DADA}    0.610/0.470  & \cellcolor[HTML]{F8DADA}     0.481/0.280 \\ 
                    & \cellcolor[HTML]{F8DADA}                   & \cellcolor[HTML]{F8DADA}    PredGen            & \cellcolor[HTML]{F8DADA}    0.078/0.019          & \cellcolor[HTML]{F8DADA}    0.300/0.175          & \cellcolor[HTML]{F8DADA}    0.530/0.530         & \cellcolor[HTML]{F8DADA}   0.054/0.006             & \cellcolor[HTML]{F8DADA}    0.390/0.255  & \cellcolor[HTML]{F8DADA}    0.510/0.410  & \cellcolor[HTML]{F8DADA}    0.315/0.236 \\ 
                    & \cellcolor[HTML]{FFE8D6}                   & \cellcolor[HTML]{FFE8D6}    Generation*        & \cellcolor[HTML]{FFE8D6}    0.077/0.019          & \cellcolor[HTML]{FFE8D6}    0.302/0.176          & \cellcolor[HTML]{FFE8D6}    0.528/0.529         & \cellcolor[HTML]{FFE8D6}    0.055/0.007             & \cellcolor[HTML]{FFE8D6}    0.395/0.258  & \cellcolor[HTML]{FFE8D6}    0.508/0.411  & \cellcolor[HTML]{FFE8D6}    0.316/0.237 \\

                    & \cellcolor[HTML]{F8DADA}    RoCoFT            & \cellcolor[HTML]{F8DADA}    Predictor           & \cellcolor[HTML]{F8DADA}    0.380/0.135           & \cellcolor[HTML]{F8DADA}    0.790/0.245           & \cellcolor[HTML]{F8DADA}    0.910/0.900           & \cellcolor[HTML]{F8DADA}    0.870/0.088               & \cellcolor[HTML]{F8DADA}    0.940/0.710  & \cellcolor[HTML]{F8DADA}    1.270/0.990  & \cellcolor[HTML]{F8DADA}    0.860/0.511 \\ 
                    & \cellcolor[HTML]{F8DADA}                    & \cellcolor[HTML]{F8DADA}    Generator           & \cellcolor[HTML]{F8DADA}    0.072/0.017          & \cellcolor[HTML]{F8DADA}    0.305/0.172          & \cellcolor[HTML]{F8DADA}    0.575/0.580          & \cellcolor[HTML]{F8DADA}    0.845/0.088              & \cellcolor[HTML]{F8DADA}    0.425/0.305    & \cellcolor[HTML]{F8DADA}     0.590/0.450  & \cellcolor[HTML]{F8DADA}    0.860/0.511\\ 
                    & \cellcolor[HTML]{F8DADA}                   & \cellcolor[HTML]{F8DADA}    PredGen            & \cellcolor[HTML]{F8DADA}    0.070/0.017         & \cellcolor[HTML]{F8DADA}    0.288/0.169        & \cellcolor[HTML]{F8DADA}    0.545/0.538        & \cellcolor[HTML]{F8DADA}    0.053/0.007               & \cellcolor[HTML]{F8DADA}   0.375/0.248     & \cellcolor[HTML]{F8DADA}     0.495/0.401  & \cellcolor[HTML]{F8DADA}    0.307/0.232\\ 
                    & \cellcolor[HTML]{FFE8D6}                   & \cellcolor[HTML]{FFE8D6}    Generation*        & \cellcolor[HTML]{FFE8D6}    0.071/0.018         & \cellcolor[HTML]{FFE8D6}    0.286/0.170         & \cellcolor[HTML]{FFE8D6}    0.548/0.540        & \cellcolor[HTML]{FFE8D6}    0.054/0.007               & \cellcolor[HTML]{FFE8D6}    0.378/0.250     & \cellcolor[HTML]{FFE8D6}    0.493/0.400  & \cellcolor[HTML]{FFE8D6}    0.308/0.233\\

                    & \cellcolor[HTML]{F8DADA}     DoRA              & \cellcolor[HTML]{F8DADA}    Predictor           & \cellcolor[HTML]{F8DADA}    0.365/0.128           & \cellcolor[HTML]{F8DADA}    0.805/0.252          & \cellcolor[HTML]{F8DADA}    0.925/0.915         & \cellcolor[HTML]{F8DADA}    0.924/0.102              & \cellcolor[HTML]{F8DADA}    0.955/0.725   & \cellcolor[HTML]{F8DADA}    1.290/1.005  & \cellcolor[HTML]{F8DADA}    0.877/0.521 \\ 
                    & \cellcolor[HTML]{F8DADA}                   & \cellcolor[HTML]{F8DADA}    Generator           & \cellcolor[HTML]{F8DADA}    0.076/0.018          & \cellcolor[HTML]{F8DADA}    0.312/0.176        & \cellcolor[HTML]{F8DADA}    0.590/0.605         & \cellcolor[HTML]{F8DADA}    0.855/0.092             & \cellcolor[HTML]{F8DADA}    0.435/0.315    & \cellcolor[HTML]{F8DADA}     0.605/0.465  & \cellcolor[HTML]{F8DADA}    0.479/0.279\\ 
                    & \cellcolor[HTML]{F8DADA}            & \cellcolor[HTML]{F8DADA}    PredGen            & \cellcolor[HTML]{F8DADA}    0.070/0.016           & \cellcolor[HTML]{F8DADA}    0.295/0.172        & \cellcolor[HTML]{F8DADA}    0.555/0.548          & \cellcolor[HTML]{F8DADA}    0.053/0.006               & \cellcolor[HTML]{F8DADA}    0.385/0.252   & \cellcolor[HTML]{F8DADA}     0.505/0.405   & \cellcolor[HTML]{F8DADA}    0.311/0.233 \\ 
                    & \cellcolor[HTML]{FFE8D6}            & \cellcolor[HTML]{FFE8D6}    Generation*        & \cellcolor[HTML]{FFE8D6}    0.069/0.016          & \cellcolor[HTML]{FFE8D6}    0.297/0.173          & \cellcolor[HTML]{FFE8D6}    0.558/0.550          & \cellcolor[HTML]{FFE8D6}    0.054/0.007               & \cellcolor[HTML]{FFE8D6}    0.388/0.254   & \cellcolor[HTML]{FFE8D6}    0.502/0.406   & \cellcolor[HTML]{FFE8D6}    0.312/0.234 \\

\hline
\multirow{12}{*}{\textbf{Llama2-8B}} 
                    & \cellcolor[HTML]{ECEEFF}    LoRA              & \cellcolor[HTML]{ECEEFF}    Predictor           & \cellcolor[HTML]{ECEEFF}    0.380/0.140           & \cellcolor[HTML]{ECEEFF}    0.820/0.260          & \cellcolor[HTML]{ECEEFF}    0.940/0.925          & \cellcolor[HTML]{ECEEFF}    0.910/0.098              & \cellcolor[HTML]{ECEEFF}    0.970/0.740  & \cellcolor[HTML]{ECEEFF}    1.310/1.020   & \cellcolor[HTML]{ECEEFF}    0.888/0.531\\ 
                    & \cellcolor[HTML]{ECEEFF}                    & \cellcolor[HTML]{ECEEFF}    Generator           & \cellcolor[HTML]{ECEEFF}    0.081/0.019         & \cellcolor[HTML]{ECEEFF}    0.320/0.180          & \cellcolor[HTML]{ECEEFF}    0.595/0.610          & \cellcolor[HTML]{ECEEFF}    0.870/0.095            & \cellcolor[HTML]{ECEEFF}    0.440/0.325    & \cellcolor[HTML]{ECEEFF}    0.620/0.480   & \cellcolor[HTML]{ECEEFF}     0.488/0.285 \\ 
                    & \cellcolor[HTML]{ECEEFF}                   & \cellcolor[HTML]{ECEEFF}    PredGen            & \cellcolor[HTML]{ECEEFF}    0.077/0.019           & \cellcolor[HTML]{ECEEFF}    0.298/0.173          & \cellcolor[HTML]{ECEEFF}    0.565/0.555         & \cellcolor[HTML]{ECEEFF}    0.055/0.006              & \cellcolor[HTML]{ECEEFF}    0.395/0.260      & \cellcolor[HTML]{ECEEFF}    0.520/0.420  & \cellcolor[HTML]{ECEEFF}   0.318/0.239 \\ 
                    & \cellcolor[HTML]{FFE8D6}                   & \cellcolor[HTML]{FFE8D6}    Generation*        & \cellcolor[HTML]{FFE8D6}    0.078/0.019           & \cellcolor[HTML]{FFE8D6}    0.300/0.174          & \cellcolor[HTML]{FFE8D6}    0.562/0.553         & \cellcolor[HTML]{FFE8D6}    0.054/0.006               & \cellcolor[HTML]{FFE8D6}    0.398/0.263      & \cellcolor[HTML]{FFE8D6}    0.518/0.419  & \cellcolor[HTML]{FFE8D6}    0.320/0.240 \\

                    & \cellcolor[HTML]{ECEEFF}    AdaLoRA           & \cellcolor[HTML]{ECEEFF}    Predictor           & \cellcolor[HTML]{ECEEFF}    0.375/0.135           & \cellcolor[HTML]{ECEEFF}    0.830/0.265          & \cellcolor[HTML]{ECEEFF}    0.945/0.930         & \cellcolor[HTML]{ECEEFF}    0.910/0.098               & \cellcolor[HTML]{ECEEFF}    0.980/0.750    & \cellcolor[HTML]{ECEEFF}      1.320/1.030  & \cellcolor[HTML]{ECEEFF}    0.893/0.535\\ 
                    & \cellcolor[HTML]{ECEEFF}                    & \cellcolor[HTML]{ECEEFF}    Generator           & \cellcolor[HTML]{ECEEFF}    0.080/0.020           & \cellcolor[HTML]{ECEEFF}    0.325/0.183  & \cellcolor[HTML]{ECEEFF}    0.600/0.615         & \cellcolor[HTML]{ECEEFF}    0.875/0.097              & \cellcolor[HTML]{ECEEFF}    0.450/0.330     & \cellcolor[HTML]{ECEEFF}     0.630/0.485   & \cellcolor[HTML]{ECEEFF}    0.493/0.288 \\ 
                    & \cellcolor[HTML]{ECEEFF}                   & \cellcolor[HTML]{ECEEFF}    PredGen            & \cellcolor[HTML]{ECEEFF}    0.078/0.019          & \cellcolor[HTML]{ECEEFF}    0.303/0.177          & \cellcolor[HTML]{ECEEFF}    0.570/0.560          & \cellcolor[HTML]{ECEEFF}    0.057/0.007              & \cellcolor[HTML]{ECEEFF}    0.400/0.265     & \cellcolor[HTML]{ECEEFF}    0.509/0.410  & \cellcolor[HTML]{ECEEFF}     0.323/0.243  \\ 
                    & \cellcolor[HTML]{FFE8D6}                   & \cellcolor[HTML]{FFE8D6}    Generation*        & \cellcolor[HTML]{FFE8D6}    0.077/0.019          & \cellcolor[HTML]{FFE8D6}    0.305/0.178          & \cellcolor[HTML]{FFE8D6}    0.573/0.562          & \cellcolor[HTML]{FFE8D6}    0.058/0.007              & \cellcolor[HTML]{FFE8D6}    0.403/0.268     & \cellcolor[HTML]{FFE8D6}    0.505/0.412  & \cellcolor[HTML]{FFE8D6}     0.322/0.242  \\

                    & \cellcolor[HTML]{ECEEFF}     RoCoFT            & \cellcolor[HTML]{ECEEFF}    Predictor           & \cellcolor[HTML]{ECEEFF}    0.390/0.145          & \cellcolor[HTML]{ECEEFF}    0.810/0.255          & \cellcolor[HTML]{ECEEFF}    0.935/0.920         & \cellcolor[HTML]{ECEEFF}    0.910/0.098              & \cellcolor[HTML]{ECEEFF}    0.960/0.730     & \cellcolor[HTML]{ECEEFF}    1.300/1.015   & \cellcolor[HTML]{ECEEFF}    0.884/0.527 \\ 
                    & \cellcolor[HTML]{ECEEFF}                & \cellcolor[HTML]{ECEEFF}    Generator           & \cellcolor[HTML]{ECEEFF}    0.082/0.020   & \cellcolor[HTML]{ECEEFF}    0.315/0.177         & \cellcolor[HTML]{ECEEFF}    0.585/0.605          & \cellcolor[HTML]{ECEEFF}    0.865/0.092              & \cellcolor[HTML]{ECEEFF}    0.435/0.320      & \cellcolor[HTML]{ECEEFF}     0.610/0.475   & \cellcolor[HTML]{ECEEFF}    0.482/0.282\\ 
                    & \cellcolor[HTML]{ECEEFF}                   & \cellcolor[HTML]{ECEEFF}    PredGen            & \cellcolor[HTML]{ECEEFF}    0.079/0.020           & \cellcolor[HTML]{ECEEFF}    0.288/0.169         & \cellcolor[HTML]{ECEEFF}    0.565/0.558    & \cellcolor[HTML]{ECEEFF}    0.058/0.007             & \cellcolor[HTML]{ECEEFF}     0.385/0.255  & \cellcolor[HTML]{ECEEFF}    0.530/0.425  & \cellcolor[HTML]{ECEEFF}    0.317/0.238 \\ 
                    & \cellcolor[HTML]{FFE8D6}                   & \cellcolor[HTML]{FFE8D6}    Generation*        & \cellcolor[HTML]{FFE8D6}    0.078/0.020           & \cellcolor[HTML]{FFE8D6}    0.290/0.170          & \cellcolor[HTML]{FFE8D6}    0.567/0.559         & \cellcolor[HTML]{FFE8D6}    0.059/0.008             & \cellcolor[HTML]{FFE8D6}     0.388/0.258  & \cellcolor[HTML]{FFE8D6}    0.528/0.426  & \cellcolor[HTML]{FFE8D6}    0.318/0.239 \\
                    & \cellcolor[HTML]{ECEEFF}    DoRA              & \cellcolor[HTML]{ECEEFF}    Predictor           & \cellcolor[HTML]{ECEEFF}    0.385/0.138           & \cellcolor[HTML]{ECEEFF}    0.825/0.261   & \cellcolor[HTML]{ECEEFF}    0.950/0.935         & \cellcolor[HTML]{ECEEFF}    0.905/0.096     & \cellcolor[HTML]{ECEEFF}    0.975/0.745  & \cellcolor[HTML]{ECEEFF}    1.315/1.025   & \cellcolor[HTML]{ECEEFF}     0.893/0.533     \\ 
                    & \cellcolor[HTML]{ECEEFF}                & \cellcolor[HTML]{ECEEFF}    Generator           & \cellcolor[HTML]{ECEEFF}    0.078/0.019          & \cellcolor[HTML]{ECEEFF}    0.322/0.179        & \cellcolor[HTML]{ECEEFF}    0.592/0.608         & \cellcolor[HTML]{ECEEFF}    0.880/0.096               & \cellcolor[HTML]{ECEEFF}    0.445/0.328  & \cellcolor[HTML]{ECEEFF}    0.625/0.482  & \cellcolor[HTML]{ECEEFF}   0.490/0.285\\ 
                    & \cellcolor[HTML]{ECEEFF}                  & \cellcolor[HTML]{ECEEFF}    PredGen            & \cellcolor[HTML]{ECEEFF}    0.073/0.018          & \cellcolor[HTML]{ECEEFF}    0.300/0.175         & \cellcolor[HTML]{ECEEFF}0.562/0.558         & \cellcolor[HTML]{ECEEFF}    0.066/0.007    & \cellcolor[HTML]{ECEEFF}    0.390/0.262   & \cellcolor[HTML]{ECEEFF}    0.525/0.425     & \cellcolor[HTML]{ECEEFF}     0.319/0.241  \\ 
                    & \cellcolor[HTML]{FFE8D6}                  & \cellcolor[HTML]{FFE8D6}    Generation*        & \cellcolor[HTML]{FFE8D6}    0.072/0.018          & \cellcolor[HTML]{FFE8D6}    0.302/0.176         & \cellcolor[HTML]{FFE8D6}    0.564/0.560         & \cellcolor[HTML]{FFE8D6}    0.065/0.007    & \cellcolor[HTML]{FFE8D6}    0.393/0.265   & \cellcolor[HTML]{FFE8D6}    0.523/0.426     & \cellcolor[HTML]{FFE8D6}     0.320/0.242  \\

\hline
\end{tabular}%
}
\caption{Regression performance of different PEFT methods across benchmarks, reported as MAE/MSE. \textbf{Generation*} denotes single-token generation.}
\label{tab:predgen_regression}
\end{table*}

\section{PredGen vs. One-Token Generation:} \label{sec:predgenvsone}
The original PredGen framework~\citep{kowsher2025predicting} showed that generating multiple output tokens retains higher mutual information with the input, leading to better performance on regression and classification tasks compared to pooling-based methods. However, this approach incurs high computational cost due to sequence-level decoding. To improve efficiency, we propose a simplified variant that performs \textit{single-token generation} or \textit{masked prediction}, predicting one specific token (e.g., via a masked or prompt-inserted position). We extract its hidden state and pass it through a lightweight MLP for final prediction. This method achieves competitive results across six regression benchmarks (Table~\ref{tab:predgen_regression}).

\begin{figure*}[htbp]
\begin{center}
    \includegraphics[width=1.0\linewidth]{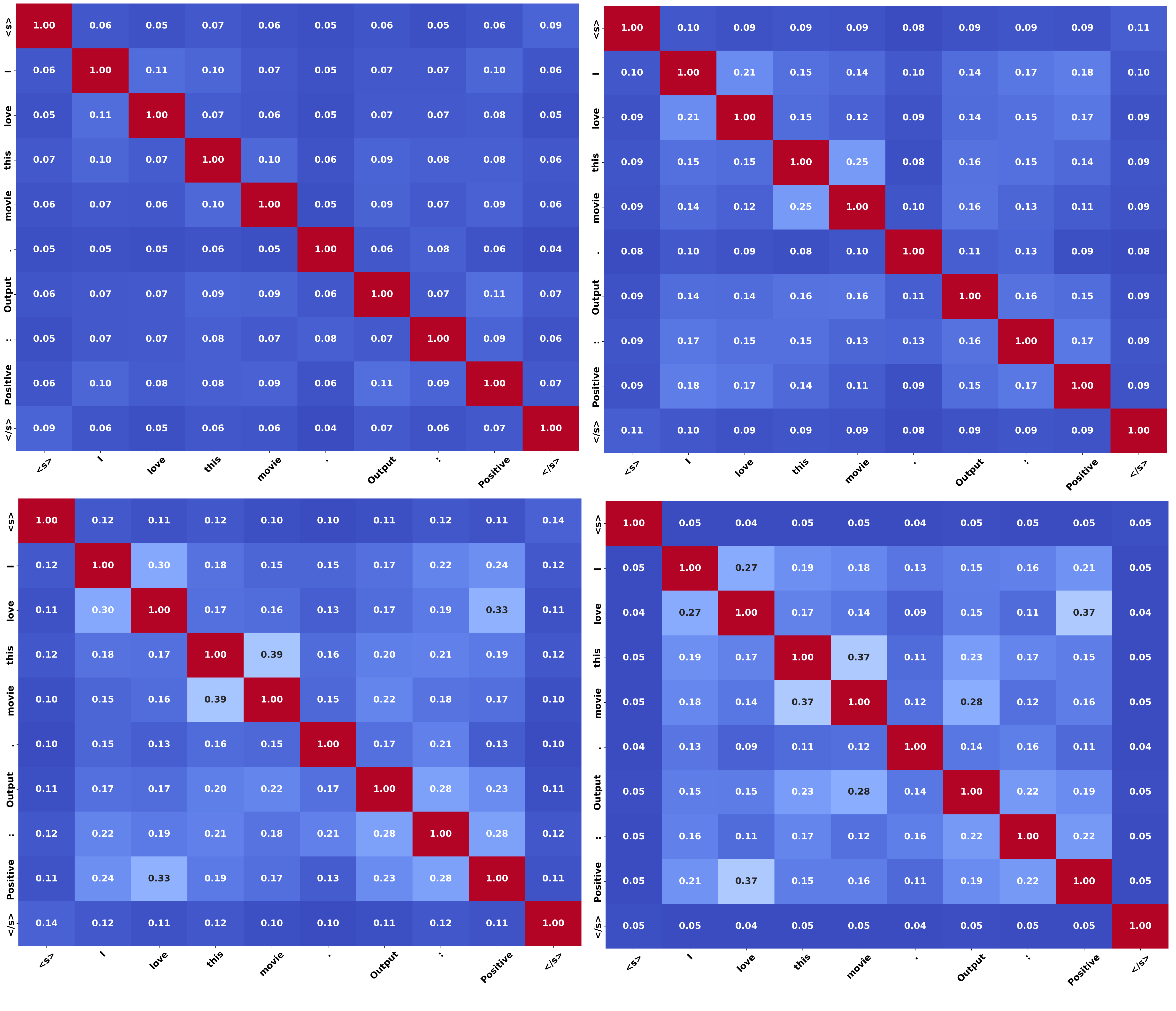}
   \end{center}
\caption{Token-level mutual information on the SST-2 dataset, computed using representations from layers 1, 8, 16, and 30 of MobileLLM. The figure highlights how information evolves across layers during fine-tuning.}
\label{fig:sst_example}
\end{figure*}

\begin{figure*}[htbp]
\begin{center}
    \includegraphics[width=1.0\linewidth]{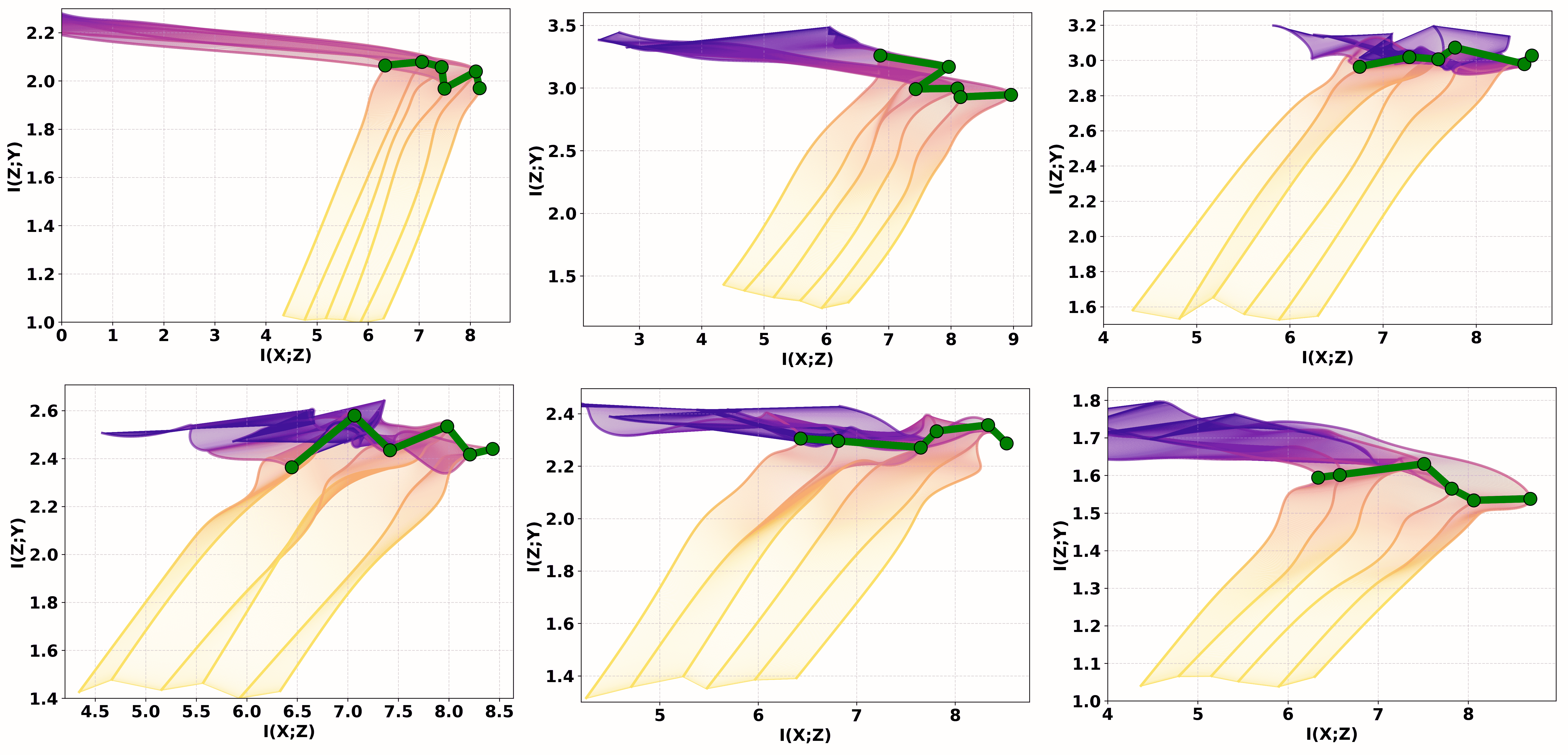}
   \end{center}
\caption{Mutual information on the ETTh1 dataset for different prediction horizons: 24, 96, 128, 380, 512, and 1038. The figure illustrates how information flow varies as the prediction target becomes more distant.}
\label{fig:y2}
\end{figure*}

\end{document}